%% file: kdd2026.tex
\documentclass[sigconf]{acmart}
\usepackage{tikz}
\usepackage{pgfplots}
\pgfplotsset{compat=1.18}
\usepackage{float}
\usepackage{amsthm}
\usepackage{amsfonts}
\usepackage{centernot}
\usepackage{graphicx}
\usepackage{siunitx}
\usepackage{multirow}
\usepackage{mathtools}
\usepackage{soul}
\usepackage{cleveref}
\usepackage{subcaption}
\usepackage{pifont}
\usepackage{algorithm}
\usepackage{algpseudocode}
\usepackage{booktabs}  % For toprule, midrule, bottomrule
\usepackage{float}     % For [H] option (if using Solution 2)
\usepackage{caption} 
\usepackage{enumitem}
\usepackage{thmtools} 
\usepackage{thm-restate}

\declaretheorem[
    name=Theorem,
    numberwithin=section,
    refname={Theorem,Theorems},
    Refname={Theorem,Theorems}
]{theorem}

\declaretheorem[
    name=Definition,
    style=definition,
    sibling=theorem
]{definition}

\usepackage{xcolor}

\begin{document}

\title{Invariant-Stratified Propagation for Expressive Graph Neural Networks}

\author{Asela Hevapathige}
%\authornotemark[1]
\email{asela.hevapathige@unimelb.edu.au}
\affiliation{%
  \institution{Department of Mechanical Engineering \\ University of Melbourne}
  \city{Melbourne}
  \country{Australia}
}

\author{Ahad N. Zehmakan}
\email{ahadn.zehmakan@anu.edu.au}
\affiliation{%
  \institution{School of Computing \\ Australian National University}
  \city{Canberra}
  \country{Australia}
}

\author{Asiri Wijesinghe}
\email{asiriwijesinghe.wijesinghe@data61.csiro.au}
\affiliation{%
  \institution{Data61 \\ CSIRO}
  \city{Canberra}
  \country{Australia}
}

\author{Saman Halgamuge}
%\authornotemark[1]
\email{saman@unimelb.edu.au}
\affiliation{%
  \institution{Department of Mechanical Engineering \\ University of Melbourne}
  \city{Melbourne}
  \country{Australia}
}

\begin{abstract}
Graph Neural Networks (GNNs) face fundamental limitations in expressivity and capturing structural heterogeneity. Standard message-passing architectures are constrained by the 1-dimensional Weisfeiler-Leman (1-WL) test, unable to distinguish graphs beyond degree sequences, and aggregate information uniformly from neighbors, failing to capture how nodes occupy different structural positions within higher-order patterns. While methods exist to achieve higher expressivity, they incur prohibitive computational costs and lack unified frameworks for flexibly encoding diverse structural properties. To address these limitations, we introduce Invariant-Stratified Propagation (ISP), a framework comprising both a novel WL variant (ISP-WL) and its efficient neural network implementation (ISP-GNN). ISP stratifies nodes according to graph invariants, processing them in hierarchical strata that reveal structural distinctions invisible to 1-WL. Through hierarchical structural heterogeneity encoding, ISP quantifies differences in nodes' structural positions within higher-order patterns, distinguishing interactions where participants occupy vastly different roles from those with uniform participation. We provide formal theoretical analysis establishing enhanced expressivity beyond 1-WL, convergence guarantees, and inherent resistance to oversmoothing. Extensive experiments across graph classification, node classification, and influence estimation demonstrate consistent improvements over both standard architectures and state-of-the-art expressive baselines. %Our source code is available at: \href{https://anonymous.4open.science/r/ISP-GNN-C547/}{\textcolor{blue}{\nolinkurl{https://anonymous.4open.science/r/ISP-GNN-C547/}}}.
\end{abstract}

\begin{CCSXML}
<ccs2012>
<concept>
<concept_id>10010147.10010257</concept_id>
<concept_desc>Computing methodologies~Machine learning</concept_desc>
<concept_significance>500</concept_significance>
</concept>
</ccs2012>
\end{CCSXML}

\ccsdesc[500]{Computing methodologies~Machine learning}

\keywords{}

\maketitle

\input{sections/introduction}
\input{sections/related_work}

\input{sections/concepts}
\input{sections/methodology}

\input{sections/theoretical_analysis}
\input{sections/experiments}

\input{sections/conclusion}

\bibliography{references}
\bibliographystyle{ACM-Reference-Format}

\clearpage
\balance

\input{sections/appendix}

\end{document}

%% file: sections/introduction.tex
\section{Introduction}
Graph Neural Networks (GNNs) have emerged as a powerful tool for learning on graph-structured data, with applications spanning molecular property prediction \cite{wieder2020compact,wang2023graph}, social network analysis \cite{ben2024enhancing,hevapathige2025deepsn}, and knowledge reasoning \cite{zhang2022knowledge,liang2024survey}. Many GNNs are built on message-passing mechanisms, where nodes iteratively aggregate information from neighbors to refine their representations, allowing them to model relational structure \cite{gilmer2017neural}.
However, several fundamental limitations constrain their ability to represent complex graph structures:

\begin{itemize}
\item \textbf{Computational-Expressivity Trade-off.} Standard GNNs are fundamentally restricted by the 1-dimensional Weisfeiler-Leman (1-WL) test \cite{weisfeiler1968reduction,xu2018powerful}, distinguishing graphs only by neighborhood degree sequences. Consequently, nodes with identical local structural patterns receive indistinguishable representations regardless of their global roles \cite{wijesinghe2022new,you2021identity}. This creates a critical trade-off: methods achieving higher expressivity, such as $k$-WL variants \cite{morris2019weisfeiler,zhao2022practical,azizianexpressive} and subgraph enumeration approaches \cite{bevilacqua2021equivariant,cotta2021reconstruction,zhao2022stars} incur prohibitive computational costs that scale poorly to large graphs. While structural information injection techniques \cite{wijesinghe2022new,bouritsas2022improving,xu2024union,hevapathige2025curvature} offer computational efficiency, they typically incorporate fixed structural encodings predetermined by architectural design. This rigidity prevents models from adaptively selecting and combining diverse structural properties such as geometric features, topological signatures, and higher-order patterns based on task-specific requirements and data characteristics.

\item \textbf{Inability to Capture Structural Heterogeneity in Higher-Order Patterns.} Compounding the expressivity challenge above, standard GNNs aggregate information uniformly from all neighbors, treating higher-order structures identically, regardless of the structural positions of participating nodes \cite{qian2022ordered,wanigatunga2025uncovering}. They cannot distinguish higher-order interactions where nodes occupy vastly different structural positions. This structural heterogeneity within higher-order patterns is critical across diverse applications: from network analysis to molecular property prediction, where the relative structural positions of entities within local patterns strongly influence properties and behavior.

\item \textbf{Static, Task-Agnostic Structural Encoding.} Current GNN methods encode structural information primarily through isolated mechanisms. For instance, positional encodings use spectral or random walk features \cite{dwivedi2021generalization,srinivasanequivalence}, subgraph methods enumerate predetermined patterns \cite{zhao2022stars,bevilacqua2021equivariant}, and rewiring techniques modify topology using geometric tools such as curvature \cite{toppingunderstanding,karhadkarfosr}. These methods lack a unified framework for flexibly combining multiple graph invariants, connecting global structural orderings with local higher-order interactions, and adapting structural biases to task requirements through learning. This fragmentation prevents models from discovering which structural properties and their interplay are most informative for specific prediction tasks.
\end{itemize}

\vspace{-15mm}

In this paper, we introduce Invariant-Stratified Propagation (ISP), a unified framework comprising both a novel WL variant (ISP-WL) and its efficient neural implementation (ISP-GNN) that systematically addresses the aforementioned limitations. The key insight of our work is orderings induced by graph invariants, which we term "stratification" based on ordinal rankings of invariant values.  It naturally reveals structural heterogeneity patterns within higher-
order structures that are invisible to uniform aggregation schemes. ISP-WL processes nodes according to these invariant-induced orderings, enabling discrimination of structurally distinct nodes that appear identical under standard message-passing. ISP-GNN implements these principles through a differentiable mechanism that learns to combine multiple structural properties. Our main contributions are:

\begin{itemize}[topsep=0pt, itemsep=0pt, parsep=0pt]
    \item \textbf{Invariant-Stratified Propagation:} We propose a framework that enhances standard message passing with invariant-based inductive biases to overcome the degree sequence barrier. By stratifying nodes based on graph invariants, we process them in ways that reveal structural differences otherwise missed by uniform aggregation, achieving greater expressivity while remaining computationally efficient.
    
    \item \textbf{Structural Heterogeneity Encoding:} We introduce a method for encoding hierarchical structural differences that captures the varying roles of nodes in higher-order patterns. This approach distinguishes interactions based on the structural positions of nodes, contrasting standard uniform aggregation that overlooks these differences.

    \item \textbf{Unified Framework for Structural Encoding:} Our framework integrates diverse structural information, offering support for both predefined and learnable invariants. This unifies fragmented existing methods, allowing for flexible composition of multiple graph invariants within a single framework.
\end{itemize}

Beyond the core methodological contributions, we provide formal theoretical analysis establishing the ISP Framework's enhanced expressivity beyond 1-WL, convergence guarantees, computational efficiency matching standard message-passing on sparse graphs, and inherent resistance to oversmoothing through depth-invariant structural embeddings. Extensive experiments across diverse graph learning scenarios, including graph classification, node classification, and dynamic influence estimation tasks, demonstrate consistent and substantial improvements of our approach over both standard message-passing architectures and expressive baselines.

\vspace{-3mm}

%% file: sections/related_work.tex
\section{Related Work}

\subsection{GNNs Beyond 1-WL Expressive Power}

Standard message-passing GNNs are fundamentally limited by the 1-dimensional Weisfeiler-Leman test, restricting their ability to distinguish graphs beyond degree sequences \citep{xu2018powerful,weisfeiler1968reduction}. Higher-order WL methods \citep{azizian2021expressive,morris2019weisfeiler,zhao2022practical} extend to tuples of nodes for greater expressivity but incur prohibitive computational costs that scale poorly to large graphs. Subgraph-based methods \citep{bouritsas2022improving,bevilacqua2021equivariant,liuempowering} enhance expressivity through subgraph enumeration and aggregation but suffer from substantial computational overhead and lack adaptive mechanisms to weight structural patterns based on task requirements. Structural encoding approaches \citep{you2021identity,wijesinghe2022new,hevapathige2025curvature,xu2024union} inject heterogeneous node identifiers or structural features to distinguish otherwise identical nodes. However, these methods typically use fixed, predefined structural properties that may not align with task-specific requirements and lack unified frameworks for flexibly combining multiple graph invariants. 

ISP-WL alleviates the computational-expressivity trade-off by achieving expressivity beyond standard message-passing while maintaining computational efficiency on sparse graphs through invariant-based stratification. Our framework provides both efficient triangle-based aggregation with a unified mechanism for adaptively combining multiple structural invariants through end-to-end training.

\vspace{-4mm}

\subsection{Ordering and Hierarchy in Graph Learning}

Recent work has explored various approaches to organize and structure graph neural network computations beyond uniform aggregation. Hierarchical methods \citep{ying2018hierarchical,bianchi2020spectral,gao2019graph,wu2024hierarchy,zhang2025graph} employ coarsening and pooling operations to learn multi-scale graph representations, but operate primarily on graph-level tasks without providing node-level structural stratification during message-passing. Position-aware GNNs \cite{you2019position,nishad2020graphreach,galron2025understanding} incorporate spectral and random walk-based positional encodings to distinguish node positions, while direction-aware methods like DirGNN \cite{rossi2024edge} and ordered aggregation approaches like OSAN \citep{qian2022ordered} and GOAT \cite{chatzianastasis2023graph} explicitly model edge directionality and neighbor ordering. However, these methods treat structural properties as static input features or apply local ordering heuristics without principled frameworks for determining orderings based on global graph structure, and they do not explicitly model how nodes occupy different structural positions within higher-order patterns.

ISP-GNN utilizes invariant-induced stratification to create globally consistent orderings for processing nodes based on their structural importance, measured by diverse graph invariants. This hierarchical approach highlights structural heterogeneity, as nodes at different invariant levels hold distinct structural roles. By explicitly quantifying differences within triangles through invariant-based gap measures, the model can differentiate between nodes with varying structural importance. This integration of structural hierarchy into the propagation mechanism allows the model to adaptively learn which hierarchical patterns are relevant for specific tasks.

%% file: sections/methodology.tex
\vspace{-3mm}

\section{ISP Framework}

We consider an undirected graph \( G = (V, E) \) with node set \( V \) and edge set \( E \). For a node \( v \in V \), its neighborhood is \( N(v) = \{u \in V : (v,u) \in E\} \) and its degree is \( d_v = |N(v)| \). The triangles incident to \( v \) are given by \( T(v) = \{\{u,w\} : u,w \in N(v), (u,w) \in E\} \). We denote multisets with \(\{\!\!\{\cdot\}\!\!\}\) and use \(\text{Hash}(\cdot)\) for injective hash functions. A graph invariant is a function \( \mu: V \to \mathbb{R} \) that holds under graph isomorphisms, meaning \( \mu(\pi(v)) = \mu(v) \) for any isomorphism \( \pi: V \to V \).
For detailed background concepts related to our study, please refer to the Appendix section \ref{sec:bac}.

\subsection{ISP-WL Algorithm}

For invariants, we apply \textbf{global ranking}: $\phi(v) = \text{rank}(\{\mu(u) : u \in \bigcup_i V_i\}, \mu(v))$, mapping values to consecutive integers $1, 2, \ldots, L$ where $L = |\{\mu(v) : v \in \bigcup_i V_i\}|$ is the number of distinct invariant values across all graphs in the comparison set. This ensures consistent representations and defines $L$ distinct layers.

\subsubsection{Hierarchical Structural Heterogeneity Encoding}

The hierarchical processing order determined by $\phi$ creates a natural stratification that reveals structural heterogeneity within higher-order structures.

\begin{definition}[Hierarchical Structural Heterogeneity Encoding]
For a higher-order interaction $(v, u, w)$ where $v$ is the center node and $u, w \in N(v)$ with $(u,w) \in E$:
\begin{equation}
\delta(v, u, w) = (\delta_1, \delta_2, \delta_3)
\end{equation}
where:
\begin{align}
\delta_1 &= \max(\phi(v) - \phi(u), \phi(v) - \phi(w)) \quad \text{(largest gap)} \\
\delta_2 &= \min(\phi(v) - \phi(u), \phi(v) - \phi(w)) \quad \text{(smallest gap)} \\
\delta_3 &= |\phi(u) - \phi(w)| \quad \text{(inter-neighbor gap)}
\end{align}
\end{definition}

These three values provide a complete, order-independent characterization of the gap structure in a triangle, serving as a minimal sufficient statistic for structural heterogeneity. The design rationale of this encoding and its natural extension to higher-order structures beyond triangles is provided in Appendix Section \ref{sec:hie_gap}.

The following proposition formalizes that the encoding treats neighbor pairs symmetrically, which is crucial for correctness and ensures consistent representations of graphs regardless of the order in which neighbors are enumerated.

\begin{restatable}[Permutation Invariance]{proposition}{permutationinv} \label{prop:}\label{prop:permutationinv}
$\delta(v, u, w) = \delta(v, w, u)$ for all $u, w \in N(v)$.
\end{restatable}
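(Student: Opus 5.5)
The plan is to verify directly that each of the three components of $\delta$ is a symmetric function of the unordered pair $\{u,w\}$. This is enough, because $\delta(v,u,w)$ is the ordered triple $(\delta_1,\delta_2,\delta_3)$, and two triples agree exactly when they agree componentwise. No earlier result is needed beyond the definition of $\phi$ as a well-defined function $V\to\{1,\dots,L\}$ obtained from global ranking. In particular, $\phi(u)$ and $\phi(w)$ are fixed integers that do not depend on the order in which the arguments of $\delta$ are listed.

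First, introduce the shorthand $a=\phi(v)-\phi(u)$ and $b=\phi(v)-\phi(w)$. Swapping $u$ and $w$ sends $(a,b)$ to $(b,a)$ and leaves $\phi(v)$ unchanged, since $v$ stays the center node. Then:
\begin{itemize}
\item The first component becomes $\delta_1=\max(b,a)$, which equals $\max(a,b)$ because $\max$ on $\mathbb{R}$ is commutative.
\item The second component becomes $\delta_2=\min(b,a)=\min(a,b)$, by the same argument for $\min$.
\item The third component becomes $\delta_3=|\phi(w)-\phi(u)|$, which equals $|\phi(u)-\phi(w)|$ because $|x|=|-x|$.
\end{itemize}
Putting the three equalities together gives $\delta(v,u,w)=\delta(v,w,u)$.

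I should also note that the hypothesis of the definition is symmetric as well. The conditions $u,w\in N(v)$ and $(u,w)\in E$ are invariant under the swap because $G$ is undirected, so $(u,w)\in E$ if and only if $(w,u)\in E$. Hence $\delta(v,w,u)$ is defined whenever $\delta(v,u,w)$ is, and the statement is meaningful for every triangle $\{u,w\}\in T(v)$.

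There is essentially no obstacle here. The only point needing care is to make explicit that $v$ plays a distinguished role and is not swapped, so that $\phi(v)$ is a common term in $a$ and $b$. I would close with a remark that the proposition shows $\delta$ descends to a well-defined function on $T(v)$, whose elements are unordered pairs. This is what the subsequent aggregation over triangles implicitly uses.
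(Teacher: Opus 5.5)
Your proposal is correct and follows essentially the same route as the paper's proof: a componentwise check using the commutativity of $\max$ and $\min$ and the identity $|x|=|-x|$. Your extra remarks, that the center $v$ is not swapped and that the triangle condition is symmetric because $G$ is undirected, are harmless clarifications that the paper leaves implicit.
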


\subsubsection{Hierarchical Color Refinement}

The algorithm iteratively refines two color streams until convergence. The WL stream follows standard 1-WL refinement, while the ISP stream processes nodes in $L$ layers corresponding to the $L$ distinct invariant values. At iteration $\ell \in \{1, 2, \ldots, L\}$, we process all nodes $v$ with $\phi(v) = \ell$, constructing their ISP colors. When processing node $v$ with $\phi(v) = \ell$, we construct:
\begin{equation}
M^{\text{struct}}_v = \{(c_{\text{ISP}}(u), c_{\text{ISP}}(w), \delta(v, u, w)) : (u, w) \in T(v)\}
\end{equation}
where $c_{\text{ISP}}(u), c_{\text{ISP}}(w) \in \mathbb{N} \cup \{\bot\}$ are ISP colors ($\bot$ represents uncolored nodes). The ISP color is assigned as:
\begin{equation}
c_{\text{ISP}}(v) = \begin{cases}
\text{Hash}(\phi(v), M^{\text{struct}}_v) & \text{if } M^{\text{struct}}_v \neq \{\} \\
\phi(v) & \text{otherwise}
\end{cases}
\end{equation}

A key property of ISP-WL is that each node's ISP color is computed exactly once and persists thereafter, eliminating redundant computation and ensuring the ISP stream completes in exactly $L$ iterations. This is formalized below.

\begin{restatable}[Single Assignment Property]{proposition}{singleassign}
For any node $v \in V$, $c_{\text{ISP}}(v)$ is assigned exactly once at iteration $\ell^* = \phi(v)$ and remains unchanged thereafter.
\end{restatable}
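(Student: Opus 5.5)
The argument follows the control flow of the ISP stream directly. I will treat the stream as a loop over $\ell = 1, 2, \ldots, L$. Every node starts with $c_{\text{ISP}}(v) = \bot$. At iteration $\ell$ the loop writes $c_{\text{ISP}}$ exactly for the nodes in the stratum $S_\ell = \{v : \phi(v) = \ell\}$, and no other node is written to.

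The first step is to show that the strata partition $V$. Global ranking sends every $\mu(v)$ to a unique integer in $\{1,\ldots,L\}$, so $\phi$ is a total function from $V$ to $\{1,\ldots,L\}$. Hence the sets $S_1,\ldots,S_L$ are pairwise disjoint and their union is $V$. Every node therefore belongs to exactly one stratum, namely $S_{\phi(v)}$.

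The second step gives existence. When the loop reaches $\ell^* = \phi(v)$, node $v$ is processed. The assignment rule is total: it returns $\text{Hash}(\phi(v), M^{\text{struct}}_v)$ if $M^{\text{struct}}_v \neq \{\}$, and $\phi(v)$ otherwise. Either way the value lies in $\mathbb{N}$, so $v$ leaves iteration $\ell^*$ with a non-$\bot$ color.

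The third step gives uniqueness and persistence. The only write to $c_{\text{ISP}}(v)$ happens when $v$ is processed. By disjointness of the strata, $v \notin S_\ell$ for every $\ell \neq \ell^*$, so no earlier iteration writes $v$ (it stays $\bot$) and no later iteration writes it either.

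The one point that needs care is that computing the colors of later-stratum nodes only reads $c_{\text{ISP}}(v)$. The colors of $u$ and $w$ appear inside $M^{\text{struct}}$ purely as arguments, so processing a neighbor never mutates $v$'s color. I will state this explicitly as a property of the update rule: the right-hand side reads the current colors, including $\bot$ for uncolored nodes, and the left-hand side writes only the color of the node being processed.

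Finally, convergence considerations are separate from this result. The WL stream may keep refining for more rounds, but it never writes $c_{\text{ISP}}$, so it cannot change any ISP color. Consequently, after iteration $L$ every node has its final ISP color, which is the claimed termination of the ISP stream in exactly $L$ iterations.
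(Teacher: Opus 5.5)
Your proof is correct and follows the same route as the paper's: $\phi(v)$ takes exactly one value in $\{1,\ldots,L\}$, so $v$ lies in exactly one stratum $V_{\phi(v)}$, receives its color at iteration $\phi(v)$, and is never written at any other iteration. Your extra observations make explicit what the paper leaves implicit: the assignment always yields a non-$\bot$ value, processing later nodes only reads $c_{\text{ISP}}(v)$, and the WL stream never writes ISP colors.
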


\begin{algorithm}[t]
\caption{ISP-WL Algorithm}
\label{alg:isp-wl}
\begin{algorithmic}[1]
\State \textbf{Input:} Graph $G = (V, E)$, globally-ranked invariant $\phi : V \to \{1, \ldots, L\}$
\State \textbf{Output:} Coloring $c : V \to \mathbb{N} \times \mathbb{N}$
\State Initialize: $c_{WL}(v) \gets 0$, $c_{ISP}(v) \gets \bot$ for all $v \in V$
\State $\ell \gets 1$
\Repeat
    \State \textcolor{blue}{\textit{// WL Refinement}}
    \For{$v \in V$}
        \State $m_v \gets \{\!(c_{WL}(u), c_{ISP}(u)) : u \in N(v)\!\}$ 
        \State $c'_{WL}(v) \gets \mathit{Hash}((c_{WL}(v), c_{ISP}(v)), m_v)$
    \EndFor
    \State $c_{WL}^{\text{prev}} \gets c_{WL}$
    \State $c_{WL} \gets c'_{WL}$
    \State \textcolor{blue}{\textit{// ISP Assignment}}
    \If{$\ell \leq L$}
        \State $V_\ell \gets \{v \in V : \phi(v) = \ell\}$
        \For{$v \in V_\ell$}
            \State $M^{\text{struct}}_v \gets \{\!(c_{ISP}(u), c_{ISP}(w), \delta(v, u, w)) : (u, w) \in T(v)\!\}$
            \If{$M^{\text{struct}}_v \neq \{\!\}$}
                \State $c_{ISP}(v) \gets \mathit{Hash}(\phi(v), M^{\text{struct}}_v)$
            \Else
                \State $c_{ISP}(v) \gets \phi(v)$
            \EndIf
        \EndFor
    \EndIf
    \State $\ell \gets \ell + 1$
\Until{($c_{WL} = c_{WL}^{\text{prev}}$) AND ($\ell > L$)} \textcolor{blue}{\textit{// Both streams converged}}
\State \textbf{return} $c(v) = \mathit{Hash}(c_{WL}(v), c_{ISP}(v))$ for all $v \in V$
\end{algorithmic}
\end{algorithm}

The algorithm terminates when both color streams stabilize, which occurs in at most $\max(K_{\text{WL}}, L)$ iterations where $K_{\text{WL}}$ is the standard 1-WL convergence time. The pseudocode of ISP-WL is provided in Algorithm \ref{alg:isp-wl}.

\begin{figure*}[htbp]
    \centering
    \includegraphics[width=0.75\textwidth, height=0.3\textheight, keepaspectratio]{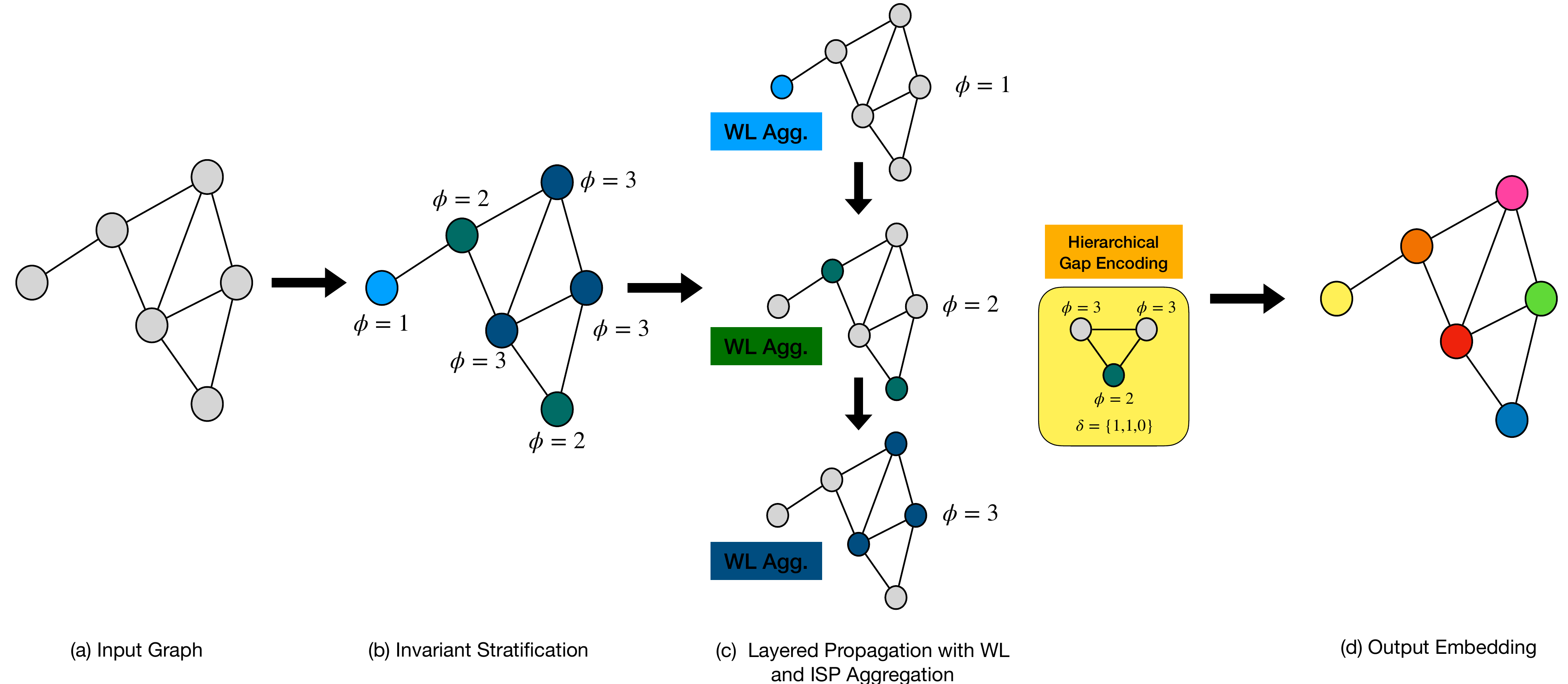}
    \caption{ISP-GNN architecture overview. (a) Input graph. (b) ISP coloring based on invariant values. (c) Layered propagation with WL and ISP aggregation at each layer. Triangle detail shows gap encoding for nodes with higher-layer neighbors. (d) Output embeddings for downstream tasks.}
    \label{fig:isp_gnn_architecture}
\end{figure*}

\subsection{ISP-GNN Architecture}

We translate the discrete ISP-WL algorithm into a continuous, differentiable neural network using learned attention mechanisms and soft gating. ISP-GNN maintains a dual-stream design with learned hierarchical attention that weights higher-order interactions based on structural heterogeneity. At layer $k$, nodes maintain $h^{(k),\text{WL}}_v$ (WL features) and $h^{(k),\text{ISP}}_v$ (ISP features).

\vspace{-3mm}
\subsubsection{Neural Hierarchical Structural Heterogeneity Encoding}

For each higher-order interaction $(v, u, w)$, we compute hierarchical heterogeneity features as the continuous analog of $\delta$:
\begin{equation}
d_{vuw} = \begin{bmatrix}
\max(\phi(v) - \phi(u), \phi(v) - \phi(w)) \\
\min(\phi(v) - \phi(u), \phi(v) - \phi(w)) \\
|\phi(u) - \phi(w)|
\end{bmatrix} \in \mathbb{R}^3
\end{equation}

These use the same computation as $\delta(v,u,w)$ but operate on continuous-valued learned invariants $\phi$, enabling gradient-based optimization. We then learn structure-aware attention weights:
\begin{equation}
\alpha_{vuw} = \sigma(\text{MLP}_{\text{struct}}(d_{vuw}))
\end{equation}
where $\sigma$ bounds output to $[0, 1]$. Structure-aware message passing combines neighbor embeddings through permutation-invariant aggregation:
\begin{equation}
m^{\text{struct}}_{vuw} = \alpha_{vuw} \cdot \text{MLP}_{\text{tri}}(\text{AGG}(h^{(k-1),\text{ISP}}_u, h^{(k-1),\text{ISP}}_w))
\end{equation}

\subsubsection{Architecture Components}

\paragraph{Initialization.} WL features initialized with input attributes; ISP features start at zero:
\begin{equation}
h^{(0),\text{WL}}_v = x_v, \quad h^{(0),\text{ISP}}_v = 0 \quad \forall v \in V
\end{equation}

\paragraph{WL Stream.} At each layer, WL features aggregate from neighbors using combined representations:
\begin{equation}
h^{(k),\text{WL}}_v = \text{UPD}(h^{(k-1),\text{comb}}_v, \text{AGG}_{u \in N(v)} h^{(k-1),\text{comb}}_u)
\end{equation}
where $h^{\text{comb}}_v = [h^{\text{WL}}_v \| h^{\text{ISP}}_v]$ combines both streams.

\paragraph{Gating.} This ensures each node's ISP features are assigned exactly once at its invariant layer. For predefined invariants $\phi : V \to \{1, \ldots, L\}$:
\begin{equation}
\gamma^{(k)}_v = \mathbb{I}[\phi(v) = k] \cdot \mathbb{I}[\|h^{(k-1),\text{ISP}}_v\|_2 = 0]
\end{equation}
where the second indicator prevents overwriting of previously assigned features.

\paragraph{ISP Stream.} For gated nodes, ISP features aggregate structure-weighted messages:
\begin{equation}
h^{(k),\text{TRI}}_v = \begin{cases}
\text{AGG}_{(u,w) \in T(v)} m^{\text{struct}}_{vuw} & \text{if } |T(v)| > 0 \\
0 & \text{otherwise}
\end{cases}
\end{equation}

\begin{equation}
h^{(k),\text{ISP}}_v = \begin{cases}
h^{(k),\text{TRI}}_v & \text{if } \gamma^{(k)}_v = 1, |T(v)| > 0 \\
\text{Embed}(k) & \text{if } \gamma^{(k)}_v = 1, |T(v)| = 0 \\
h^{(k-1),\text{ISP}}_v & \text{otherwise}
\end{cases}
\end{equation}

\paragraph{Output.} Final representation combines both streams:
\begin{equation}
h^{(K)}_v = \text{MLP}_{\text{out}}([h^{(K),\text{WL}}_v \| h^{(K),\text{ISP}}_v])
\end{equation}

\subsubsection{Learnable Invariant Stratification}
While predefined invariants provide fixed hierarchies, we introduce a learnable mechanism for task-adaptive stratification using base invariants $\{\phi_1, \ldots, \phi_m\}$.
\begin{definition}[Learnable Invariant]
Let $\phi_1, \ldots, \phi_m : V \to \mathbb{R}$ be base structural invariants. The learnable invariant is:
\begin{equation}
\phi_{\text{learn}}(v) = (S - 1) \cdot \sigma(\text{MLP}_{\phi}([\phi_1(v), \ldots, \phi_m(v)])) + 1
\end{equation}
where $\text{MLP}_{\phi} : \mathbb{R}^m \to \mathbb{R}$ is learned, $\sigma : \mathbb{R} \to (0, 1)$ is a bounded activation (e.g., sigmoid), and $S \in \mathbb{N}$ controls maximum hierarchical levels.
\end{definition}
\paragraph{Handling Non-Differentiability.} Direct discretization $\phi_{\text{discrete}}(v) = \lfloor \phi_{\text{learn}}(v) \rfloor$ breaks gradient flow. Therefore, we use soft-to-hard training with temperature annealing. Since $\sigma(\cdot) \in (0,1)$, we have $\phi_{\text{learn}}(v) \in (1, S)$, which naturally covers all $S$ layers. During training, we compute soft membership weights for each layer $k \in \{1, \ldots, S\}$:
\begin{equation}
w_k(v) = \frac{\exp(-\beta \cdot |\phi_{\text{learn}}(v) - k|)}{\sum_{k'=1}^{S} \exp(-\beta \cdot |\phi_{\text{learn}}(v) - k'|)}
\end{equation}
where $\beta > 0$ is a temperature parameter that increases during training. ISP features are updated via differentiable weighted aggregation:
\begin{equation}
h_v^{(k), \text{ISP}} = \sum_{k'=1}^{k} w_{k'}(v) \cdot h_v^{(k'), \text{TRI}}
\end{equation}
At the inference phase, we use fully discrete stratification $\phi_{\text{discrete}}(v) = \lfloor \phi_{\text{learn}}(v) \rceil$ (rounding down) with hard gating from Equation~12, assigning each node to exactly one level in $\{1, \ldots, S\}$.
A critical property is that the learned invariant inherits the graph-invariant property from its base components, ensuring that learned stratification respects graph structure and treats structurally equivalent nodes identically.
\begin{restatable}[Structural Consistency]{proposition}{structcons}
\label{prop:learned_inv_invariance}
If each base invariant $\phi_i$ is a graph invariant, then $\phi_{\text{learn}}$ is also a graph invariant.
\end{restatable}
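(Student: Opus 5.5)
The plan is to prove the statement directly from the definition of a graph invariant used in the paper: $\mu:V\to\mathbb{R}$ is a graph invariant if $\mu(\pi(v))=\mu(v)$ for every isomorphism $\pi$. Since $\phi_{\text{learn}}$ is a fixed composition of the base invariants with deterministic maps, I will show the equality $\phi_{\text{learn}}(\pi(v))=\phi_{\text{learn}}(v)$ by pushing $\pi$ through each stage of the composition.

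First, fix an isomorphism $\pi$ and a node $v$. By hypothesis, $\phi_i(\pi(v))=\phi_i(v)$ for each $i\in\{1,\dots,m\}$, so the input vectors agree:
\begin{equation}
[\phi_1(\pi(v)),\ldots,\phi_m(\pi(v))]=[\phi_1(v),\ldots,\phi_m(v)]\in\mathbb{R}^m .
\end{equation}
Second, $\text{MLP}_{\phi}$ is a single function $\mathbb{R}^m\to\mathbb{R}$ with parameters shared across all nodes. Its output depends only on this input vector, and not on node identity, node index or neighbourhood. Hence it returns the same scalar at $v$ and at $\pi(v)$. Third, $\sigma$ and the affine map $t\mapsto (S-1)t+1$ (with $S$ a fixed hyperparameter) are deterministic functions, so equality is preserved and $\phi_{\text{learn}}(\pi(v))=\phi_{\text{learn}}(v)$ follows.

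I would then add a short remark that the derived quantities used downstream are also invariant, since each is again a function of $\phi_{\text{learn}}(v)$ alone. These are the soft weights $w_k(v)$, the discretized level $\phi_{\text{discrete}}(v)$, and, for the global ranking used in ISP-WL, $\text{rank}$ of $\phi_{\text{learn}}$. The ranking is computed over the whole node set, and isomorphisms preserve the multiset of invariant values, so the rank is preserved as well.

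The only real obstacle is making explicit the assumption that $\text{MLP}_{\phi}$ acts pointwise with shared weights and uses no node-specific inputs such as positional IDs. Otherwise the argument is immediate. I would state this assumption explicitly, noting it matches the definition as written, where $\text{MLP}_{\phi}$ receives only $[\phi_1(v),\ldots,\phi_m(v)]$.
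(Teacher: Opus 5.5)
Your proposal is correct and follows essentially the same route as the paper: both push $\pi$ through the composition, using $\phi_i(\pi(v))=\phi_i(v)$ to obtain identical inputs to the shared $\text{MLP}_{\phi}$, and then note that $\sigma$, the rescaling, and the discretization all preserve equality. Your explicit assumption that $\text{MLP}_{\phi}$ acts pointwise with shared weights, and your extra remarks on $w_k(v)$ and the global rank, are sound additions that the paper leaves implicit.
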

The correspondence between ISP-WL and ISP-GNN is provided in Appendix section \ref{sec:correspondence}. The high-level architecture of ISP-GNN is depicted in Figure \ref{fig:isp_gnn_architecture}.

%% file: sections/theoretical_analysis.tex
\section{Theoretical Analysis}

We establish expressivity, convergence, complexity, and oversmoothing resistance properties. Proofs are deferred to the Appendix.

\vspace{-3   mm}
\subsection{Expressive Power}

\begin{figure}[t]
\centering
\includegraphics[width=0.8\linewidth]{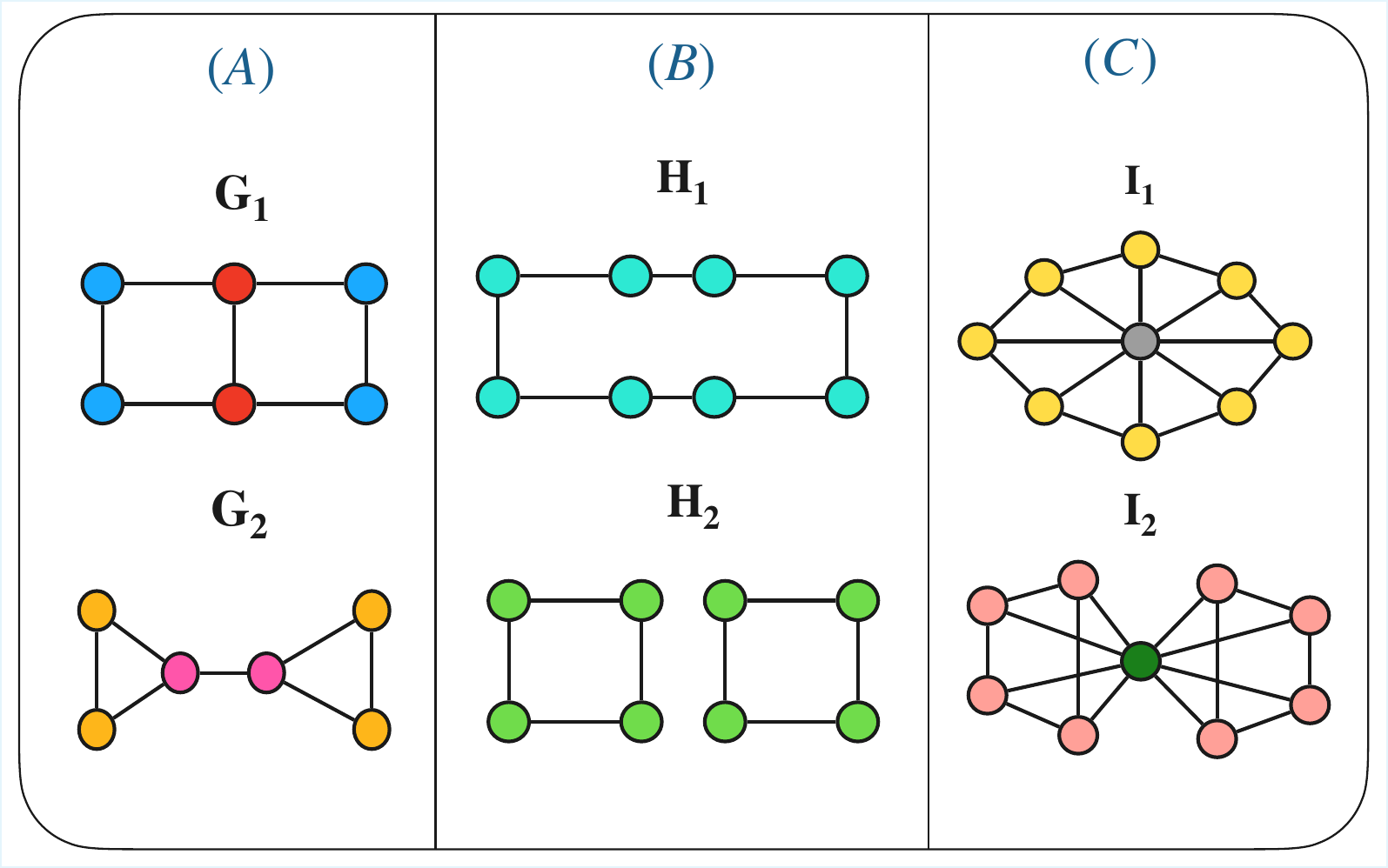}
\caption{Mechanisms by which ISP-WL distinguishes graphs that 1-WL cannot: \textbf{(A) Triangle Aggregation:} $G_1$ and $G_2$ have identical degree sequences but different triangle structures. \textbf{(B) Invariant Stratification:} Triangle-free graphs $H_1$ and $H_2$ differ in structural properties captured by invariants (e.g., local girth). \textbf{(C) Hierarchical Gap Encoding:} $I_1$ and $I_2$ have identical triangle patterns but different invariant hierarchies (e.g., betweenness centrality).}
\label{fig:separation-examples}
\end{figure}

ISP-WL enhances 1-WL through diverse complementary mechanisms: invariant-based stratification, and higher-order aggregation (see Figure \ref{fig:separation-examples}). These mechanisms enable ISP-WL to distinguish graphs beyond the degree sequence barrier.

\begin{restatable}[Enhanced Expressivity Beyond 1-WL]{theorem}{beyondwl}
\label{thm:beyond-1wl}
ISP-WL is strictly more expressive than 1-WL. For any graphs $G_1, G_2$: if 1-WL distinguishes $G_1$ and $G_2$, then ISP-WL distinguishes them. Moreover, there exist non-isomorphic graphs $G_1$ and $G_2$ that 1-WL cannot distinguish, but ISP-WL can.
\end{restatable}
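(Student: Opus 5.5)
The proof has two parts: a refinement (simulation) argument showing ISP-WL is at least as strong as 1-WL, and an explicit separating pair of graphs.

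\textbf{Part 1: ISP-WL refines 1-WL.} Write $c^{(t)}_{\mathrm{1WL}}$ for the 1-WL coloring after $t$ rounds, starting from the uniform color. Write $c^{(t)}_{WL}$ for the WL-stream color of Algorithm~\ref{alg:isp-wl} after $t$ rounds. We show by induction on $t$ that, for any nodes $v,v'$ (possibly in different graphs of the comparison set),
\[
c^{(t)}_{WL}(v)=c^{(t)}_{WL}(v') \;\Rightarrow\; c^{(t)}_{\mathrm{1WL}}(v)=c^{(t)}_{\mathrm{1WL}}(v').
\]
\begin{itemize}
\item \emph{Base case.} Both colorings are constant at $t=0$.
\item \emph{Inductive step.} Since $\mathit{Hash}$ is injective, equality of $c^{(t+1)}_{WL}$ forces equality of the previous WL colors $c^{(t)}_{WL}(v)=c^{(t)}_{WL}(v')$. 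It also forces equality of the multisets $\{\!\!\{(c_{WL}(u),c_{ISP}(u)):u\in N(v)\}\!\!\}$. Projecting the multiset onto its first coordinates gives equal multisets of $c^{(t)}_{WL}$ neighbor colors. The induction hypothesis maps these to equal multisets of $c^{(t)}_{\mathrm{1WL}}$ colors, which is exactly the 1-WL update.
\end{itemize}
The extra components $c_{ISP}$ can only split classes further, never merge them. The ISP assignments are deterministic functions of the graph and of $\phi$, which is a graph invariant. Hence the refinement relation is preserved round by round.

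Next we handle termination. The algorithm runs until the WL stream is stable and $\ell>L$, so it performs at least $K_{\mathrm{WL}}$ rounds; running additional rounds only refines further. Therefore the final WL-stream color histogram determines the stable 1-WL histogram. The final output $\mathit{Hash}(c_{WL},c_{ISP})$ is injective in $c_{WL}$, so equal ISP-WL output histograms imply equal 1-WL histograms. Taking the contrapositive: if 1-WL distinguishes $G_1$ and $G_2$, so does ISP-WL.

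The main subtlety I expect is this termination argument. The two graphs may stop at different iteration counts, so comparisons must be made at a common round. I would fix this by running both graphs jointly, which the global ranking $\phi$ already presupposes.

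\textbf{Part 2: strict separation.} Take $G_1=C_6$ and $G_2=2K_3$, two disjoint triangles.
\begin{itemize}
\item Both graphs are 2-regular on six nodes, so 1-WL cannot distinguish them.
\item Use any invariant, for instance degree, which is constant here, so $L=1$ and every node lies in layer $1$.
\item In $G_1$ every $T(v)$ is empty, so every node receives $c_{ISP}(v)=\phi(v)=1$.
\item In $G_2$ every node has $|T(v)|=1$, so $M^{\text{struct}}_v=\{(\bot,\bot,(0,0,0))\}$ is nonempty and $c_{ISP}(v)=\mathit{Hash}(1,M^{\text{struct}}_v)$.
\end{itemize}
We require $\mathit{Hash}(1,M^{\text{struct}}_v)\neq 1$, i.e., hashed values lie in a range disjoint from the raw layer indices; this is implicit in the paper's injective hash. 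The final color multisets then differ, so ISP-WL separates $G_1$ and $G_2$.

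Combining the two parts, ISP-WL is strictly more expressive than 1-WL.
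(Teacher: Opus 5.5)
Your approach is the paper's own. The WL stream contains 1-WL, the final output is injective in $c_{WL}$, and a separating pair does the rest. Your $C_6$ versus $2K_3$ is exactly the triangle-aggregation example (A) the paper invokes, and it reuses that pair in the node-distinguishability proof. Your round-by-round induction, and your remark that $\mathit{Hash}(1,M^{\text{struct}}_v)$ must avoid the raw layer indices, are more careful than the paper's text, and Part 2 is fine. There is, however, one step in Part 1 that fails as written.

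That step is ``so it performs at least $K_{\mathrm{WL}}$ rounds.'' The WL stream sees the extra labels $c_{ISP}$, so its partition can stabilize strictly earlier than plain 1-WL, and the loop then stops. Take the path on vertices $0,\dots,12$ and let $\mu$ be the indicator of the central vertex $6$, which is an isomorphism invariant, so $L=2$. In round 2, vertex $6$ is the only vertex whose own ISP color is still $\bot$. Hence $\{6\}$ and $\{5,7\}$ split off, together with $\{0,12\}$ and $\{1,11\}$. Round 3 already yields the orbit partition, and round 4 detects stability with $\ell>L$. The algorithm halts after 4 rounds, whereas 1-WL needs 6 rounds to reach that partition. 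So ``additional rounds only refine further'' does not rescue the argument: there may be fewer rounds, not more. The paper's own proof has the same hole, since it only says the WL stream distinguishes the graphs ``at some iteration $k$''.

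The repair is to argue from stability rather than round count. Suppose the WL-stream partition satisfies $P_t=P_{t-1}$ on $V(G_1)\sqcup V(G_2)$. Injectivity of $\mathit{Hash}$, followed by projection onto first coordinates, shows that any two vertices in the same $P_{t-1}$-class have equally many neighbors in each $P_{t-1}$-class. In other words, $P_{t-1}$ is equitable. Now rerun your induction with an equitable partition in place of $c^{(t)}_{WL}$. It shows that every equitable partition refines every 1-WL round partition, and hence the stable one. So the final WL-stream color determines the stable 1-WL color through a well-defined map, and your contrapositive through the injective final hash goes through unchanged.
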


Moreover, ISP-WL provides greater node distinguishability than the standard 1-WL algorithm.

\begin{restatable}[Node Distinguishability]{theorem}{nodedist}
\label{thm:nodedist}
For any nodes $u, v$: if $c_{\text{1-WL}}(u) \neq c_{\text{1-WL}}(v)$, then $c_{\text{ISP-WL}}(u) \neq c_{\text{ISP-WL}}(v)$; moreover, ISP-WL can distinguish nodes even when $c_{\text{1-WL}}(u) = c_{\text{1-WL}}(v)$.
\end{restatable}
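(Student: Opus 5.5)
The plan has two parts: first show that ISP-WL colors refine 1-WL colors, then give an explicit witness pair of nodes.

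\textbf{Refinement.} Write $c^{(t)}_{\text{1-WL}}$ for the standard 1-WL color after $t$ rounds, and $c^{(t)}_{WL}$ for the WL-stream color of Algorithm~\ref{alg:isp-wl} after $t$ rounds. I will prove by induction on $t$ that
\[
c^{(t)}_{WL}(u)=c^{(t)}_{WL}(v)\;\Rightarrow\; c^{(t)}_{\text{1-WL}}(u)=c^{(t)}_{\text{1-WL}}(v).
\]
\begin{itemize}
\item \emph{Base case.} Both streams start from a constant color (or the same input labels), so the claim holds at $t=0$.
\item \emph{Inductive step.} Since $\mathit{Hash}$ is injective, equality of $c'_{WL}(u)$ and $c'_{WL}(v)$ forces two things. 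First, $(c_{WL}(u),c_{ISP}(u))=(c_{WL}(v),c_{ISP}(v))$. Second, the multisets $\{\!\!\{(c_{WL}(x),c_{ISP}(x)) : x\in N(u)\}\!\!\}$ and $\{\!\!\{(c_{WL}(y),c_{ISP}(y)) : y\in N(v)\}\!\!\}$ coincide. Project this multiset equality onto the first coordinate and apply the inductive hypothesis elementwise. This yields equal previous 1-WL colors and equal multisets of neighbor 1-WL colors, which is exactly the condition for equal 1-WL colors at round $t+1$.
\end{itemize}

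The WL stream runs until it stabilizes, so it executes at least as many rounds as 1-WL needs to converge (the loop continues while colors change, and until $\ell>L$). The implication therefore holds for the stable 1-WL coloring. The returned color $c(v)=\mathit{Hash}(c_{WL}(v),c_{ISP}(v))$ is again injective, so the implication transfers to it. Taking the contrapositive gives the first claim: $c_{\text{1-WL}}(u)\neq c_{\text{1-WL}}(v)$ implies $c_{\text{ISP-WL}}(u)\neq c_{\text{ISP-WL}}(v)$.

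\textbf{Strictness.} I will use Figure~\ref{fig:separation-examples}(A): the disjoint union $G=C_6$ together with two disjoint triangles $2C_3$. This graph is 2-regular, so every node gets the same 1-WL color. Take any constant invariant, e.g.\ degree, so that $\phi\equiv 1$ and $L=1$.
\begin{itemize}
\item A node $u$ in a triangle has $T(u)\neq\emptyset$. Its multiset $M^{\text{struct}}_u$ is nonempty, so $c_{ISP}(u)=\mathit{Hash}(1,M^{\text{struct}}_u)$.
\item A node $v$ in the $C_6$ has $T(v)=\emptyset$, so $c_{ISP}(v)=\phi(v)=1$.
\end{itemize}
The remaining step is to confirm that these two ISP colors differ. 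One of them is a hash of a pair containing a nonempty multiset, the other is the bare value $\phi(v)$. I take the hash codomain to be disjoint from the invariant ranks (tagging the two cases separately if needed, which is the implicit reading of the case definition). Under that convention the two values are distinct, so $c_{\text{ISP-WL}}(u)\neq c_{\text{ISP-WL}}(v)$ while $c_{\text{1-WL}}(u)=c_{\text{1-WL}}(v)$, which gives the second claim.

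\textbf{Main obstacle.} The inductive step itself is routine, but two points need care.
\begin{itemize}
\item \emph{Round alignment.} ISP-WL may stop its WL stream at a different round than plain 1-WL. This is resolved by noting that refinements stabilize, and that running additional rounds only refines the partition further, so implications established at the 1-WL convergence round persist.
\item \emph{Hash collisions across cases.} The refinement argument needs equal colors to force equal arguments even when one color is produced by $\mathit{Hash}$ and another by the fallback $\phi(v)$. Making the hash injective jointly over both cases settles this.
\end{itemize}
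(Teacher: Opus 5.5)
Your route is the paper's. For the first half, the WL stream contains 1-WL refinement, and the final injective $\mathit{Hash}(c_{WL}(v),c_{ISP}(v))$ carries those distinctions through. For strictness, you use the 2-regular pair hexagon versus two triangles (the paper's $G_1,G_2$), where $T(v)=\emptyset$ versus $T(v)\neq\emptyset$ changes the ISP color.

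Two of your steps are more careful than the paper's. The paper has no explicit induction. It also asserts $c_{\text{ISP}}(u)\neq c_{\text{ISP}}(v)$ ``by injective hashing'', even though the triangle-free node's color is the bare value $\phi(u)$ and not a hash; your tagging convention handles this. However, that collision issue plays no role in the refinement half. The WL-stream hash has no fallback branch, and you project the ISP coordinate away.

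The step that does not hold is your round alignment. It is not true that the WL stream ``executes at least as many rounds as 1-WL needs to converge''. ISP colors can make it stabilize early. Take a path with $2m+1$ vertices and $\phi(v)=(\mathrm{dist}(v,\text{nearest endpoint})\bmod 3)+1$, so $L=3$.
\begin{itemize}
\item Once the labels are in, the middle vertex is singled out: its two neighbours share a label, unlike every other vertex carrying its label.
\item Distinctions then spread inward from the endpoints and outward from the middle at the same time.
\item The loop halts after about $m/2+O(1)$ rounds, whereas 1-WL needs $m$.
\end{itemize}
Your induction then only shows that the terminal WL-stream partition refines the 1-WL partition of that earlier round. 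That partition is strictly coarser than the stable one, so the contrapositive for the stable $c_{\text{1-WL}}$ is not established as written. The paper's proof silently makes the same assumption.

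The repair does not depend on when the loop stops. At termination the WL-stream partition $P$ coincides with its predecessor. Injectivity of $\mathit{Hash}$ then forces any two nodes in one class of $P$ to have the same number of neighbours in every class of $P$, so $P$ is equitable. The same kind of induction you wrote shows that an equitable partition refines the round-$t$ 1-WL partition for every $t$. Hence $P$ refines the stable 1-WL coloring, which is what you need.
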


We next examine how it compares to higher-order WL variants. Unlike 1-WL and 3-WL \cite{huang2021short}, which have fixed expressivity determined by their refinement rules, ISP-WL's distinguishing power depends on the invariant $\phi$. 

\begin{restatable}[Invariant-Dependent Expressivity]{theorem}{invexpress}
\label{thm:invariant-expressivity}
A coloring $A$ \emph{distinguishes} graphs $G_1$ and $G_2$ if it produces different multisets of colors. Let $A \preceq B$ denote that any pair distinguished by $A$ is also distinguished by $B$, and $A \prec B$ denote strict inequality. The expressivity of ISP-WL is determined by $\phi$:
\begin{enumerate}
    \item If $\phi \preceq \text{1-WL}$, then: $\text{1-WL} \prec \text{ISP-WL}(\phi) \prec \text{3-WL}$
    \item For any $k > 1$, if $\phi$ and $k\text{-WL}$ are incomparable, then $\text{ISP-WL}(\phi)$ and $k\text{-WL}$ are incomparable
    \item If $k\text{-WL} \preceq \phi$ for any $k \geq 1$, then: $k\text{-WL} \preceq \text{ISP-WL}(\phi)$
\end{enumerate}
\end{restatable}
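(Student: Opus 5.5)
The plan is to treat the three parts separately. All three rest on two structural facts about ISP-WL. First, the WL stream of Algorithm~\ref{alg:isp-wl} refines at least as finely as plain 1-WL, because its hash input contains $(c_{WL}(v), \{\!\{c_{WL}(u)\}\!\})$. Second, the ISP stream's information is a function of three things: $\phi$, triangle membership, and colors computed earlier in the stratified order (Single Assignment Property). I will first prove a simulation lemma by induction on the iteration count $t$: if the 1-WL colors of $u$ and $v$ differ at round $t$, then their $c_{WL}$ colors differ at round $t$. The induction works because the map ``ISP-WL state $\mapsto$ 1-WL state'' is obtained by forgetting the $c_{ISP}$ coordinates, and the final color $\mathit{Hash}(c_{WL},c_{ISP})$ is injective. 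This lemma gives the lower bound $\text{1-WL}\preceq\text{ISP-WL}(\phi)$ directly, and it is essentially Theorem~\ref{thm:beyond-1wl}. I will invoke that theorem for the lower bounds, and for parts 1 and 3 I will reuse its separating example (Figure~\ref{fig:separation-examples}(A), the triangle-count pair) for strictness.

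For part 1, two claims remain. Strictness $\text{1-WL}\prec\text{ISP-WL}$ comes from the triangle example. It needs only one check: in that example the pair remains separated even when $\phi$ is 1-WL-refinable. Taking $\phi$ constant, for instance, the nodes lying on triangles receive hashed colors and the others receive $\phi(v)$, so the multisets differ. For the upper bound $\text{ISP-WL}(\phi)\preceq\text{3-WL}$, I will show that 3-WL (in the folklore/2-FWL indexing used by \cite{huang2021short}) can simulate every ingredient of ISP-WL:
\begin{itemize}
\item $\phi$, via the hypothesis $\phi\preceq\text{1-WL}\preceq\text{3-WL}$;
\item the ranks $\phi(v)$ and the gap vectors $\delta(v,u,w)$, which are functions of $\phi$ on triples;
\item triangle detection, since the diagonal/edge atomic type of a pair plus one round of 2-FWL aggregation over $w$ determines the multiset of triangles through an edge;
\item the stratified recursion, by induction on the layer $\ell$: once 3-WL colors of diagonal pairs $(u,u)$ determine $c_{ISP}(u)$ for all $\phi(u)<\ell$, one more round determines the multiset $M^{\text{struct}}_v$ for $\phi(v)=\ell$.
\end{itemize}
Strict separation from 3-WL will come from a pair of strongly regular graphs with the same parameters (e.g.\ the $4\times4$ rook's graph versus the Shrikhande graph). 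These are 3-WL distinguishable. Because every vertex has the same 1-WL color and $\phi\preceq\text{1-WL}$, $\phi$ is constant. Each vertex also lies in the same number of triangles with identical neighbor colors layer by layer, so ISP-WL coincides on them.

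Part 3 follows from the definition of the final coloring. The value $\phi(v)$ enters $c_{ISP}(v)$ injectively, both through the hash and through the fallback branch, so $\phi\preceq\text{ISP-WL}(\phi)$. Combining this with transitivity of $\preceq$ and the hypothesis $k\text{-WL}\preceq\phi$ gives $k\text{-WL}\preceq\text{ISP-WL}(\phi)$.

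Part 2 I expect to be the main obstacle, because incomparability of $\phi$ with $k$-WL does not formally propagate through the refinement. The ISP-WL side is easy: a pair separated by $\phi$ but not by $k$-WL is still separated by ISP-WL, by the argument of part 3. For the other direction, I need a pair that $k$-WL separates but $\text{ISP-WL}(\phi)$ does not. The natural attempt is the pair that $k$-WL separates and $\phi$ does not, but ISP-WL might still separate it through its 1-WL and triangle components. To handle this, I will choose witnesses carefully. I will take graphs separated by $k$-WL that are 1-WL-equivalent and triangle-regular, such as the CFI-type or strongly regular pairs above, and on which $\phi$ is constant. I will then argue as in part 1 that ISP-WL cannot separate them. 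If that cannot be guaranteed for arbitrary $\phi$, I will state part 2 under the mild assumption that the witness pair can be chosen 1-WL-equivalent, or equivalently interpret ``incomparable'' relative to such witnesses.
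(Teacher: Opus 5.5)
\textbf{The witness for $\text{ISP-WL}(\phi)\prec\text{3-WL}$ is wrong, and the paper gives none.} The $4\times4$ rook's graph and the Shrikhande graph are both $\mathrm{SRG}(16,6,2,2)$. For any strongly regular graph, the stable 2-FWL coloring of ordered pairs is just $\{\text{diagonal},\text{edge},\text{non-edge}\}$, with intersection numbers fixed by the parameters. So in the indexing you fix yourself ($\text{3-WL}\equiv\text{2-FWL}$), this is the standard example of a pair that 3-WL \emph{cannot} separate. Your check that ISP-WL also fails on it is correct but proves nothing.

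Take instead a vertex-transitive, triangle-free, 1-WL-equivalent pair that 2-FWL separates, e.g.\ $C_8$ versus $2C_4$ (2-FWL recovers distances, hence connectivity). Vertex-transitivity plus the hypothesis $\phi\preceq\text{1-WL}$ forces $\phi$ to be the same constant on both graphs. There are no triangles, so ISP-WL collapses to 1-WL.

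The same device repairs your lower-bound strictness check, which you verified only for constant $\phi$. On $C_6$ versus $2C_3$, every invariant is constant on each graph. Every node of $2C_3$ then receives a hashed ISP color and every node of $C_6$ the fallback $\phi(v)$.

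Your 2-FWL simulation of the layer-by-layer recursion is a real improvement over the paper's one-line ``local triplets'' remark. Its base case, however, needs $\phi(v)$ to be determined by the 1-WL color of $v$. The graph-level $\preceq$ in the statement does not give this, so state the node-wise reading as an explicit assumption. The paper makes the same silent move when it says $\phi$ ``can be computed by 1-WL coloring''.

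\textbf{Part 2 is not established by your plan, and no choice of witnesses can establish it.} Restricting to convenient pairs changes the statement. Your proposed restriction (1-WL-equivalent witnesses) would not even suffice, since ISP-WL separates 1-WL-equivalent pairs through triangle statistics whatever $\phi$ is. What you need is a pair separated by $k$-WL but not by $\text{ISP-WL}(\phi)$, which is the conclusion itself.

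The claim is in fact false. In the indexing you adopted (where $\text{3-WL}\equiv\text{2-FWL}$), $\text{2-WL}\equiv\text{1-WL}$. The invariant $\phi(v)=\mathbb{I}[v\text{ lies on a triangle}]$ is incomparable with it: $\phi$ separates $C_6$ from $2C_3$, while 1-WL separates $P_3$ from $K_2\sqcup K_1$ and $\phi$ does not. Yet $\text{2-WL}\preceq\text{ISP-WL}(\phi)$ by Theorem~\ref{thm:beyond-1wl}.

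For arbitrary $k$, let $\phi$ be constant on graphs with at most three vertices, and otherwise encode the $k$-WL invariant of $G$ together with the bit $\mathbb{I}[G\cong X]$ for a CFI pair $X,Y$. Then $\phi$ and $k$-WL are incomparable, witnessed by $X,Y$ on one side and $P_3$, $K_2\sqcup K_1$ on the other. But every pair that $k$-WL separates is separated by $\phi$, hence by ISP-WL via your part-3 argument, unless both graphs have at most three vertices, where 1-WL is already complete.

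The paper's own argument for this direction rests on the unsupported assertion that the WL stream ``cannot compensate'', which is exactly the step you flagged. Your diagnosis is right, but the remedy is an extra hypothesis in the theorem, not a cleverer witness. Part 3 and the lower bound of part 1 match the paper and are fine.
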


\begin{table*}[t]
\centering
\caption{Graph classification performance comparison across different datasets and methods. Results are reported as accuracy±standard deviation\%. \textcolor{blue}{Best}, \textcolor{red}{second best}, and \textcolor{orange}{third best} results are highlighted. OOM indicates the out-of-memory error.}
\label{tab:results}
\scalebox{1.1}{%  % Adjust scale factor (0.9 = 90% of original size)
\footnotesize
\begin{tabular}{l|ccccccccc}
\toprule
\textbf{Method} & \textbf{MUTAG} & \textbf{PTC\_MR} & \textbf{BZR} & \textbf{DHFR} & \textbf{COX2} & \textbf{PROTEINS} & \textbf{D\&D} & \textbf{IMDB-BINARY} & \textbf{IMDB-MULTI} \\
\midrule
%GCN (2017) & 92.2±4.4 & 68.8±6.2 & 92.6±4.8 & 83.7±3.7 & 88.5±3.8 & 78.8±3.9 & 81.7±4.2 & 79.8±2.3 & 54.7±2.6 \\
%GraphSAGE (2017) & 90.4±6.2 & 69.0±7.2 & 87.0±3.9 & 69.0±5.4 & 85.3±3.5 & 78.4±2.0 & 79.8±4.1 & 78.8±3.3 & 54.9±3.1 \\
GIN (2018)  & 92.8±5.9 & 65.6±6.5 & 91.1±3.4 & 84.9±4.0 & 88.9±2.3 & 78.8±4.1 & 82.2±3.7 & 78.1±3.5 & 54.6±3.0 \\
%GAT (2018) & 90.9±5.8 & 69.4±6.8 & 86.0±6.2 & 75.0±5.1 & 82.3±4.8 & 76.8±2.2 & 78.5±4.6 & 78.3±2.6 & 54.6±2.8 \\
%\midrule
ID-GNN (2021) & \textcolor{red}{97.8±3.7} & 74.4±4.2 & 93.3±4.8 & 86.1±3.8 & 87.8±3.1 & 78.1±3.9 & 81.9±3.8 & 79.3±2.9 & 55.4±2.8 \\
DropGIN (2021) & 92.5±4.9 & 67.1±8.7 & 86.6±5.0 & 67.6±4.8 & 82.9±3.7 & 77.4±2.9 & 79.2±4.3 & 78.2±3.4 & 54.3±2.9 \\
GSN (2022) & 93.1±4.1 & 70.6±5.3 & 93.1±4.3 & 85.7±4.3 & 88.1±3.7 & 78.2±2.8 & 83.9±2.3 & 79.6±3.2 & 54.6±2.5 \\
GraphSNN (2022) & 94.7±1.9 & 70.6±3.1 & 91.1±3.0 & 85.7±3.8 & 86.3±3.3 & 78.4±2.7 & 81.8±3.7 & 78.5±2.3 & 55.3±3.0 \\
GIN-AK+ (2022) & 95.0±6.1 & 74.1±5.9 & 92.8±4.6 & 85.3±4.2 & 88.2±3.6 & 78.9±5.4 & OOM & 79.4±3.3 & 55.2±2.8 \\
KP-GIN (2022) & 95.6±4.4 & \textcolor{red}{76.2±4.5} & 93.5±4.1 & 86.8±3.9 & 88.8±3.2 & 79.5±4.4 & \textcolor{blue}{84.0±3.4} & 79.7±3.0 & 55.5±2.6 \\
NC-GNN (2024) & 92.8±5.0 & 71.8±6.2 & 92.6±4.3 & 80.7±3.4 & 88.4±3.3 & 78.4±3.1 & OOM & 78.4±4.0 & 55.0±2.7 \\
UnionGIN (2024) & 93.6±4.7 & 74.8±6.3 & 92.4±4.2 & 84.2±4.5 & 87.6±3.4 & 78.9±2.5 & 82.3±3.5 & 79.1±3.1 & 55.0±2.7 \\
AC-GIN (2025) & 96.8±3.5 & 75.6±7.1 & 92.9±4.4 & 85.5±3.3 & 88.6±2.7 & 79.5±2.4 & 82.7±3.4 & 79.0±3.4 & 55.1±2.6 \\
BEC-GIN (2025) & 96.1±3.6 & 72.9±5.7 & 92.4±3.6 & \textcolor{orange}{87.5±3.0} & \textcolor{orange}{89.3±3.1} & 79.1±3.7 & 81.8±3.2 & \textcolor{red}{80.8±3.3} & 54.9±3.2 \\
\midrule
ISP-GNN$_\text{Degree}$ & \textcolor{blue}{97.9±2.6} & 75.8±5.9 & 94.1±4.9 & \textcolor{red}{89.5±3.8} & \textcolor{red}{89.7±4.1} & \textcolor{red}{79.8±2.4} & \textcolor{orange}{83.5±2.8} & \textcolor{orange}{80.0±4.1} & \textcolor{red}{55.9±3.1} \\
ISP-GNN$_\text{Core}$ & 96.7±5.3 & 75.5±6.0 & \textcolor{orange}{94.3±5.3} & 86.5±4.8 & 88.9±4.1 & \textcolor{orange}{79.6±2.6} & 83.4±2.4 & 79.8±3.8 & \textcolor{orange}{55.7±2.9} \\
ISP-GNN$_\text{Onion}$ & 96.8±4.8 & \textcolor{orange}{75.9±6.1} & \textcolor{blue}{94.6±3.9} & 85.9±4.5 & 88.7±2.9 & 79.4±2.8 & \textcolor{red}{83.8±3.1} & 79.6±3.6 & 55.6±3.0 \\
ISP-GNN$_\dagger$ & \textcolor{orange}{97.5±2.8} & \textcolor{blue}{76.4±5.7} & \textcolor{red}{94.5±4.5} & \textcolor{blue}{90.1±3.5} & \textcolor{blue}{90.3±3.8} & \textcolor{blue}{80.2±2.3} & 83.5±3.4 & \textcolor{blue}{81.2±3.8} & \textcolor{blue}{56.3±2.8} \\
\bottomrule
\end{tabular}
}% End of scalebox
\end{table*}

\subsection{Convergence and Complexity}

The single assignment property established in Section~3.1.2 ensures both correctness and efficiency by guaranteeing that each node's structural information is computed exactly once.

\begin{restatable}[Convergence]{theorem}{convergence}
\label{thm:convergence}
ISP-WL terminates in at most $K = \max(K_{\text{WL}}, L) \leq |V|$ iterations, where $L$ is the number of distinct invariant values and $K_{\text{WL}}$ is the standard 1-WL convergence iterations.
\end{restatable}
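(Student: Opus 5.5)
The proof splits the termination condition of Algorithm~\ref{alg:isp-wl} into its two conjuncts and bounds the iteration at which each one first holds. The result then follows by taking the maximum, together with the bound $\max(K_{\text{WL}}, L) \le |V|$.

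\textbf{ISP stream.} The Single Assignment Property already gives that every node $v$ receives its ISP color exactly once, at iteration $\phi(v) \in \{1,\dots,L\}$, and that the color never changes afterwards. Hence after iteration $L$ the ISP coloring $c_{\text{ISP}}$ is fixed, no node is left at $\bot$, and the counter satisfies $\ell > L$. So the second conjunct holds from iteration $L$ on.

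\textbf{WL stream.} This is the main obstacle, because the WL stream reads $c_{\text{ISP}}$, which changes during the first $L$ iterations. It is therefore not literally 1-WL before time $L$. I would argue in two phases.

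First, I would note that the partition induced by the pair $(c_{\text{WL}}, c_{\text{ISP}})$ refines itself monotonically. The update hashes $(c_{\text{WL}}(v), c_{\text{ISP}}(v))$ together with the neighbor multiset. The ISP colors only move from $\bot$ to a value that includes the injective code $\phi(v)$. So once two nodes are separated, they stay separated.

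Second, after iteration $L$ the ISP coloring is a fixed vertex labeling. From that point the WL stream is exactly 1-WL color refinement on a graph with fixed initial labels. Standard color refinement on $|V|$ vertices stabilizes once the number of classes stops growing, and the number of classes can grow at most $|V|-1$ times.

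Combining the two phases, the stable partition is reached once both the labels have fixed and refinement has saturated. I would interpret $K_{\text{WL}}$ as the refinement time of this stream. I would also use that refinement concurrent with the ISP assignment counts toward it: each iteration either strictly increases the number of classes or, with the labels fixed, certifies stability. The first iteration $t$ with $c_{\text{WL}} = c_{\text{WL}}^{\text{prev}}$ and $t \ge L$ is then at most $\max(K_{\text{WL}}, L)$. The delicate point is that the partition may be stable before $L$ while new ISP colors arrive later. In that case the stopping test is not satisfied until $\ell > L$ anyway, and any late refinement is bounded by the same class-counting argument.

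\textbf{The bound $\le |V|$.} For $L$, note that $L$ is defined as the number of distinct values of $\phi$. Restricted to a single graph, this is at most $|V|$; for a comparison set one reads $|V|$ as the total vertex count. For $K_{\text{WL}}$, a partition of $|V|$ elements can be strictly refined at most $|V|-1$ times, so stabilization is detected by iteration $|V|$. Taking the maximum of the two bounds gives $K \le |V|$.
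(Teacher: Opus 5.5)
Your decomposition is the same as the paper's. The ISP stream is finished after iteration $L$ by the Single Assignment Property, the WL stream is bounded by $K_{\text{WL}}$, and one takes the maximum. The paper handles the WL part only by appealing to ``standard 1-WL convergence results'' and ignores that the update hashes $c_{\text{ISP}}$. You correctly flag this, but your repair does not close the gap, for two reasons.

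\textbf{Reinterpreting $K_{\text{WL}}$.} Declaring $K_{\text{WL}}$ to be ``the refinement time of this stream'' changes the theorem, which defines $K_{\text{WL}}$ as the standard 1-WL convergence time. Under your reading the claim is nearly a tautology. The algorithm stops at $T^\ast=\min\{t\ge L : P_t=P_{t-1}\}$, where $P_t$ is the WL partition after iteration $t$, and $T^\ast\le\max(K,L)$ holds for any $K$ after which the coupled stream never changes.

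\textbf{Class counting gives an additive bound.} Refinements triggered by ISP colors arriving at iterations up to $L$ can only start after those iterations. Counting strict refinements therefore yields $T^\ast\le L+1+(|V|-|P_L|)\le L+|V|$, and nothing ties this to $\max(K_{\text{WL}},L)$. The same stalling also breaks your justification of $K_{\text{WL}}\le|V|$ for the coupled stream. Before iteration $L$, an iteration may neither refine the partition nor certify stability, so ``at most $|V|-1$ strict refinements'' does not bound the number of iterations.

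This cannot be patched, because with $K_{\text{WL}}$ as defined in the statement the bound is false. Take a connected cubic graph whose only triangles lie in one gadget at the end of a long triangle-free chain. For instance, use a $K_4$ with one subdivided edge, followed by a path of copies of $K_{3,3}$ minus an edge, closed off by a triangle-free cap. Let $\phi$ be the number of triangles at a vertex, so $L=3$. Standard 1-WL is stable after one round on any regular graph, so $K_{\text{WL}}\le 1$.

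In ISP-WL, the ISP colors assigned in iterations $1$--$3$ single out the gadget. From then on the WL stream is color refinement with a marked set. After $t$ rounds of it, all vertices at distance more than $t$ from the gadget still share a single class. The stable partition, however, separates vertices by their distance to the gadget. So the partition changes in every round until about the eccentricity of the gadget, i.e.\ for $\Theta(|V|)$ iterations, far beyond $\max(K_{\text{WL}},L)=3$.

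What is actually provable is the additive bound $T^\ast\le L+|V|$. For a single graph in which all $L$ layers are nonempty, $|P_L|\ge L$, so $T^\ast\le|V|+1$. A max-type statement would require $K_{\text{WL}}$ to denote the stabilization time of the coupled, labeled refinement, and that must be stated explicitly rather than obtained from standard 1-WL results.
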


The following theorem establishes computational efficiency on sparse graphs, showing that ISP-WL maintains the same asymptotic complexity as standard 1-WL.

\begin{restatable}[Time Complexity]{theorem}{complexity}
\label{thm:complexity}
ISP-WL has time complexity $O(K(|V| + |E|) + T)$ where $T = \sum_v |T(v)|$ is the total number of triangles. For sparse graphs with bounded degeneracy $d$, we have $T = O(d|E|)$, giving overall complexity $O(K(|V| + |E|))$ when $d$ is constant, matching standard 1-WL.
\end{restatable}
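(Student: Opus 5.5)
We account for the cost of each phase of Algorithm~\ref{alg:isp-wl} separately and then sum. By Theorem~\ref{thm:convergence} the main loop runs for at most $K=\max(K_{\text{WL}},L)$ iterations. Each iteration has a WL refinement step and possibly an ISP assignment step. The preprocessing consists of evaluating $\phi$ and its global ranking, together with the enumeration of triangles $T(v)$. We treat the cost of computing the base invariant as given (or as $O(|V|+|E|)$ for degree, core number, and similar invariants). Ranking then costs $O(|V|\log|V|)$ via sorting, or $O(|V|+L)$ via counting sort once the values are bounded integers. This is dominated by, or folded into, the $O(K(|V|+|E|))$ term.

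\textbf{WL stream.} In each iteration, node $v$ forms the multiset $m_v$ of size $d_v$ and hashes it. Summed over $v$, this is $\sum_v (1+d_v)=O(|V|+|E|)$, using the standard assumption of 1-WL analyses that multiset hashing is linear in the multiset size, for example via radix or counting sort of color pairs. Across $K$ iterations the WL stream therefore costs $O(K(|V|+|E|))$. The convergence check $c_{WL}=c_{WL}^{\text{prev}}$ adds only $O(|V|)$ per iteration.

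\textbf{ISP stream.} By the Single Assignment Property, each node $v$ is processed exactly once, at iteration $\phi(v)$. Building $M^{\text{struct}}_v$ requires one entry per triangle in $T(v)$, each obtained by a constant-time lookup of $c_{\text{ISP}}$ and an $O(1)$ evaluation of $\delta$. The cost is therefore $O(1+|T(v)|)$. Summing over all nodes, regardless of iteration, gives $O(|V|+T)$. Partitioning $V$ into the buckets $V_\ell$ once up front costs $O(|V|+L)\subseteq O(|V|+K)$, so iterations with $\ell>L$ incur no extra cost. Adding the two streams yields the bound $O(K(|V|+|E|)+T)$.

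\textbf{Sparse case, the main obstacle.} We must bound $T$, and the cost of enumerating the sets $T(v)$, by $O(d|E|)$. We plan to use the standard degeneracy-ordering argument of Chiba--Nishizeki. Compute a degeneracy ordering in $O(|V|+|E|)$ and orient each edge from the earlier vertex to the later one, so every vertex has out-degree at most $d$. Each triangle has a unique vertex that precedes the other two, and it is found by checking pairs among that vertex's out-neighbours. A cleaner route is to charge the triangle to its two earliest vertices $a\prec b$, with the third vertex $c$ in the out-neighbourhood of $a$ or $b$. Each edge $(a,b)$ is then charged at most $d$ times, so the number of triangles is at most $d|E|$, and enumeration takes $O(d|E|)$ time using hashed adjacency tests. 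Since each triangle appears in exactly three sets $T(v)$, we get $T=3\cdot\#\text{triangles}=O(d|E|)$. For constant $d$ this is $O(|E|)\subseteq O(K(|V|+|E|))$ because $K\ge 1$, which matches 1-WL. The delicate points are making the charging argument precise and stating explicitly the hashing-cost model under which per-node hashing is linear.
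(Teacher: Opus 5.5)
Your accounting has the same skeleton as the paper's proof: $O(|V|+|E|)$ per WL iteration over $K$ iterations, the ISP stream paid once per node by the Single Assignment Property, and a degeneracy bound on the triangle term. The real difference is how you enumerate triangles, and there your route is the one that actually proves the sparse claim.

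The paper tests all $\binom{d_v}{2}$ neighbour pairs at each node, asserts that the total is $O(T)$, and simply states $T\le d|E|$. But $\sum_v d_v^2$ is neither $O(T)$ nor $O(|E|)$ under bounded degeneracy. The star $K_{1,n}$ has degeneracy $1$, $T=0$ and $K=O(1)$, yet pair-checking at the centre costs $\Theta(n^2)$ against a claimed $O(n)$. Your degeneracy ordering (out-degree at most $d$) charges each triangle $a\prec b\prec c$ to the edge $(a,b)$ with $c\in N^+(b)$. This proves $T\le 3d|E|$ and gives $O(d|E|)$ enumeration at once, which is exactly what is needed. Treating multiset hashing as linear in the multiset size is also more honest than the paper's $O(1)$ per node.

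Make one restriction explicit. Your unconditional bound $O(K(|V|+|E|)+T)$ covers the refinement \emph{given} the lists $T(v)$, since you bound enumeration only in the sparse case. This restriction is necessary, not cosmetic. On $K_{n,n}$ with the degree invariant, $K=O(1)$ and $T=0$, so the claimed bound is $O(n^2)$. Your enumeration is only guaranteed $O(d|E|)=O(n^3)$ there, and no algorithm is known that lists, or even detects, triangles in time linear in $|V|+|E|+T$.

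Similarly, an $O(|V|\log|V|)$ comparison sort for the ranking is not dominated by $O(K(|V|+|E|))$ when $K$ is small. Counting sort handles integer-valued invariants such as degree, core and onion numbers. For real-valued invariants the sort belongs with the invariant's own preprocessing cost. The paper's claim that its $O(|E|\log|V|)$ preprocessing is dominated for $K\ge 1$ has the same defect.
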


This result applies broadly to real-world networks, which typically have a bounded degeneracy \cite{allen2023universality}, such as social networks, molecular graphs, and citation networks. 

\subsection{Oversmoothing Resistance}

Deep GNNs often face oversmoothing, where node representations become indistinguishable as depth increases \cite{zhang2023comprehensive}. ISP-GNN addresses this by maintaining structural information through a fixed ISP stream, serving as a structural anchor.

\begin{restatable}[Resistance to Feature Collapse]{theorem}{oversmoothing}
\label{thm:oversmoothing}
Let $\bar{h}_v^{\text{ISP}} := h_v^{(\phi(v)),\text{ISP}}$ denote the fixed structural embedding at layer $\phi(v)$. For nodes with distinct structural embeddings $\bar{h}_u^{\text{ISP}} \neq \bar{h}_v^{\text{ISP}}$, the final combined representation satisfies:
\[
[h_u^{(K),\text{WL}} \| \bar{h}_u^{\text{ISP}}] \neq [h_v^{(K),\text{WL}} \| \bar{h}_v^{\text{ISP}}] \quad \forall K
\]
even as WL features oversmooth. If $\text{MLP}_{\text{out}}$ is injective, final node outputs remain distinct across all depths.
\end{restatable}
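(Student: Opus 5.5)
The proof is a short argument about concatenated vectors, built on the Single Assignment Property and its neural counterpart, the gating mechanism. I will carry it out in three steps.

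\textbf{Step 1: the ISP stream is frozen after layer $\phi(v)$.} I will show that for every $k \geq \phi(v)$ we have $h_v^{(k),\text{ISP}} = \bar{h}_v^{\text{ISP}}$.
\begin{itemize}
\item At layer $k=\phi(v)$, the gate $\gamma^{(k)}_v = \mathbb{I}[\phi(v)=k]\cdot\mathbb{I}[\|h^{(k-1),\text{ISP}}_v\|_2=0]$ fires, because ISP features are zero-initialized and no earlier layer could write them. The features are then set to $h^{(k),\text{TRI}}_v$ or $\text{Embed}(k)$.
\item For every $k>\phi(v)$, the first indicator vanishes, so the ``otherwise'' branch of the ISP update copies $h^{(k-1),\text{ISP}}_v$ forward.
\item Induction on $k$ then gives $h_v^{(k),\text{ISP}} = \bar{h}_v^{\text{ISP}}$ for all $k\ge\phi(v)$. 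This is the neural analogue of the Single Assignment Property.
\end{itemize}
Crucially, this argument never references the WL stream. So $\bar{h}^{\text{ISP}}_v$ depends only on the state at layer $\phi(v)$ and is untouched by any later WL updates, however much they smooth.

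\textbf{Step 2: distinctness of the concatenation.} Fix $K \geq \max(\phi(u),\phi(v))$, which is the regime where the ISP part of $h^{(K)}$ equals $\bar{h}^{\text{ISP}}$; for smaller $K$ the statement is about the fixed $\bar h$ vectors as written.
\begin{itemize}
\item Two concatenated vectors $[a\|b]$ and $[a'\|b']$ are equal iff $a=a'$ and $b=b'$.
\item Since $\bar{h}_u^{\text{ISP}}\neq\bar{h}_v^{\text{ISP}}$, the ISP blocks differ.
\item Hence $[h_u^{(K),\text{WL}}\|\bar{h}_u^{\text{ISP}}]\neq[h_v^{(K),\text{WL}}\|\bar{h}_v^{\text{ISP}}]$, whatever the WL blocks are. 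In particular this holds even when $h_u^{(K),\text{WL}} = h_v^{(K),\text{WL}}$, i.e.\ under complete oversmoothing of the WL stream.
\end{itemize}

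\textbf{Step 3: the output map.} If $\text{MLP}_{\text{out}}$ is injective, it maps distinct inputs to distinct outputs. Therefore $h^{(K)}_u\neq h^{(K)}_v$ for every admissible depth $K$.

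The main obstacle is Step 1. The statement requires reading the gating carefully: the second indicator must never wrongly fire or fail to fire. One subtle case is a node whose computed TRI embedding happens to be the zero vector. In that case $\|h^{(\phi(v)),\text{ISP}}_v\|_2=0$, yet the first indicator already blocks any rewrite at later layers, so freezing still holds. A second subtlety is that the learnable-invariant soft training regime does not literally freeze the features. I will therefore state that the argument applies to the hard-gated (predefined or inference-time discretized) model, where the gating equation is used.
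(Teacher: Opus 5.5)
Your proposal is correct and takes essentially the same route as the paper: induction on the two gating indicators to show $h_v^{(k),\text{ISP}}$ stays frozen for $k \geq \phi(v)$, then distinctness of the concatenation from the differing ISP block, then injectivity of $\text{MLP}_{\text{out}}$. Your restriction to $K \geq \max(\phi(u),\phi(v))$ and to the hard-gated model is extra care the paper omits, and it is the right way to make the ``all depths'' claim about actual network outputs precise: for $K < \phi(v)$ the ISP block of $h_v^{(K)}$ is still $0$ rather than $\bar{h}_v^{\text{ISP}}$, and the soft-trained learnable variant does not freeze the ISP features.
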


Thus, the depth-invariant structural information $\bar{h}_v^{\text{ISP}}$ prevents feature collapse.

%% file: sections/experiments.tex
\section{Experiments}

We conduct a comprehensive evaluation of ISP-GNN across multiple graph learning domains. First, we assess expressive power through standard benchmarks: graph classification for distinguishing graph-level structures and node classification for capturing local neighborhood patterns. Second, we evaluate the model's ability to capture temporal dynamics through influence estimation tasks \cite{xia2021deepis,ling2023deep}, where information cascades through networks following hierarchical propagation patterns that our invariant-stratified framework is designed to model. 

For our main experiments, we primarily employ degree, k-core \cite{malliaros2020core}, and onion decomposition \cite{hebert2016multi} as invariants, offering computational efficiency ranging from $O(|V|)$ to $O(|E|)$ while providing diverse structural perspectives. Additional invariants are used in ablation studies to demonstrate the effectiveness of our approach. Additional details on these invariants are provided in the Appendix \ref{sec:invariants}.

\subsection{Experimental Setup}

\paragraph{Datasets} We evaluate graph classification using the TU \cite{morris2020tudataset} and OGB \cite{hu2020open} dataset benchmarks, which include datasets from the molecular, bioinformatics, and social network domains. For node classification, we evaluate datasets with homophilic and heterophilic label patterns from CitationFull \cite{bojchevski2018deep},  Amazon \cite{shchur2018pitfalls}, WebKB \cite{pei2020geom}, and Heterophilic \cite{platonovcritical} benchmarks.  For influence estimation, we employ four real-world
social networks \cite{bojchevski2018deep,rossi2015network}.

\paragraph{Baselines} For graph classification, we compare ISP-GNN with traditional GNNs, which are upper bounded by 1-WL (GIN \cite{xu2018powerful}), and GNNs that are designed to incorporate higher-order structures to exceed 1-WL (ID-GNN \cite{wijesinghe2022new}, DropGNN \cite{papp2021dropgnn}, GSN \cite{bouritsas2022improving}, GraphSNN \cite{wijesinghe2022new}, GIN-AK+ \cite{zhaostars2022}, KP-GIN \cite{feng2022powerful}, UnionSNN \cite{xu2024union}, NC-GNN \cite{liuempowering}, AC-GNN \cite{hevapathige2025curvature}, BEC-GIN \cite{hevapathige2025depth}). These baselines span four categories: (i) Structure injection (GSN, GraphSNN, UnionSNN), (ii) subgraph enumeration (GIN-AK+, KP-GIN), (iii) structural augmentation (ID-GNN, DropGNN, NC-GNN), and (iv) geometric methods using curvature (AC-GNN, BEC-GIN). For node classification, we utilize traditional GNN architectures: GCN \cite{kipf2017semi}, GAT \cite{velivckovic2018graph}, GraphSAGE \cite{hamilton2017inductive}, along with modern baselines: Dir-GNN \cite{rossi2024edge}, a direction-aware model, and BEC-GCN \cite{hevapathige2025depth}, a curvature-based model. For influence estimation, we evaluate against traditional GNNs (GIN \cite{xu2018powerful}, GCN \cite{kipf2017semi}, GAT \cite{velivckovic2018graph}, GraphSAGE \cite{hamilton2017inductive}), the direction-aware DirGNN architecture \cite{rossi2024edge}, task-specific influence estimation methods (SGNN \cite{kumar2022influence}, DeepIM \cite{ling2023deep}, DeepIS \cite{xia2021deepis}, GLIE \cite{panagopoulos2024learning}), and diffusion-oriented models (APPNP \cite{gasteiger2018predict}, ODNet \cite{zhouodnet}, UniGO \cite{li2025unigo}) designed using the indutive bias of opinion propagation analogous to influence estimation.

\paragraph{Evaluation Settings} For graph classification, we use two setups. For TU datasets, we follow \citet{feng2022powerful} with 10-fold cross-validation, reporting mean and standard deviation of the best accuracy. For OGB datasets, we adopt \citet{hu2020open}'s setup, focusing on the mean and standard deviation of ROC-AUC scores over 10 random seeds. In node classification, we apply a 60/20/20 split per \citet{zheng2023finding} and report mean accuracy across 10 initializations. For influence estimation, we assess three diffusion models: Linear Threshold (LT), Independent Cascade (IC), and Susceptible–Infected–Susceptible (SIS), with the initial activation node set at 10\% of the nodes. We follow \citet{ling2023deep}'s setup, reporting average MAE and standard deviation over 10 folds. Baseline results are reported from prior work when available; otherwise, we replicate them using specified hyperparameters.

Additional details regarding downstream tasks, baselines,
dataset statistics, model hyperparameters, and implementation details are provided in the
Appendix section \ref{sec:exp_des}.

\begin{table*}[t]
\centering
\caption{Influence estimation performance comparison across different datasets and methods. Results are reported as MAE$\pm$standard deviation. Lower values indicate better performance. \textcolor{blue}{Best}, \textcolor{red}{second best}, and \textcolor{orange}{third best} results are highlighted.}
\label{tab:influence_estimation}
\footnotesize
\scalebox{0.95}{
\begin{tabular}{l|ccc|ccc|ccc|ccc}
\toprule
\multirow{2}{*}{Method} & \multicolumn{3}{c|}{Jazz} & \multicolumn{3}{c|}{Cora-ML} & \multicolumn{3}{c|}{Network Science} & \multicolumn{3}{c}{Power Grid} \\
\cmidrule(lr){2-4} \cmidrule(lr){5-7} \cmidrule(lr){8-10} \cmidrule(lr){11-13}
 & IC & LT & SIS & IC & LT & SIS & IC & LT & SIS & IC & LT & SIS \\
\midrule
GCN (2017) & 0.233$_{\pm.010}$ & 0.199$_{\pm.006}$ & 0.344$_{\pm.023}$ & 0.277$_{\pm.007}$ & 0.255$_{\pm.008}$ & 0.365$_{\pm.065}$ & 0.270$_{\pm.019}$ & 0.190$_{\pm.012}$ & 0.180$_{\pm.007}$ & 0.313$_{\pm.024}$ & 0.335$_{\pm.023}$ & 0.207$_{\pm.001}$ \\
GraphSAGE (2017) & 0.201$_{\pm.028}$ & 0.120$_{\pm.004}$ & 0.301$_{\pm.018}$ & 0.255$_{\pm.010}$ & 0.203$_{\pm.019}$ & 0.222$_{\pm.051}$ & 0.241$_{\pm.010}$ & 0.112$_{\pm.005}$ & 0.102$_{\pm.005}$ & 0.313$_{\pm.024}$ & 0.341$_{\pm.018}$ & 0.133$_{\pm.001}$ \\
GIN (2018) & 0.148$_{\pm.012}$ & 0.174$_{\pm.029}$ & 0.344$_{\pm.013}$ & 0.247$_{\pm.001}$ & 0.305$_{\pm.008}$ & 0.225$_{\pm.019}$ & 0.277$_{\pm.005}$ & 0.265$_{\pm.006}$ & 0.140$_{\pm.007}$ & 0.276$_{\pm.001}$ & 0.471$_{\pm.002}$ & 0.184$_{\pm.029}$ \\
GAT (2018) & 0.342$_{\pm.005}$ & 0.156$_{\pm.100}$ & \textcolor{red}{0.288$_{\pm.017}$} & 0.352$_{\pm.004}$ & 0.192$_{\pm.010}$ & 0.208$_{\pm.008}$ & 0.274$_{\pm.002}$ & 0.114$_{\pm.008}$ & 0.123$_{\pm.013}$ & 0.331$_{\pm.002}$ & 0.280$_{\pm.015}$ & 0.133$_{\pm.001}$ \\
APPNP (2018) & 0.200$_{\pm.006}$ & 0.124$_{\pm.003}$ & 0.357$_{\pm.059}$ & 0.265$_{\pm.022}$ & 0.220$_{\pm.031}$ & 0.321$_{\pm.042}$ & 0.248$_{\pm.006}$ & 0.084$_{\pm.006}$ & 0.100$_{\pm.012}$ & 0.290$_{\pm.006}$ & \textcolor{orange}{0.189$_{\pm.011}$} & 0.132$_{\pm.001}$ \\
DeepIS (2021) & 0.151$_{\pm.003}$ & 0.219$_{\pm.002}$ & 0.434$_{\pm.003}$ & 0.203$_{\pm.001}$ & 0.301$_{\pm.005}$ & 0.304$_{\pm.001}$ & 0.223$_{\pm.001}$ & 0.306$_{\pm.001}$ & 0.256$_{\pm.001}$ & \textcolor{orange}{0.206$_{\pm.001}$} & 0.374$_{\pm.001}$ & 0.251$_{\pm.001}$ \\
SGNN (2022) & 0.183$_{\pm.004}$ & 0.164$_{\pm.014}$ & 0.330$_{\pm.007}$ & 0.210$_{\pm.003}$ & {0.134}$_{\pm.004}$ & 0.211$_{\pm.006}$ & 0.213$_{\pm.003}$ & \textcolor{orange}{0.049$_{\pm.004}$} & 0.127$_{\pm.000}$ & 0.257$_{\pm.002}$ & 0.192$_{\pm.001}$ & 0.175$_{\pm.004}$ \\
DeepIM (2023) & 0.178$_{\pm.002}$ & 0.134$_{\pm.014}$ & 0.383$_{\pm.010}$ & 0.210$_{\pm.002}$ & 0.271$_{\pm.010}$ & 0.270$_{\pm.006}$ & 0.216$_{\pm.003}$ & 0.118$_{\pm.003}$ & 0.135$_{\pm.004}$ & 0.258$_{\pm.003}$ & 0.331$_{\pm.002}$ & 0.205$_{\pm.005}$ \\
GLIE (2024) & \textcolor{orange}{0.136$_{\pm.003}$} & \textcolor{red}{0.055$_{\pm.028}$} & 0.454$_{\pm.062}$ & 0.199$_{\pm.041}$ & 0.286$_{\pm.016}$ & 0.205$_{\pm.029}$ & \textcolor{red}{0.163$_{\pm.031}$} & 0.160$_{\pm.063}$ & 0.103$_{\pm.023}$ & \textcolor{red}{0.183$_{\pm.009}$} & 0.384$_{\pm.020}$ & 0.132$_{\pm.026}$ \\
DirGNN (2024) & 0.205$_{\pm.016}$ & 0.124$_{\pm.002}$ & 0.379$_{\pm.032}$ & 0.255$_{\pm.006}$ & 0.193$_{\pm.012}$ & 0.229$_{\pm.025}$ & 0.236$_{\pm.011}$ & 0.084$_{\pm.017}$ & 0.142$_{\pm.019}$ & 0.287$_{\pm.010}$ & 0.257$_{\pm.062}$ & 0.132$_{\pm.001}$ \\
ODNet (2024) & 0.180$_{\pm.003}$ & \textcolor{blue}{0.053$_{\pm.003}$} & 0.322$_{\pm.006}$ & 0.232$_{\pm.001}$ & 0.210$_{\pm.004}$ & 0.196$_{\pm.002}$ & 0.255$_{\pm.002}$ & 0.104$_{\pm.015}$ & 0.106$_{\pm.005}$ & 0.274$_{\pm.001}$ & 0.296$_{\pm.002}$ & 0.145$_{\pm.001}$ \\
UniGO (2025) & 0.192$_{\pm.013}$ & 0.159$_{\pm.020}$ & 0.335$_{\pm.002}$ & 0.255$_{\pm.001}$ & 0.155$_{\pm.019}$ & 0.212$_{\pm.001}$ & 0.201$_{\pm.001}$ & 0.110$_{\pm.049}$ & 0.108$_{\pm.023}$ & 0.231$_{\pm.002}$ & 0.235$_{\pm.005}$ & 0.206$_{\pm.023}$ \\
\midrule
ISP-GNN$_\text{Degree}$ & \textcolor{blue}{0.132$_{\pm.001}$} & 0.137$_{\pm.028}$ & 0.315$_{\pm.013}$ & \textcolor{blue}{0.162$_{\pm.001}$} & \textcolor{blue}{0.106$_{\pm.007}$} & 0.191$_{\pm.012}$ & 0.205$_{\pm.001}$ & \textcolor{blue}{0.045$_{\pm.008}$} & 0.100$_{\pm.005}$ & 0.245$_{\pm.004}$ & 0.195$_{\pm.004}$ & \textcolor{red}{0.126$_{\pm.006}$} \\
ISP-GNN$_\text{Core}$ & 0.147$_{\pm.011}$ & 0.162$_{\pm.009}$ & 0.314$_{\pm.012}$ & 0.179$_{\pm.002}$ & 0.152$_{\pm.007}$ & \textcolor{blue}{0.184$_{\pm.004}$} & 0.210$_{\pm.004}$ & 0.051$_{\pm.009}$ & \textcolor{blue}{0.094$_{\pm.001}$} & 0.240$_{\pm.002}$ & \textcolor{blue}{0.181$_{\pm.004}$} & \textcolor{orange}{0.127$_{\pm.002}$} \\
ISP-GNN$_\text{Onion}$ & 0.156$_{\pm.009}$ & 0.155$_{\pm.003}$ & \textcolor{orange}{0.294$_{\pm.014}$} & \textcolor{orange}{0.176$_{\pm.002}$} & \textcolor{orange}{0.134$_{\pm.006}$} & \textcolor{red}{0.187$_{\pm.004}$} & \textcolor{blue}{0.160$_{\pm.003}$} & 0.060$_{\pm.002}$ & \textcolor{red}{0.095$_{\pm.001}$} & 0.263$_{\pm.001}$ & 0.222$_{\pm.005}$ & \textcolor{blue}{0.120$_{\pm.002}$} \\
ISP-GNN$_\dagger$ & \textcolor{red}{0.135$_{\pm.003}$} & \textcolor{orange}{0.058$_{\pm.005}$} & \textcolor{blue}{0.285$_{\pm.011}$} & \textcolor{red}{0.168$_{\pm.003}$} & \textcolor{red}{0.112$_{\pm.009}$} & \textcolor{orange}{0.189$_{\pm.007}$} & \textcolor{orange}{0.168$_{\pm.005}$} & \textcolor{red}{0.048$_{\pm.006}$} & \textcolor{orange}{0.096$_{\pm.003}$} & \textcolor{blue}{0.180$_{\pm.003}$} & \textcolor{red}{0.186$_{\pm.006}$} & 0.124$_{\pm.004}$ \\
\bottomrule
\end{tabular}
}
\end{table*}

\subsection{Main Results}

In our experiments, we use the notation ISP-GNN$_\phi$ to refer to the model instantiated with a specific predefined invariant $\phi$, and ISP-GNN$_\dagger$ to denote the variant with learnable invariant stratification.

\subsubsection{Graph Classification} 

Table~\ref{tab:results} presents graph classification performance across nine TU benchmark datasets. ISP-GNN variants consistently outperform or provide competitive results against both traditional GNN architectures and expressive GNN baselines. Notably, the learnable ISP-GNN$_\dagger$ variant shows a clear edge over pre-defined variants on the majority of datasets, achieving top performance on five benchmarks. We believe this demonstrates the advantage of learning task-specific structural representations. Different predefined variants perform differently across graph types, suggesting that structural-invariant choice correlates with specific graph characteristics.

\subsubsection{Influence Estimation}

Table~\ref{tab:influence_estimation} evaluates the influence estimation performance of ISP-GNN on four social network datasets. As shown in the results, ISP-GNN variants consistently achieve top-3 performance, demonstrating the effectiveness of hierarchical structural encoding for modelling cascade dynamics. The learnable ISP-GNN$_\dagger$ variant demonstrates strong performance across diffusion models, suggesting that learned structural stratification effectively captures diffusion-relevant patterns that vary across network topologies and propagation mechanisms. Different predefined variants excel in complementary scenarios, with degree-based stratification capturing direct influence spread patterns while core and onion-based stratification better model community structure effects on cascades. Overall, ISP-GNN competes favourably with task-specific influence estimation methods while maintaining generalizability across diverse diffusion models and graph structures, demonstrating that our method provides benefits beyond static graph property prediction.

\subsubsection{Node Classification}
Table~\ref{tab:node_classification} evaluates the node classification performance of ISP-GNN across eight benchmark datasets. We augment baseline architectures by combining learnable structural color embeddings with baseline embeddings to produce ISP variants. ISP-GNN$_\dagger$ consistently improves performance, demonstrating the effectiveness of hierarchical structural encoding for node-level prediction tasks. Our approach shows substantial improvements on heterophilic datasets, where traditional message-passing struggles, suggesting that ISP color embeddings capture structural patterns that complement feature-based neighborhood aggregation when local homophily assumptions break down. On homophilic networks, ISP-GNN$_\dagger$ provides modest but consistent improvements, indicating that structural stratification adds valuable positional information even when node features and labels are strongly correlated.

\begin{table*}[htbp]
\centering
\caption{Node classification performance comparison across different datasets and methods. Results are reported as accuracy ± standard deviation (\%). $\Delta$ represents the maximum percentage improvement of ISP variants over their respective baseline.}
\footnotesize
\scalebox{1.1}{
\begin{tabular}{l|c|c|c|c|c|c|c|c}
\hline
 \textbf{Method} & \textbf{Cora ML} & \textbf{Citeseer} & \textbf{Pubmed} & \textbf{DBLP} & \textbf{Film} & \textbf{Cornell} & \textbf{Wisconsin} & \textbf{Texas} \\
 \hline
 GraphSAGE (2017) & 86.52 ± 1.32 & 76.04 ± 1.30 & 88.45 ± 0.50 & 86.16 ± 0.50 & 34.23 ± 0.99 & 75.95 ± 5.01 & 81.18 ± 5.56 & 82.43 ± 6.14 \\
ISP-GraphSAGE$_\dagger$ & 87.23 ± 1.78 & 77.91 ± 0.92 & 88.51 ± 0.58 & 83.93 ± 0.60 & 35.89 ± 1.18 & 78.51 ± 5.98 & 87.38 ± 5.05 & 83.61 ± 6.96 \\
$\Delta$ (\%) & \textcolor{blue}{+0.98} & \textcolor{blue}{+2.89} & \textcolor{blue}{+0.07} & -2.10 & \textcolor{blue}{+6.31} & \textcolor{blue}{+6.45} & \textcolor{blue}{+7.64} & \textcolor{blue}{+7.39} \\
\hline
GCN (2018) & 87.07 ± 1.21 & 76.68 ± 1.64 & 86.74 ± 0.47 & 83.93 ± 0.34 & 30.26 ± 0.79 & 55.14 ± 8.46 & 61.60 ± 7.00 & 60.00 ± 6.45 \\
ISP-GCN$_\dagger$ & 87.29 ± 1.20 & 78.05 ± 1.70 & 89.01 ± 0.42 & 84.13 ± 0.53 & 35.10 ± 1.03 & 64.47 ± 8.94 & 66.75 ± 6.35 & 79.67 ± 3.53 \\
$\Delta$ (\%) & \textcolor{blue}{+0.25} & \textcolor{blue}{+1.89} & \textcolor{blue}{+2.62} & \textcolor{blue}{+0.82} & \textcolor{blue}{+15.99} & \textcolor{blue}{+22.71} & \textcolor{blue}{+8.36} & \textcolor{blue}{+32.78} \\
\hline
GAT (2018) & 84.12 ± 0.55 & 75.46 ± 1.72 & 87.24 ± 0.55 & 80.61 ± 1.21 & 26.28 ± 1.73 & 53.64 ± 11.1 & 60.00 ± 11.0 & 61.21 ± 8.17 \\
ISP-GAT$_\dagger$ & 87.14 ± 1.18 & 78.02 ± 1.32 & 88.05 ± 0.91 & 84.25 ± 0.63 & 32.67 ± 3.68 & 71.70 ± 4.47 & 62.75 ± 12.55 & 64.26 ± 2.44 \\
$\Delta$ (\%) & \textcolor{blue}{+3.83} & \textcolor{blue}{+4.04} & \textcolor{blue}{+1.09} & \textcolor{blue}{+4.52} & \textcolor{blue}{+26.33} & \textcolor{blue}{+33.67} & \textcolor{blue}{+6.25} & \textcolor{blue}{+4.98} \\
\hline
DIRGNN (2024) & 85.66 ± 0.31 & 77.71 ± 0.78 & 86.94 ± 0.55 & 81.22 ± 0.54 & 35.76 ± 1.68 & 76.51 ± 6.14 & 80.50 ± 5.50 & 76.25 ± 6.31 \\
ISP-DIRGNN$_\dagger$ & 86.38 ± 1.09 & 77.86 ± 1.73 & 89.16 ± 0.78 & 83.53 ± 0.54 & 36.27 ± 1.74 & 81.28 ± 4.34 & 88.63 ± 4.05 & 84.26 ± 3.13 \\
$\Delta$ (\%) & \textcolor{blue}{+1.68} & \textcolor{blue}{+1.12} & \textcolor{blue}{+2.55} & \textcolor{blue}{+2.91} & \textcolor{blue}{+1.85} & \textcolor{blue}{+6.23} & \textcolor{blue}{+11.49} & \textcolor{blue}{+13.30} \\
\hline
BEC-GCN (2025) & 88.65 ± 1.32 & 78.14 ± 1.31 & 87.00 ± 0.28 & 85.20 ± 0.48 & 34.34 ± 2.08 & 65.96 ± 9.03 & 66.25 ± 4.55 & 68.85 ± 11.02 \\
ISP-BEC-GCN$_\dagger$ & 89.11 ± 1.70 & 78.58 ± 1.68 & 88.21 ± 0.92 & 86.43 ± 0.87  & 35.09 ± 1.88  & 68.09 ± 9.06 & 73.75 ± 4.65 & 70.49 ± 10.89 \\
$\Delta$ (\%) & \textcolor{blue}{+0.51} & \textcolor{blue}{+0.56} & \textcolor{blue}{+1.39} & \textcolor{blue}{+1.44} & \textcolor{blue}{+2.18} & \textcolor{blue}{+3.23} & \textcolor{blue}{+11.32} & \textcolor{blue}{+2.38} \\
\hline
\end{tabular}
}
\label{tab:node_classification}
\end{table*}

\subsection{Ablation Studies}

\subsubsection{Oversmoothing Analysis}

Figure~\ref{fig:oversmoothing_performance} examines the robustness of ISP-GNN to oversmoothing as network depth increases. On both homophilic (Citeseer) and heterophilic (Texas) datasets, baseline GCN and GAT models exhibit severe performance degradation beyond 8 layers, with accuracy dropping dramatically due to oversmoothing. In contrast, ISP-GNN variants maintain substantially higher performance at deeper architectures, demonstrating that structural color embeddings alleviate the oversmoothing problem by preserving node distinguishability even when feature representations converge.

\begin{figure}[H]
    \centering
    
    % Legend at the top
    \begin{center}
        \begin{tikzpicture}
            % GCN
            \draw[red, line width=1.5pt] (0,0) -- (0.5,0) 
                node[pos=0.5] {\pgfuseplotmark{o}};
            \node[right, font=\small] at (0.6,0) {GCN};
            
            % GAT  
            \draw[blue, line width=1.5pt] (1.5,0) -- (2.0,0) 
                node[pos=0.5] {\pgfuseplotmark{square}};
            \node[right, font=\small] at (2.1,0) {GAT};
            
            % AD-GCN
            \draw[red, line width=1.5pt, dashed] (3.0,0) -- (3.5,0) 
                node[pos=0.5] {\pgfuseplotmark{triangle}};
            \node[right, font=\small] at (3.6,0) {ISP-GCN};
            
            % AD-GAT
            \draw[blue, line width=1.5pt, dashed] (5.2,0) -- (5.7,0) 
                node[pos=0.5] {\pgfuseplotmark{diamond}};
            \node[right, font=\small] at (5.8,0) {ISP-GAT};
        \end{tikzpicture}
    \end{center}
    
    \vspace{0.3cm}
    
    \begin{minipage}[b]{0.49\linewidth}
        \centering
        \resizebox{\linewidth}{!}{
        \begin{tikzpicture}
            \begin{axis}[    
                xlabel={Layer Depth},    
                ylabel={Accuracy\%},
                xlabel style={font=\normalsize},
                ylabel style={font=\normalsize},
                xmin=0, xmax=6,    
                ymin=0, ymax=85,    
                xtick={0, 1, 2, 3, 4, 5, 6},
                xticklabels={1, 2, 4, 8, 16, 32, 64},    
                ytick={0, 20, 40, 60, 80},    
                ymajorgrids=true,    
                grid style=dashed,
                ylabel style={font=\bfseries},
                xlabel style={font=\bfseries}
            ]
            \addplot[    
                color=red,    
                mark=o,
                line width=1.5pt,
                ]
                coordinates {(0, 74.24)(1, 76.68)(2, 76.17)(3, 60.78)(4, 11.38)(5, 1.71)(6, 1.71)
                };
            
            \addplot[    
                color=blue,    
                mark=square,
                line width=1.5pt,
                ]
                coordinates {(0, 73.86)(1, 75.46)(2, 74.35)(3, 60.97)(4, 14.86)(5, 1.9)(6, 1.59)
                };
            
            \addplot[    
                color=red,    
                mark=triangle,
                line width=1.5pt,
                dashed,
                ]
                coordinates {(0, 77.29)(1, 78.05)(2, 77.68)(3, 78.17)(4, 74.22)(5, 38.74)(6, 29.74)
                };
            
            \addplot[    
                color=blue,    
                mark=diamond,
                line width=1.5pt,
                dashed,
                ]
                coordinates {(0, 77.14)(1, 78.02)(2, 78.31)(3, 76.13)(4, 74.22)(5, 40.52)(6, 26.19)
                };
            \end{axis}
        \end{tikzpicture}
        }
        \caption*{(a) Citeseer}
    \end{minipage}\hfill
    \begin{minipage}[b]{0.49\linewidth}
        \centering
        \resizebox{\linewidth}{!}{
        \begin{tikzpicture}
            \begin{axis}[    
                xlabel={Layer Depth},    
                ylabel={Accuracy\%},
                xlabel style={font=\normalsize},
                ylabel style={font=\normalsize},
                xmin=0, xmax=6,    
                ymin=0, ymax=100,    
                xtick={0, 1, 2, 3, 4, 5, 6},
                xticklabels={1, 2, 4, 8, 16, 32, 64},    
                ytick={0, 20, 40, 60, 80, 100},    
                ymajorgrids=true,    
                grid style=dashed,
                ylabel style={font=\bfseries},
                xlabel style={font=\bfseries}
            ]
            \addplot[    
                color=red,    
                mark=o,
                line width=1.5pt,
                ]
                coordinates {(0, 59.01)(1, 60)(2, 59.55)(3, 57.48)(4, 39.91)(5, 25.5)(6, 27.39)
                };
            
            \addplot[    
                color=blue,    
                mark=square,
                line width=1.5pt,
                ]
                coordinates {(0, 59.54)(1, 61.21)(2, 58.28)(3, 57.65)(4, 41.71)(5, 18.64)(6, 6.76)
                };
            
            \addplot[    
                color=red,    
                mark=triangle,
                line width=1.5pt,
                dashed,
                ]
                coordinates {(0, 74.26)(1, 79.67)(2, 80.33)(3, 80.33)(4, 73.77)(5, 73.77)(6, 68.85)
                };
            
            \addplot[    
                color=blue,    
                mark=diamond,
                line width=1.5pt,
                dashed,
                ]
                coordinates {(0, 69.34)(1, 64.26)(2, 65.57)(3, 67.21)(4, 65.57)(5, 63.93)(6, 68.85)
                };
            \end{axis}
        \end{tikzpicture}
        }
        \caption*{(b) Texas}
    \end{minipage}
    
    \caption{ Oversmoothing comparison}
    \label{fig:oversmoothing_performance}
\end{figure}

\subsubsection{Component Analysis}

To assess each mechanism's contribution, we ablated core components of ISP-GNN: invariant-based stratification, triangle aggregation, and hierarchical structural heterogeneity encoding, evaluating them on four graph classification benchmarks. Results in Table ~\ref{tab:ablation} show triangle aggregation is the most critical element, with its removal leading to the largest performance drop, particularly in social networks. Invariant stratification follows as the second most important, while hierarchical gaps provide consistent, modest improvements. Overall, the combination of all three mechanisms yields greater performance than any individual component.

\begin{center}
\captionof{table}{Component analysis on graph classification. Results reported as mean accuracy±standard deviation (\%). The best results are highlighted in \textbf{bold}.}
\label{tab:ablation}
\scalebox{0.75}{
\begin{tabular}{lcccc}
\toprule
\textbf{Variant} & \textbf{MUTAG} & \textbf{PTC-MR} & \textbf{PROTEINS} & \textbf{IMDB-BINARY} \\
\midrule
WL Only & 92.8 ± 5.9 & 65.6 ± 6.5 & 78.8 ± 4.1 & 78.1 ± 3.5 \\
w/o Stratification & 95.2 ± 3.4 & 73.5 ± 6.2 & 79.2 ± 2.8 & 79.4 ± 4.2 \\
w/o Triangle Aggregation & 93.5 ± 4.1 & 71.5 ± 6.8 & 79.0 ± 3.1 & 78.8 ± 4.5 \\
w/o Hierarchical Gap & 94.2 ± 3.6 & 72.8 ± 6.4 & 79.3 ± 2.9 & 79.2 ± 4.3 \\
ISP-GNN$_\dagger$ & \textbf{97.5 ± 2.8} & \textbf{76.4 ± 5.7} & \textbf{80.2 ± 2.3} & \textbf{80.5 ± 3.9} \\
\bottomrule
\end{tabular}
}
\end{center}

\subsubsection{Scalability Analysis}

To assess the efficiency of ISP-GNN on large datasets, we evaluate its runtime and performance on ogbg-molpcba, a molecular property prediction benchmark with 437,929 graphs. For fair comparison, we use 5 layers with an embedding dimension of 300 across all models. Results are shown in Figure~\ref{fig:scalability}.

\begin{figure}[htbp]
\centering
\includegraphics[width=\columnwidth]{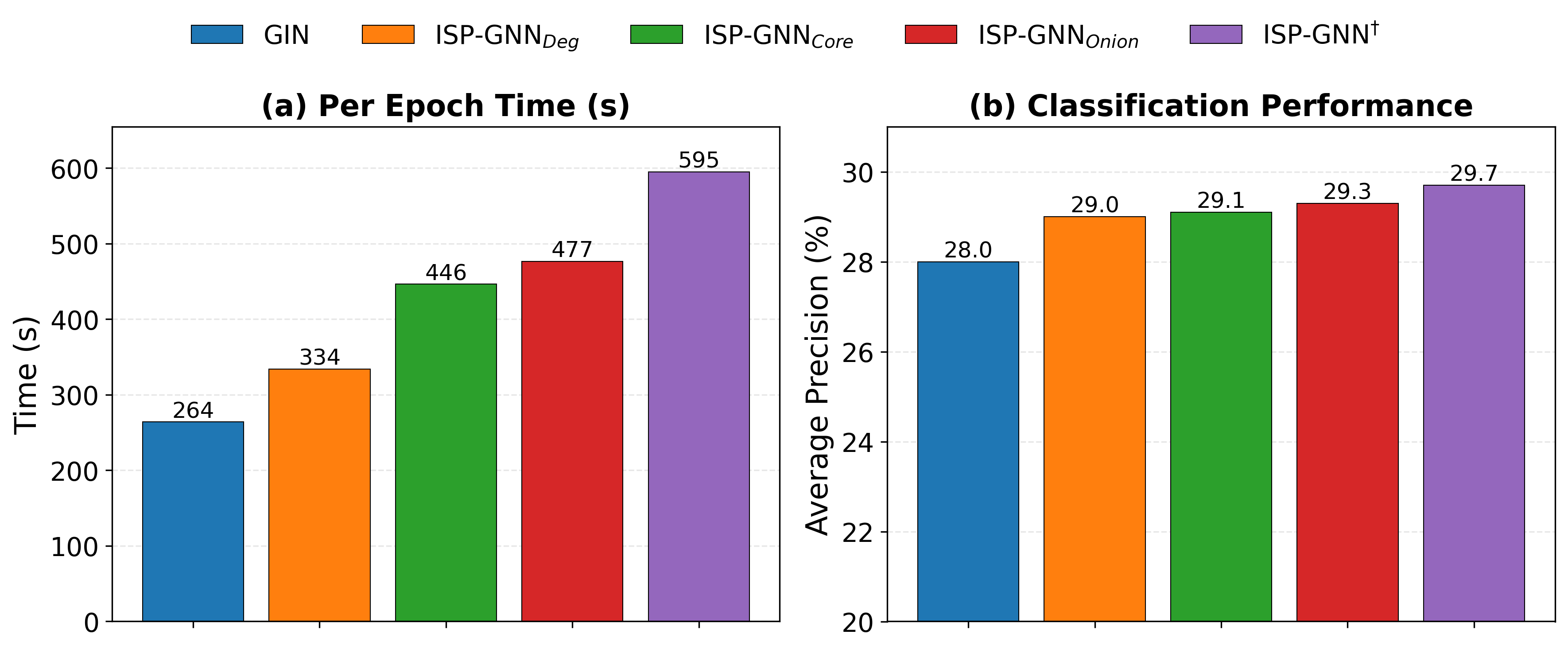}
\caption{Runtime and performance on ogbg-molpcba dataset. (a) Per epoch time in seconds. (b) Average precision score. Results show ISP-GNN with different structural invariants.}
\label{fig:scalability}
\end{figure}

ISP-GNN demonstrates practical scalability on large molecular graphs. Degree-based stratification adds only 26\% overhead to baseline GIN due to efficient invariant computation. Core decomposition incurs higher costs due to iterative peeling needed to identify k-cores, while onion decomposition further increases runtime by creating finer-grained hierarchical layers that require additional processing iterations. The learnable variant trades the highest computational cost for the best performance by optimising over multiple invariants. This spectrum enables practitioners to balance efficiency and expressivity based on application needs, with all variants remaining tractable on large datasets.

\subsubsection{Expressivity Comparison}

We evaluate the expressive power of ISP-WL using the BREC benchmark \cite{wang2024empirical}, reporting average processing time per graph pair alongside accuracy. Results (Table~\ref{tab:expressivity}) demonstrate that ISP-WL consistently outperforms standard WL variants such as 1-WL and SPD-WL \cite{wang2024empirical}. Performance scales with invariant sophistication: degree shows minimal gains, betweenness centrality (BC) provides moderate improvement, while complex invariants like average neighborhood clustering (ANC) achieve the strongest performance. Notably, ISP-WL variants maintain practical efficiency. Even ISP-WL$_{\text{ANC}}$ processes graphs seven times faster than 3-WL while achieving two-thirds of its expressivity. This invariant dependent expressivity directly validates Theorem~\ref{thm:invariant-expressivity}. Critically, when the invariant is incomparable to 3-WL, ISP-WL can distinguish graphs that 3-WL cannot while maintaining lower complexity, as demonstrated on regular graphs. While ISP-WL remains upper bounded by 3-WL with generic invariants on standard BREC, these results confirm that strategic invariant selection enables ISP-WL to complement higher-order methods efficiently.

\begin{table}[htbp]
\centering
\caption{Expressivity evaluation on BREC benchmark}
\label{tab:expressivity}
\small
\setlength{\tabcolsep}{3pt}
\begin{tabular}{@{}cc@{}}
\begin{minipage}{0.48\linewidth}
\centering
\subcaption{Overall Performance}
\label{tab:brec}
\vspace{2mm}
\begin{tabular}{lcc}
\toprule
\textbf{Method} & \textbf{Accuracy (\%)} & \textbf{Time (ms)} \\
\midrule
1-WL & 0.0 & 2.58\\
SPD-WL & 20.8 & 7.08\\
ISP-WL$_{\text{Deg}}$ & 23.3 & 4.35\\
ISP-WL$_{\text{BC}}$ & 30.5 & 7.59\\
ISP-WL$_{\text{ANC}}$ & 44.5 & 15.25\\
3-WL & 67.5 & 101.42\\
\bottomrule
\end{tabular}
\end{minipage}
&
\begin{minipage}{0.48\linewidth}
\centering
\subcaption{Regular Graph Performance}
\label{fig:regular}
\vspace{2mm}
\includegraphics[width=0.75\linewidth]{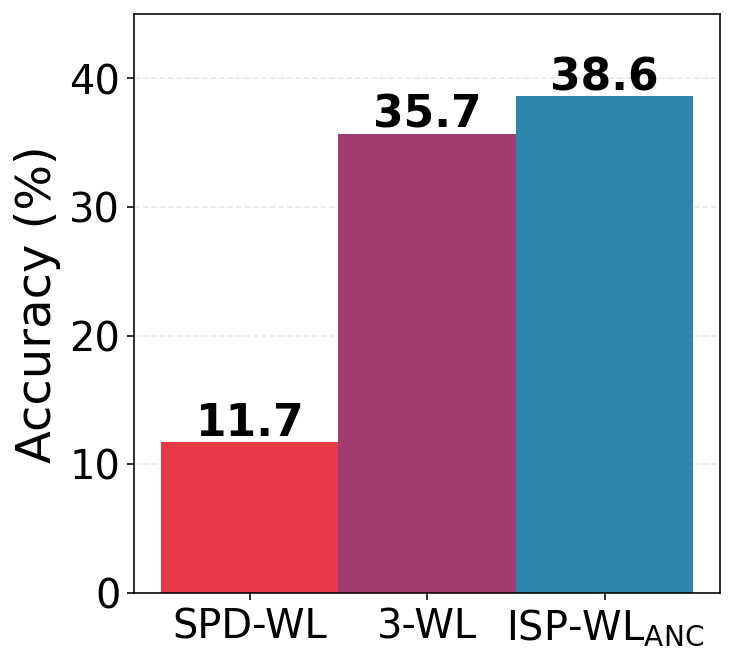}
\end{minipage}
\end{tabular}
\end{table}

\subsubsection{Additional Experiments}In the appendix, we present the following experiments: (1) Performance comparison of different graph invariants (Section \ref{sec:comp_inv}), (2) Ablation studies on Hierarchical Structural Gap Encoding, which include component analysis and extensions to higher-order motifs (Section \ref{sec:hie_gap}), (3) Task-Adaptive Structural Discovery and hyperparameter sensitivity of ISP-GNN$_\dagger$ (Section \ref{sec:learnable_inv}), (4) Scalability analysis of ISP-WL with diverse invariants (Section \ref{sec:scalability}), and (5) Visualization analysis on coloring refinement across invariants, invariant orthogonality, and convergence dynamics (Section \ref{sec:visualization}).

%% file: sections/conclusion.tex
\section{Conclusion, Limitations, and Future Work}

 We introduce Invariant-Stratified Propagation, a framework that transcends the 1-WL expressivity barrier by stratifying nodes hierarchically based on graph invariants. By encoding structural heterogeneity within higher-order patterns, our approach captures differences in how nodes occupy structural positions within local patterns. The framework comprises both a novel WL variant with formal expressivity guarantees and its efficient neural implementation with learnable invariants. Extensive experiments demonstrate consistent improvements over both standard message-passing architectures and state-of-the-art expressive baselines across diverse downstream tasks.

 An in-depth discussion of the limitations and potential future directions of our work is provided in Appendix section \ref{sec:limitations}. Additionally, for our learnable ISP-GNN variant, the number of hierarchical strata and the temperature annealing parameter are set as hyperparameters. While we provide sensitivity analysis to guide practitioners in selecting these values, we plan to learn them in a data-driven manner in our future work.

%% file: sections/appendix.tex
\section*{Appendix}

\section{Background} \label{sec:bac}

\subsection{Graph Neural Networks}
GNNs learn node representations through iterative neighborhood aggregation. At layer $k$, each node $v$ updates its embedding $h_v^{(k)}$ by combining its previous state with aggregated neighbor information:
\begin{equation}
h_v^{(k)} = \text{UPD}^{(k)}\left(h_v^{(k-1)}, \text{AGG}^{(k)}\left(\{h_u^{(k-1)} : u \in N(v)\}\right)\right)
\end{equation}
where $\text{AGG}(\cdot)$ is a permutation-invariant function that collects information from neighbors, and $\text{UPD}(\cdot)$ is a learnable transformation that produces the new node embedding \cite{xu2018powerful}. This message-passing framework maintains permutation equivariance and scales to graphs of varying sizes.

\subsection{Graph Isomorphism}

\begin{definition}[Graph Isomorphism]
Two graphs $G_1 = (V_1, E_1)$ and $G_2 = (V_2, E_2)$ are \textit{isomorphic}, denoted $G_1 \cong G_2$, if there exists a bijection $\pi : V_1 \to V_2$ such that:
\begin{equation}
(u, v) \in E_1 \iff (\pi(u), \pi(v)) \in E_2 \quad \forall u, v \in V_1
\end{equation}
The bijection $\pi$ is called an \textit{isomorphism}.
\end{definition}

Intuitively, isomorphic graphs have identical structure under node relabeling \cite{kobler2012graph}.

\subsection{Weisfeiler-Leman Test}
The 1-WL test \cite{weisfeiler1968reduction} provides a heuristic for graph isomorphism by iteratively refining node colors. Starting with uniform coloring $c^{(0)}(v) = c_0$ for all $v \in V$, each iteration updates colors based on neighbor color multisets: 
\begin{equation}
c^{(k)}(v) = \text{Hash}\left(c^{(k-1)}(v), \{\!\!\{c^{(k-1)}(u) : u \in N(v)\}\!\!\}\right)
\end{equation}
Two graphs are distinguished if their final color multisets differ. The 1-WL test bounds standard MPNN expressivity: with injective aggregation and uniform initialization, GNNs cannot distinguish graphs that 1-WL cannot distinguish \cite{xu2018powerful,morris2019weisfeiler}.

\subsection{Graph Invariants}

\begin{definition}[Graph Invariant]
A graph invariant is a property that is preserved under graph isomorphisms. For node-level invariants, this means a function $\mu: V \to \mathbb{R}$ such that for any graph isomorphism $\pi: V \to V$, we have $\mu(\pi(v)) = \mu(v)$ for all $v \in V$.
\end{definition}

This ensures that the invariant respects the graph's intrinsic structure: nodes with identical structural roles receive identical values, independent of node labeling. Such standard invariants include node degree, core number, onion number, clustering coefficient, PageRank, and betweenness centrality \cite{malliaros2020core,hebert2016multi,zhao2018ranking}. 

\section{Correspondence Between ISP-WL and ISP-GNN} \label{sec:correspondence}

We establish that ISP-GNN faithfully implements the combinatorial principles of ISP-WL. The neural architecture directly mirrors the discrete algorithm's dual-stream structure, with each ISP-WL operation having a continuous differentiable analog in ISP-GNN:

We formalize this correspondence through the following expressivity result:

\begin{theorem}[ISP-WL $\subseteq$ ISP-GNN Expressivity]
\label{thm:correspondence}
If ISP-GNN uses injective aggregation and update functions with sufficient embedding dimension, then ISP-WL distinctions are preserved:
\[
\text{ISP-WL}_K(G_1) \neq \text{ISP-WL}_K(G_2) 
\]
implies there exist parameters $\theta^*$ such that
\[
\text{ISP-GNN}_K(G_1; \theta^*) \neq \text{ISP-GNN}_K(G_2; \theta^*)
\]
\end{theorem}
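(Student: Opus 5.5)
The plan is the standard GIN-style simulation argument (as in Xu et al.), by induction on the iteration index $k$. I will construct parameters $\theta^*$ so that ISP-GNN's state at layer $k$ determines the ISP-WL state at iteration $k$. The induction hypothesis is the existence of injective maps $f_k$, $g_k$, uniform over the finite set of nodes in $G_1 \cup G_2$, with
\[
h_v^{(k),\text{WL}} = f_k\bigl(c^{(k)}_{WL}(v)\bigr), \qquad h_v^{(k),\text{ISP}} = g_k\bigl(c^{(k)}_{ISP}(v)\bigr).
\]
Here $g_k(\bot)=0$ and $g_k$ never maps a genuine color to $0$. This is a simplification, since we really only need equality of embeddings to imply equality of colors. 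Because the whole comparison set is finite, only finitely many colors occur. Standard results then give multiset functions that are injective on bounded multisets of vectors from a countable set, e.g. sum aggregation with a suitable encoding followed by an MLP, or the universal-approximation construction. This reduces each step to checking that the neural update has the same arguments as the hash in Algorithm~\ref{alg:isp-wl}.

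For the base case, take constant or identical input features $x_v$, so that $h^{(0),\text{WL}}$ corresponds to $c_{WL}=0$; and $h^{(0),\text{ISP}}=0$ corresponds to $\bot$.

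For the WL step, the algorithm hashes the pair $(c_{WL}(v),c_{ISP}(v))$ together with the multiset $\{\!\!\{(c_{WL}(u),c_{ISP}(u))\}\!\!\}$ over neighbors. ISP-GNN applies UPD to $h^{\text{comb}}_v$ and $\text{AGG}_{u\in N(v)} h^{\text{comb}}_u$. Since concatenation of injective codes is injective on pairs, an injective AGG/UPD pair reproduces the hash injectively.

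For the ISP step at layer $k=\ell$, the gate $\gamma_v^{(k)}$ fires exactly when $\phi(v)=k$ and $h_v^{(k-1),\text{ISP}}=0$. By the hypothesis $g(\bot)=0$ and the Single Assignment Property, this is exactly the set $V_\ell$ of nodes whose ISP color is $\bot$ at that iteration. For such $v$, I will argue that
\[
m^{\text{struct}}_{vuw}=\alpha_{vuw}\,\text{MLP}_{\text{tri}}\bigl(\text{AGG}(h_u,h_w)\bigr)
\]
can be chosen injective in the triple $(\{c_{ISP}(u),c_{ISP}(w)\},\delta(v,u,w))$. The symmetry required here is supplied by Proposition~\ref{prop:permutationinv} together with permutation-invariant AGG. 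The triangle-level AGG is then injective over the multiset $M^{\text{struct}}_v$. The dependence on $\phi(v)$ is recovered from the layer index $k$, which I will fold in either by layer-specific parameters or by the structural features. The empty case maps to $\text{Embed}(k)$, which is chosen injective and distinct from all triangle outputs and from $0$. Nodes with the gate off copy their previous embedding, matching persistence in ISP-WL.

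The main obstacle is the multiplicative attention. A scalar weight $\alpha_{vuw}\in[0,1]$ times a vector is not obviously injective in $(d_{vuw},\text{message})$, and $\alpha$ could vanish. My first approach is to choose $\text{MLP}_{\text{struct}}$ so that $\alpha$ takes distinct values in a small interval bounded away from $0$, one value per realized $\delta$. I will also make $\text{MLP}_{\text{tri}}$ output vectors with one coordinate fixed to $1$, so that $\alpha$ is readable from that coordinate and the remaining coordinates are readable after dividing by it. This is possible because only finitely many gap triples occur, and the map $(\alpha,z)\mapsto \alpha\cdot[1\|z]$ is injective. Combining this with a sum-injective encoding gives injectivity of the triangle aggregate.

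Finally, the output $\text{MLP}_{\text{out}}$ on $[h^{\text{WL}}\|h^{\text{ISP}}]$ is taken injective on the finite image, mirroring $\text{Hash}(c_{WL},c_{ISP})$, and a sum-based readout is used for graph-level comparison. Hence differing color multisets yield differing outputs. All constructions are over finitely many inputs, so a single $\theta^*$ works simultaneously for $G_1$ and $G_2$.
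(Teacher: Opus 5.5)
Your proposal is correct and follows the same route as the paper's proof. Both argue by induction over layers that an injective correspondence between ISP-WL colors and ISP-GNN embeddings is maintained through injective aggregation and update, and then finish with an injective graph-level readout. Your version is more careful than the paper's sketch: it explicitly handles the $\bot \leftrightarrow 0$ gating correspondence, the multiplicative attention $\alpha_{vuw}$, and the recovery of $\phi(v)$ inside the ISP hash via layer-specific parameters, none of which the paper's argument addresses.
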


\begin{proof}
We construct an injective mapping $\Psi^{(k)}: \mathcal{C}^{(k)} \to \mathbb{R}^d$ from ISP-WL colors to ISP-GNN embeddings by induction. At iteration zero, we map initial colors to input features. For the inductive step, assume $\Psi^{(k-1)}$ is injective. Since ISP-WL updates colors by hashing the current color, neighbor colors, and structural information, and ISP-GNN updates embeddings through injective aggregation and transformation of these same components, different colors produce different embeddings. By the inductive hypothesis, different neighbor colors yield different neighbor embeddings, which are aggregated injectively. With sufficient capacity, the update function implements an injective mapping on the finite color space, ensuring $\Psi^{(k)}$ is injective. At the graph level, distinct color multisets map to distinct embedding multisets under $\Psi^{(K)}$, which are distinguished by injective graph-level aggregation.
\end{proof}

This result establishes ISP-GNN as at least as expressive as ISP-WL, validating the neural architecture as a principled implementation of the discrete algorithm while enabling end-to-end learning.

\begin{table}[htbp]
\centering
\caption{Correspondence between ISP-WL algorithm and ISP-GNN neural implementation.}
\label{tab:isp_correspondence}
\begin{tabular}{ll}
\toprule
\textbf{ISP-WL Component} & \textbf{ISP-GNN Component} \\
\midrule
Hash-based color refinement & Neural aggregation + MLP \\
Multiset $\{\!\!\{c(u): u\in N(v)\}\!\!\}$ & $\text{AGG}\{h_u: u\in N(v)\}$ \\
Gap encoding $\delta(v,u,w)$ & Gap features $d_{vuw}$ \\
Layer assignment at $\phi(v)$ & Gating $\gamma^{(k)}_v = \mathbb{I}[\phi(v)=k]$ \\
Color space $\mathcal{C}^{(k)}$ & Embedding space $\mathbb{R}^d$ \\
\bottomrule
\end{tabular}
\end{table}

\section{Design Rationale and Extensions of Hierarchical Gap} \label{sec:hie_gap}

In this section, we discuss the design rationale for our hierarchical gap-encoding mechanism and its possible extensions beyond triangles.

\subsection{Hierarchical Gap Encoding Properties}

The three-component gap encoding $\delta(v,u,w) = (\delta_1, \delta_2, \delta_3)$ is designed to capture the complete structural heterogeneity of triangles through three complementary aspects:

\paragraph{\textbf{Hierarchical Dominance ($\delta_1$):}} The largest gap $\delta_1 = \max(\phi(v) - \phi(u), \phi(v) - \phi(w))$ measures how far the center node is ahead of its most distant neighbor in the structural ordering. Large positive values indicate hub-like roles, while negative values suggest peripheral positioning.

\paragraph{\textbf{Hierarchical Breadth ($\delta_2$):}} The smallest gap $\delta_2 = \min(\phi(v) - \phi(u), \phi(v) - \phi(w))$ measures the gap to the closest neighbor. This indicates whether the center is far from both neighbors (large $|\delta_2|$) or close to at least one neighbor (small $|\delta_2|$).

\paragraph{\textbf{Neighbor Homogeneity ($\delta_3$):}} The inter-neighbor gap $\delta_3 = |\phi(u) - \phi(w)|$ measures structural similarity between neighbors, independent of the center's position. Large values indicate the center connects structurally dissimilar regions, while small values suggest local community structure.

These three components form a minimal sufficient statistic: given $\phi(v)$ and $\delta$, the multiset $\{\phi(v), \phi(u), \phi(w)\}$ can be recovered, yet no component is redundant. Moreover, by Proposition~\ref{prop:permutationinv}, the encoding satisfies $\delta(v,u,w) = \delta(v,w,u)$, enabling efficient hash-based coloring in ISP-WL and consistent neural processing.

\begin{figure}[htbp]
\centering
\begin{subfigure}[b]{0.7\columnwidth}
    \centering
    \includegraphics[width=\textwidth]{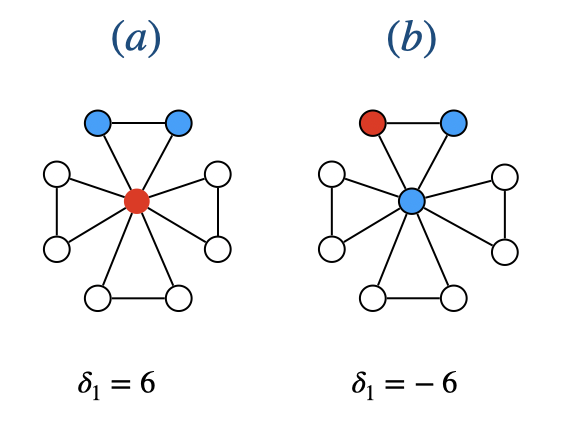}
    \caption{Hierarchical dominance ($\delta_1$)}
    \label{fig:hierarchical_dominance}
\end{subfigure}

\vspace{0.5cm}

\begin{subfigure}[b]{0.9\columnwidth}
    \centering
    \includegraphics[width=\textwidth]{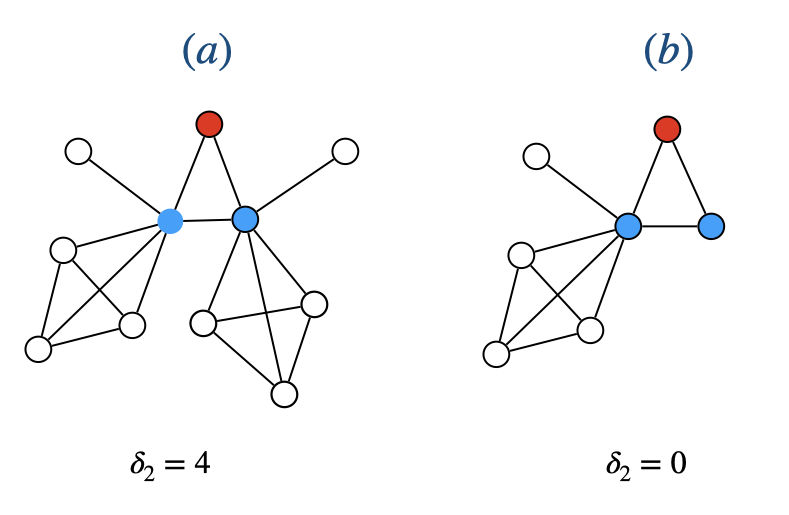}
    \caption{Hierarchical breadth ($\delta_2$)}
    \label{fig:hierarchical_breadth}
\end{subfigure}

\vspace{0.5cm}

\begin{subfigure}[b]{0.7\columnwidth}
    \centering
    \includegraphics[width=\textwidth]{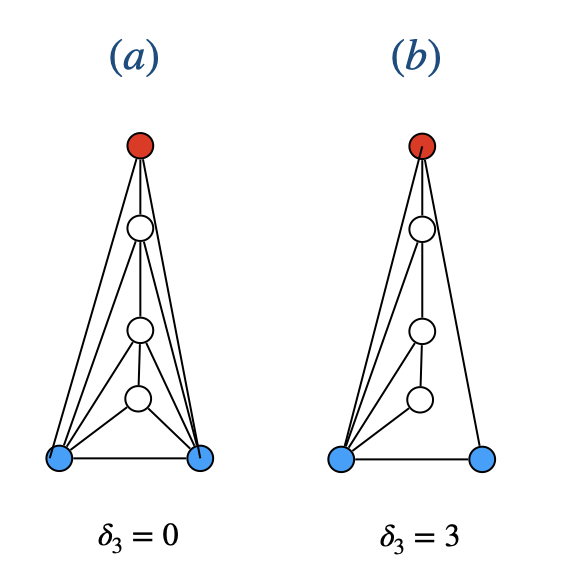}
    \caption{Neighbor homogeneity ($\delta_3$)}
    \label{fig:neighbor_homogeneity}
\end{subfigure}

\caption{Gap encoding components capture complementary structural properties using degree as invariant. (a) $\delta_1$ differentiates hub vs. peripheral roles. (b) $\delta_2$ captures structural proximity to the closest neighbour. (c) $\delta_3$ measures neighbor similarity.}
\label{fig:gap_components}
\end{figure}

\subsection{Extension to Higher-Order Motifs}

The gap encoding can be generalised from triangles to arbitrary higher-order patterns. For any motif with a designated center node $v$ and participating neighbors $\{u_1, \ldots, u_{m}\}$, we compute:
\begin{equation}
\delta^{\text{center}}(v) = \text{sort}([\phi(v) - \phi(u_i)]_{i=1}^{m}) \in \mathbb{R}^{m}
\label{eq:center_gaps}
\end{equation}
which captures sorted center-to-neighbor gaps, and
\begin{equation}
\delta^{\text{inter}} = \text{sort}([|\phi(u_i) - \phi(u_j)|]_{(i,j) \in E_M}) \in \mathbb{R}^{|E_M|}
\label{eq:inter_gaps}
\end{equation}
which captures gaps between connected neighbors in the motif, where $E_M$ denotes edges among neighbors within the motif. Sorting ensures permutation invariance.

For triangles, this reduces to our original encoding with $\delta^{\text{center}} = (\delta_1, \delta_2)$ and $\delta^{\text{inter}} = (\delta_3)$. For $k$-cliques, where all neighbors are connected, $|E_M| = \binom{k-1}{2}$ and we obtain $\frac{(k-1)k}{2}$ total values. For other motifs, such as paths or stars, the number of inter-neighbor gaps depends on the motif's edge structure.

While this framework extends to arbitrary patterns, we focus on triangles for computational efficiency.

\section{Graph Invariants} \label{sec:invariants}

ISP-WL supports any graph invariant that satisfies permutation equivariance. Table~\ref{tab:invariants} summarizes the invariants used in our experiments along with their computational complexity.

\begin{table*}[h]
\centering
\caption{Graph invariants used in our experiments and their computational complexities on sparse graphs with bounded degeneracy.}
\label{tab:invariants}
\begin{tabular}{lll}
\toprule
\textbf{Invariant} & \textbf{Complexity} & \textbf{Description} \\
\midrule
Degree & $O(|V|)$ & Number of neighbors \\
K-Core \cite{malliaros2020core} & $O(|V| + |E|)$ & Maximal core membership \\
Onion \cite{hebert2016multi} & $O(|V| + |E|)$ & Hierarchical k-core layers \\
Clustering Coefficient \cite{li2017clustering} & $O(d|E|)$ & Local triangle density ratio \\
Average Neighborhood Clustering \cite{wilson2018calculating} & $O(d^2|E|)$ & Mean clustering coefficient of neighbors \\
K-Truss \cite{wu2018k} & $O(|E|^{1.5})$ & Triangle-support decomposition \\
PageRank \cite{kim2002improved} & $O(k(|V| + |E|))$ & Stationary distribution ($k$ iterations) \\
Eigenvector \cite{bonacich1987power} & $O(k(|V| + |E|))$ & Principal eigenvector ($k$ iterations) \\
Betweenness \cite{barthelemy2004betweenness} & $O(|V||E|)$ & Shortest path centrality \\
\bottomrule
\end{tabular}
\end{table*}

\section{Proofs}

In this section, we provide detailed proofs for all theoretical results presented in the main content of the paper.

\subsection{Methodology Section}

\permutationinv*

\begin{proof}
By definition, $\delta(v, u, w) = (\delta_1, \delta_2, \delta_3)$ where:
\begin{align*}
\delta_1 &= \max(\phi(v) - \phi(u), \phi(v) - \phi(w)) \\
\delta_2 &= \min(\phi(v) - \phi(u), \phi(v) - \phi(w)) \\
\delta_3 &= |\phi(u) - \phi(w)|
\end{align*}

Since $\max$ and $\min$ are symmetric functions and absolute value is symmetric, we have:
\begin{align*}
\delta_1(v, u, w) &= \max(\phi(v) - \phi(u), \phi(v) - \phi(w)) \\
&= \max(\phi(v) - \phi(w), \phi(v) - \phi(u)) \\
&= \delta_1(v, w, u) \\
\delta_2(v, u, w) &= \min(\phi(v) - \phi(u), \phi(v) - \phi(w)) \\
&= \min(\phi(v) - \phi(w), \phi(v) - \phi(u)) \\
&= \delta_2(v, w, u) \\
\delta_3(v, u, w) &= |\phi(u) - \phi(w)| \\
&= |\phi(w) - \phi(u)| = \delta_3(v, w, u)
\end{align*}

Therefore, $\delta(v, u, w) = \delta(v, w, u)$.
\end{proof}

\singleassign*

\begin{proof}
At iteration $\ell$, the algorithm processes nodes in $V_\ell = \{v \in V : \phi(v) = \ell\}$ where $\ell \in \{1, 2, \ldots, L\}$. For node $v$ with invariant value $\phi(v) \in \{1, \ldots, L\}$, we have $v \in V_\ell$ if and only if:
\begin{equation*}
\phi(v) = \ell
\end{equation*}

Since $\phi(v)$ is fixed and takes exactly one value from $\{1, \ldots, L\}$, there exists exactly one iteration $\ell^* = \phi(v)$ where $v \in V_{\ell^*}$.

At iteration $\ell^*$, the ISP color is assigned:
\begin{equation*}
c_{\text{ISP}}(v) \gets \begin{cases}
\text{Hash}(\phi(v), M^{\text{struct}}_v) & \text{if } M^{\text{struct}}_v \neq \{\!\!\{\}\!\!\} \\
\phi(v) & \text{otherwise}
\end{cases}
\end{equation*}

For all $\ell \neq \ell^*$, we have $\phi(v) \neq \ell$, thus $v \notin V_\ell$ and $c_{\text{ISP}}(v)$ is not modified.
\end{proof}

\structcons*

\begin{proof}
For any automorphism $\pi : V \to V$, which maps structurally equivalent nodes to each other:
\begin{align*}
\phi_{\text{learn}}(\pi(v)) &= L \cdot \sigma(\text{MLP}_{\phi}([\phi_1(\pi(v)), \ldots, \phi_m(\pi(v))])) \\
&= L \cdot \sigma(\text{MLP}_{\phi}([\phi_1(v), \ldots, \phi_m(v)])) \\
&= \phi_{\text{learn}}(v)
\end{align*}
where the second equality holds because each $\phi_i$ is a graph invariant. Since the floor operation preserves equality, $\phi_{\text{discrete}}(\pi(v)) = \phi_{\text{discrete}}(v)$ as well.
\end{proof}

\subsection{Expressivity}

\beyondwl*

\begin{proof}
First, we show that if 1-WL distinguishes any graph pair $G_1$ and $G_2$, then ISP-WL also distinguishes them. 

Recall that ISP-WL maintains a WL stream that performs standard 1-WL refinement at each iteration. Specifically, for each node $v$, the WL color is updated as:
\[
c'_{\mathit{WL}}(v) \gets \mathrm{Hash}((c_{\mathit{WL}}(v), c_{\mathit{ISP}}(v)), m_v)
\]
where $m_v = \{(c_{\mathit{WL}}(u), c_{\mathit{ISP}}(u)) : u \in N(v)\}$ is the multiset of neighbor colors. When the ISP stream is disabled (i.e., $c_{\mathit{ISP}}(v) = \bot$ for all $v$), the WL refinement reduces to standard 1-WL. Since ISP-WL includes the WL stream and only adds additional information through the ISP stream, if $c_{\mathit{1\text{-}WL}}(G_1) \neq c_{\mathit{1\text{-}WL}}(G_2)$, then the WL stream in ISP-WL will distinguish them at some iteration $k$.
The final ISP-WL coloring is $c(v) = \mathrm{Hash}(c_{\mathit{WL}}(v), c_{\mathit{ISP}}(v))$, which incorporates $c_{\mathit{WL}}(v)$.
Therefore, if the WL colors differ, the final ISP-WL colors must differ. Thus, ISP-WL is at least as expressive as 1-WL.

To establish strictly higher expressivity, it suffices to show at least one non-isomorphic graph pair that 1-WL cannot distinguish but ISP-WL can. Figure~\ref{fig:separation-examples} provides three such examples. Therefore, ISP-WL is strictly more expressive than 1-WL.

\end{proof}

\nodedist*

\begin{proof}We first prove that for any nodes $u, v$, if $c_{\text{1-WL}}(u) \neq c_{\text{1-WL}}(v)$, then $c_{\text{ISP-WL}}(u) \neq c_{\text{ISP-WL}}(v)$.  ISP-WL maintains a WL stream that performs standard 1-WL refinement at each iteration:
\begin{align*}
c^{(k)}_{\text{WL}}(v) = \text{Hash}\big(&(c^{(k-1)}_{\text{WL}}(v), c^{(k-1)}_{\text{ISP}}(v)), \\
&\{\!\!\{(c^{(k-1)}_{\text{WL}}(u), c^{(k-1)}_{\text{ISP}}(u)) : u \in N(v)\}\!\!\}\big)
\end{align*}

When ISP colors are ignored (all $c_{\text{ISP}} = \bot$), this update rule reduces to standard 1-WL refinement. Since ISP-WL only adds information via the ISP stream and uses injective hashing, any distinction made by standard 1-WL is preserved in the WL stream of ISP-WL. More precisely, if standard 1-WL distinguishes nodes $u$ and $v$ at iteration $k$, meaning $c^{(k)}_{\text{1-WL}}(u) \neq c^{(k)}_{\text{1-WL}}(v)$, then their WL colors in ISP-WL must also differ: $c^{(k)}_{\text{WL}}(u) \neq c^{(k)}_{\text{WL}}(v)$.  The final ISP-WL coloring combines both streams via $c_{\text{ISP-WL}}(v) = \text{Hash}(c_{\text{WL}}(v), c_{\text{ISP}}(v))$. By the injectivity of the hash function, if the WL components differ, the final colorings must differ. Therefore, we conclude that $c_{\text{1-WL}}(u) \neq c_{\text{1-WL}}(v)$ implies $c_{\text{ISP-WL}}(u) \neq c_{\text{ISP-WL}}(v)$.

To prove ISP-WL can distinguish nodes even when $c_{\text{1-WL}}(u) = c_{\text{1-WL}}(v)$, it is sufficient to provide a graph pair that contains two such nodes. Consider graphs $G_1$ and $G_2$ in Figure~\ref{fig:graph-pair} with identical degree sequences. Let $u \in G_1$ and $v \in G_2$ denote any nodes. Both have degree 2 with identical neighborhood degree sequences, thus $c_{\text{1-WL}}(u) = c_{\text{1-WL}}(v)$.

\begin{figure}[h]
\centering
\includegraphics[width=0.2\textwidth]{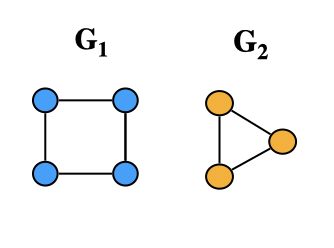}
\caption{Graphs $G_1$ and $G_2$ with identical degree sequences but different triangle structures.}
\label{fig:graph-pair}
\end{figure}

However, $|T(u)| = 0 \neq 1 = |T(v)|$. ISP-WL computes:
$$M^{\text{struct}}_{u} = \{\!\!\{(c_{\text{ISP}}(u'), c_{\text{ISP}}(u''), \delta(u, u', u'')) : (u',u'') \in T(u)\}\!\!\}$$
Since $|M^{\text{struct}}_{u}| \neq |M^{\text{struct}}_{v}|$, by injective hashing:
$$c_{\text{ISP}}(u) \neq c_{\text{ISP}}(v)$$
Therefore, $c_{\text{ISP-WL}}(u) \neq c_{\text{ISP-WL}}(v)$ despite $c_{\text{1-WL}}(u) = c_{\text{1-WL}}(v)$.
\ 
\end{proof}

\invexpress*

\begin{proof}
We first prove statement 1. Theorem~\ref{thm:beyond-1wl} shows that ISP-WL strictly extends 1-WL expressivity, giving us $\text{1-WL} \prec \text{ISP-WL}(\phi)$ for any $\phi$. For the upper bound, if $\phi \preceq \text{1-WL}$, it can be computed by 1-WL coloring. ISP-WL uses information from 1-WL colors and triangle aggregation. Since 3-WL distinguishes graphs based on all triplets \cite{chen2025enhanced}, and ISP-WL operates on local triplets (neighborhoods) rather than globally, its expressivity is bounded by 3-WL.

Then, we prove statement 2. By definition of incomparability, $\phi$ distinguishes some graph pairs that $k$-WL cannot, and vice versa. We show ISP-WL($\phi$) inherits both capabilities. For the first direction, if $\phi$ distinguishes graphs $G_1, G_2$ but $k$-WL does not, then their $\phi$-value multisets differ: $\{\!\!\{\phi(u) : u \in V(G_1)\}\!\!\} \neq \{\!\!\{\phi(v) : v \in V(G_2)\}\!\!\}$. Since ISP-WL incorporates $\phi$ values through stratification and the ISP stream assigns $c_{\text{ISP}}(u) = \phi(u)$ for nodes without triangles, the final color multisets differ: $\{\!\!\{c_{\text{ISP-WL}}(u) : u \in V(G_1)\}\!\!\} \neq \{\!\!\{c_{\text{ISP-WL}}(v) : v \in V(G_2)\}\!\!\}$. For the second direction, if $k$-WL distinguishes graphs $H_1, H_2$ but $\phi$ does not, then their $\phi$-value multisets are identical. ISP-WL's expressivity is constrained by its WL stream (bounded by 1-WL, which is weaker than $k$-WL for $k > 1$) and its ISP stream (derived from $\phi$ and triangle aggregation). Since $\phi$ cannot distinguish $H_1, H_2$, the stratification and gap encodings are identical, and the WL stream alone cannot compensate for higher-order patterns that $k$-WL captures through tuple-level coloring. Therefore, ISP-WL($\phi$) and $k$-WL are incomparable.

For statement 3, assume $k$-WL $\preceq \phi$, meaning any graphs distinguished by $k$-WL are also distinguished by $\phi$. We show that ISP-WL($\phi$) preserves all distinctions made by $\phi$, and hence by $k$-WL. Consider any non-isomorphic graphs $G_1 \not\cong G_2$ that $k$-WL distinguishes. By assumption, $\phi$ also distinguishes them: $\{\!\!\{\phi(u) : u \in V(G_1)\}\!\!\} \neq \{\!\!\{\phi(v) : v \in V(G_2)\}\!\!\}$. The ISP stream directly incorporates $\phi$ values through two mechanisms: (1) stratification into layers $\ell \in \{1, \ldots, L\}$ based on $\phi$-value rankings, and (2) base ISP colors where $c_{\text{ISP}}(u) = \phi(u)$ for nodes without triangles. For nodes with triangles, ISP colors are computed as $c_{\text{ISP}}(u) = \text{Hash}(\phi(u), M^{\text{struct}}_u)$, which still incorporates $\phi(u)$ directly. By injectivity of the hash function, if $\phi(u) \neq \phi(v)$, then $c_{\text{ISP}}(u) \neq c_{\text{ISP}}(v)$ regardless of structural messages. Therefore, different $\phi$ multisets lead to different ISP color multisets, which are preserved in the final ISP-WL coloring: $\{\!\!\{c_{\text{ISP-WL}}(u) : u \in V(G_1)\}\!\!\} = \{\!\!\{\text{Hash}(c_{\text{WL}}(u), c_{\text{ISP}}(u)) : u \in V(G_1)\}\!\!\} \neq \{\!\!\{\text{Hash}(c_{\text{WL}}(v), c_{\text{ISP}}(v)) : v \in V(G_2)\}\!\!\}$. Thus, any pair distinguished by $k$-WL is distinguished by $\phi$, which in turn is distinguished by ISP-WL($\phi$), giving us $k\text{-WL} \preceq \text{ISP-WL}(\phi)$. The proof is complete.
   
\end{proof}

\subsection{Convergence and Complexity}

\convergence*

\begin{proof}
The algorithm maintains two streams: the WL stream and the ISP stream. The WL refinement operates on a finite color space bounded by $|V|$. The refinement is monotonic: distinct colors remain distinct. By standard 1-WL convergence results, this reaches a fixed point in at most $K_{\text{WL}} \leq |V|$ iterations. In ISP Stream, since $\phi(v) \in \{1, 2, \ldots, L\}$ for all $v \in V$, there are exactly $L$ distinct invariant layers. At iteration $\ell \in \{1, \ldots, L\}$, the algorithm processes nodes in $V_\ell = \{v \in V : \phi(v) = \ell\}$. By the Single Assignment Property, each node is processed exactly once when its layer $\ell = \phi(v)$ is reached. The ISP stream completes after processing all $L$ layers.

Therefore, the algorithm terminates in $K = \max(K_{\text{WL}}, L) \leq |V|$ since $L \leq |V|$.
\end{proof}

\complexity*

\begin{proof}
Computing $\phi(v)$ for all nodes takes $O(|V|)$ for simple invariants like degree, or $O(|E| $ for invariants like k-core and onion.

For each node $v$, collecting the neighbor multiset requires $O(d_v)$ time. Summing over all nodes: $\sum_{v \in V} O(d_v) = O(|E|)$. Computing the hash takes $O(1)$ per node, giving $O(|V|)$ total. Thus, WL refinement per iteration takes $O(|V| + |E|)$, and over $K$ iterations: $O(K(|V| + |E|))$.

By the Single Assignment Property, each node is processed once. For node $v$, enumerating triangles $T(v)$ requires checking all neighbor pairs: $O(d_v^2)$. Computing hierarchical gap encodings and hashing for $|T(v)|$ triangles takes $O(|T(v)|)$. Total per node: $O(d_v^2 + |T(v)|) = O(d_v^2)$ since $|T(v)| \leq \binom{d_v}{2}$. Summing over all nodes, the total triangle enumeration cost is $O(T)$ since each triangle is enumerated three times (once per node).

The total complexity is $O(|E| \log |V|) + O(K(|V| + |E|)) + O(T) = O(K(|V| + |E|) + T)$, where preprocessing is dominated by iteration costs for $K \geq 1$.

For graphs with degeneracy $d$, we have $T \leq d \cdot |E|$. When $d = O(1)$, we can conclude that $O(K(|V| + |E|) + T) = O(K(|V| + |E|) + d|E|) = O(K(|V| + |E|))$.
\end{proof}

\subsection{Oversmoothing Resistance}

\oversmoothing*

\begin{proof}
The persistence $h_v^{(k),\text{ISP}} = \bar{h}_v^{\text{ISP}}$ for $k \geq \phi(v)$ follows from the gating mechanism:
\[
\gamma_v^{(k)} = \mathbb{I}[\phi(v) = k] \cdot \mathbb{I}[\|h_v^{(k-1),\text{ISP}}\|_2 = 0]
\]
For $k = \phi(v)$, both indicators are true, so $\gamma_v^{(k)} = 1$ and $h_v^{(k),\text{ISP}}$ is assigned. For $k > \phi(v)$, the first indicator is false, yielding $\gamma_v^{(k)} = 0$ and $h_v^{(k),\text{ISP}} = h_v^{(k-1),\text{ISP}} = \bar{h}_v^{\text{ISP}}$ by induction.

Since $\bar{h}_u^{\text{ISP}}$ and $\bar{h}_v^{\text{ISP}}$ are depth-independent and distinct, the concatenated inputs differ:
\[
[h_u^{(K),\text{WL}} \| \bar{h}_u^{\text{ISP}}] - [h_v^{(K),\text{WL}} \| \bar{h}_v^{\text{ISP}}] = [h_u^{(K),\text{WL}} - h_v^{(K),\text{WL}} \| \bar{h}_u^{\text{ISP}} - \bar{h}_v^{\text{ISP}}]
\]
The ISP component $\bar{h}_u^{\text{ISP}} - \bar{h}_v^{\text{ISP}} \neq 0$ ensures the inputs remain distinct regardless of WL stream behavior. If $\text{MLP}_{\text{out}}$ is injective, distinct inputs produce distinct outputs.
\end{proof}

The depth-invariant structural information $\bar{h}_v^{\text{ISP}}$ acts as an anchor, preventing feature collapse in deep networks while the WL stream continues refining higher-order patterns.
\section{Experimental Design} \label{sec:exp_des}

\subsection{Downstream Tasks}

We evaluate ISP-GNN across three fundamental graph learning tasks that assess different aspects of the model's expressive power and generalization capabilities:

\subsubsection{Node Classification} The goal is to predict labels for individual nodes using both node features and graph structure. 

\subsubsection{Influence Estimation} Formulated as node-level regression, this task predicts the activation probability of each node under information diffusion models (IC, LT, and SIS). 

\subsubsection{Graph Classification} The objective is to predict a single label for an entire graph based on its structure and node features. 

\subsection{Dataset Statistics}

We conduct a comprehensive evaluation of ISP-GNN across 26 benchmark datasets covering three graph learning tasks: node classification, influence estimation, and graph classification. 

\subsubsection{Node Classification}

We employ 10 benchmark datasets, as shown in Table~\ref{tab:dataset_stats_node}, that span both homophilic and heterophilic network structures. The homophilic datasets include Cora ML, Citeseer, Pubmed, DBLP, and Amazon Photo, which exhibit strong assortativity where connected nodes typically share similar labels. In contrast, the heterophilic datasets comprise Cornell, Wisconsin, Texas, Film, and Amazon-Ratings, which feature disassortative mixing where neighboring nodes often belong to different classes.

\subsubsection{Influence Estimation}

We utilize 4 real-world social networks, listed in Table~\ref{tab:dataset_stats_influence}, which exhibit diverse topological properties ranging from small-world to scale-free characteristics. These networks include Jazz, Cora-ML, Network Science, and Power Grid. 

\subsubsection{Graph Classification}

We evaluate 12 datasets from the TU benchmark and Open Graph Benchmark, covering various domains and scales. The benchmarks include: (1) Molecular graphs (MUTAG, PTC-MR, BZR, DHFR, COX2, ogbg-moltox21, ogbg-molpcba) where nodes are atoms, (2) Bioinformatics (PROTEINS, D\&D) with protein interaction networks, and (3) Social networks (IMDB-BINARY, IMDB-MULTI, Reddit-Binary) for collaborations and communities.

\begin{table}[htbp]
\centering
\caption{Dataset statistics for node classification.}
\label{tab:dataset_stats_node}
\small
\begin{tabular}{lcccc}
\toprule
\textbf{Dataset} & \textbf{\# Nodes} & \textbf{\# Edges} & \textbf{\# Features} & \textbf{\# Classes} \\
\midrule
Cora ML & 2,995 & 8,416 & 2,879 & 7 \\
Citeseer & 3,327 & 4,732 & 3,703 & 6 \\
Pubmed & 19,717 & 44,338 & 500 & 3 \\
DBLP & 17,716 & 105,734 & 1,639 & 4 \\
Film & 7,600 & 33,544 & 932 & 5 \\
Amazon Photo & 7,650 & 119,081 & 745 & 8 \\
Amazon-Ratings & 24,492 & 93,050 & 300 & 5 \\
Cornell & 183 & 295 & 1,703 & 5 \\
Wisconsin & 251 & 499 & 1,703 & 5 \\
Texas & 183 & 309 & 1,703 & 5 \\
\bottomrule
\end{tabular}
\end{table}

\begin{table}[htbp]
\centering
\caption{Dataset statistics for influence estimation.}
\label{tab:dataset_stats_influence}
\small
\begin{tabular}{lcc}
\toprule
\textbf{Dataset} & \textbf{\# Nodes} & \textbf{\# Edges} \\
\midrule
Jazz & 198 & 2,742 \\
Cora-ML & 2,810 & 7,981 \\
Network Science & 1,565 & 13,532 \\
Power Grid & 4,941 & 6,594 \\
\bottomrule
\end{tabular}
\end{table}

\begin{table*}[!htbp]
\centering
\caption{Dataset statistics for graph classification.}
\label{tab:dataset_stats_graph}
\small
\begin{tabular}{lccccc}
\toprule
\textbf{Dataset} & \textbf{Domain} & \textbf{\# Graphs} & \textbf{Avg \# Nodes} & \textbf{Avg \# Edges} & \textbf{\# Classes} \\
\midrule
MUTAG & Molecular & 188 & 17.93 & 19.79 & 2 \\
PTC\_MR & Molecular & 344 & 14.29 & 14.69 & 2 \\
BZR & Molecular & 405 & 35.75 & 38.36 & 2 \\
DHFR & Molecular & 467 & 42.43 & 44.54 & 2 \\
COX2 & Molecular & 467 & 41.22 & 43.45 & 2 \\
PROTEINS & Bioinformatics & 1,113 & 39.06 & 72.82 & 2 \\
D\&D & Bioinformatics & 1,178 & 284.32 & 715.66 & 2 \\
IMDB-BINARY & Social & 1,000 & 19.77 & 96.53 & 2 \\
IMDB-MULTI & Social & 1,500 & 13.00 & 65.94 & 3 \\
Reddit-Binary & Social & 2,000 & 429.63 & 497.75 & 2 \\
ogbg-moltox21 & Molecular & 7,831 & 18.51 & 19.26 & 12 \\
ogbg-molpcba & Molecular & 437,929 & 26.0 & 28.1 & 128 \\
\bottomrule
\end{tabular}
\end{table*}

\begin{table*}[t]
\centering
\caption{Node and graph classification performance with different structural invariants. Values show accuracy (\%) with standard deviation. Best results are highlighted in \textbf{bold}.}
\label{tab:performance_invariants}
\begin{tabular}{l|cc|cc}
\toprule
\multirow{2}{*}{\textbf{Method}} & \multicolumn{2}{c|}{\textbf{Node Classification}} & \multicolumn{2}{c}{\textbf{Graph Classification}} \\
& \textbf{Amazon Photo} & \textbf{Amazon-Ratings} & \textbf{ogbg-moltox21} & \textbf{Reddit-Binary} \\
\midrule
Baseline & 89.30 $\pm$ 0.82 & 37.99 $\pm$ 0.61 & 74.9 $\pm$ 0.5 & 92.2 $\pm$ 2.3 \\
Degree & 93.12 $\pm$ 0.46 & 47.5 $\pm$ 1.03 & 75.1 $\pm$ 0.6 & 92.8 $\pm$ 2.2 \\
K-core & 93.97 $\pm$ 0.46 & \textbf{48.43 $\pm$ 1.21} & 75.1 $\pm$ 0.8 & 92.5 $\pm$ 2.1 \\
Onion & 93.93 $\pm$ 0.35 & 44.95 $\pm$ 0.54 & \textbf{75.6 $\pm$ 0.7} & 92.4 $\pm$ 2.7 \\
Clustering Coefficient & 93.23 $\pm$ 0.31 & 46.27 $\pm$ 0.78 & 74.8 $\pm$ 0.6 & 93.0 $\pm$ 2.7 \\
Average Neighborhood Clustering & 93.11 $\pm$ 0.67 & 46.42 $\pm$ 1.00 & 74.9 $\pm$ 1.0 & 92.1 $\pm$ 3.0 \\
K-Truss & \textbf{94.01 $\pm$ 0.65} & 47.83 $\pm$ 1.59 & 75.0 $\pm$ 0.5 & \textbf{93.4 $\pm$ 2.6} \\
\bottomrule
\end{tabular}%
\end{table*}

\begin{figure*}[htbp]
    \centering
    \includegraphics[width=0.95\textwidth]{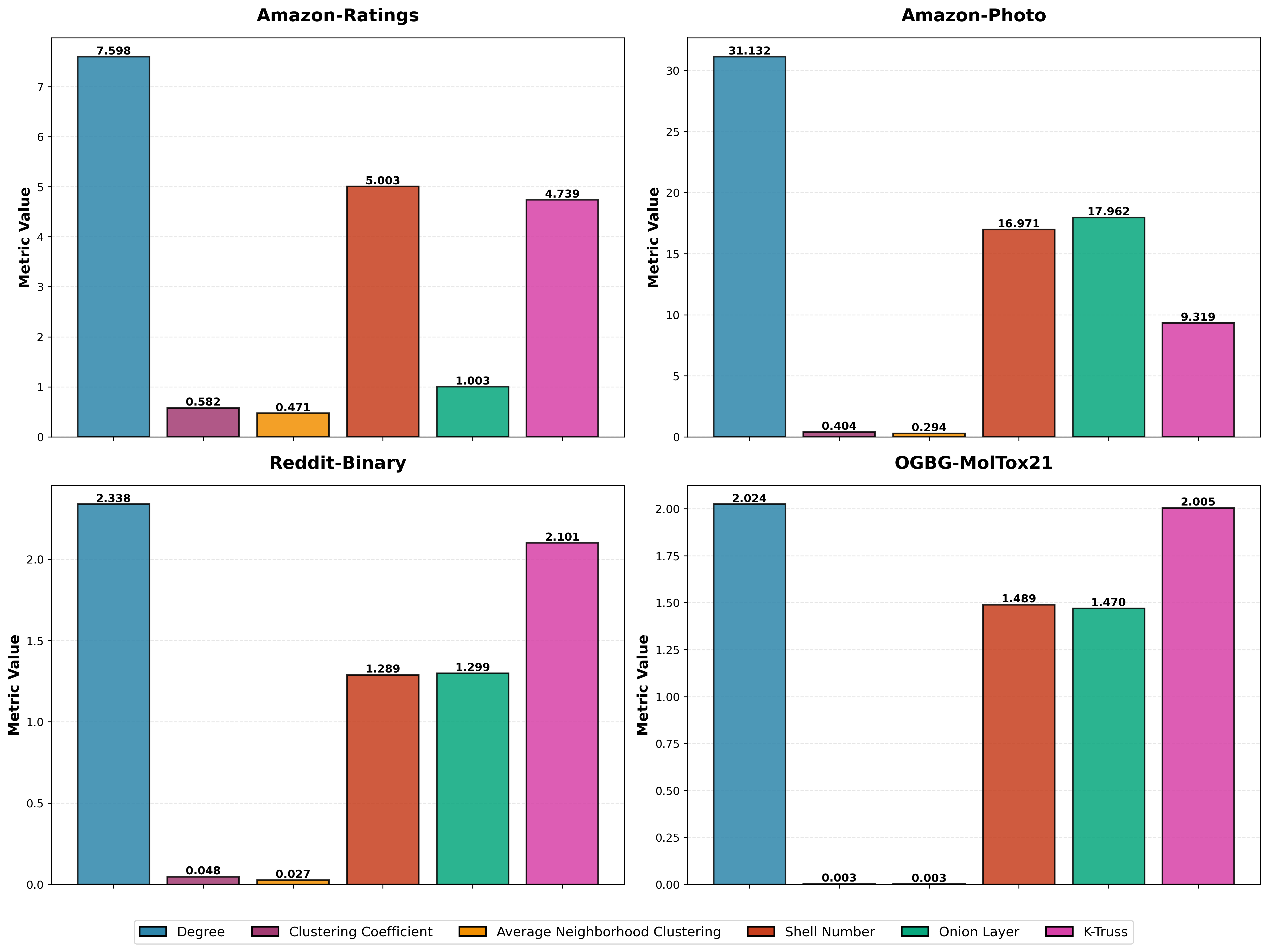}
    \caption{Structural metrics comparison across datasets.}
    \label{fig:structural_metrics}
\end{figure*}

\subsection{Model Hyperparameters}

To ensure optimal model performance across different datasets, we conduct a hyperparameter search. All models are trained using the Adam optimizer \cite{kingma2014adam}. For the OGB datasets, we employ a learning rate scheduler with a decay factor of $0.5$ applied every $10$ epochs. Table~\ref{tab:hyperparameters} summarizes the complete hyperparameter search space used in our experiments.

\begin{table}[h]
\centering
\caption{Hyperparameter search space for ISP-GNN variants}
\label{tab:hyperparameters}
\small
\setlength{\tabcolsep}{4pt}
\begin{tabular}{ll}
\hline
\textbf{Hyperparameter} & \textbf{Search Space} \\
\hline
Number of layers& $\{2, 3, 4, 5, 6\}$ \\
Temperature ($\beta$) & $\{0.5, 1.0, 1.5, 2.0\}$ \\
Number of Strata ($S$) & $\{4, 6, 8\}$ \\
Dropout rate & $\{0.0, 0.1, 0.2, 0.5, 0.6, 0.9\}$ \\
Learning rate  & $\{0.005, 0.009, 0.01, 0.1\}$ \\
Batch size& $\{32, 64, 100\}$ \\
Hidden dimensionality  & $\{32, 64, 128, 256, 300\}$ \\
Weight decay & $\{5\!\times\!10^{-4}, 9\!\times\!10^{-3}, 1\!\times\!10^{-2}, 1\!\times\!10^{-1}\}$ \\
Training epochs& $\{100, 200, 1000\}$ \\
\hline
\end{tabular}
\end{table}

\subsection{Computational Resources}

All experiments were performed on a Linux server with an Intel Xeon W-2175 processor (2.50GHz, 28 cores), NVIDIA RTX A6000 GPU, and 512GB RAM. The software environment specifications are listed in Table~\ref{tab:software}.

\begin{table}[h]
\centering
\caption{Software environment specifications}
\label{tab:software}
\small
\begin{tabular}{ll}
\hline
\textbf{Package} & \textbf{Version} \\
\hline
Python & 3.11.5 \\
PyTorch & 2.3.1 \\
PyTorch Geometric & 2.7.0 \\
torch-cluster & 1.6.3 \\
torch-scatter & 2.0.9 \\
torch-sparse & 0.6.18 \\
\hline
\end{tabular}
\end{table}

\section{Additional Experiments}

\subsection{Performance Comparison with Different Invariants} \label{sec:comp_inv}

Table~\ref{tab:performance_invariants} presents the classification performance of ISP-GNN across node and graph classification tasks using different structural invariants. Figure~\ref{fig:structural_metrics} shows the corresponding structural characteristics of each dataset.

The results demonstrate that ISP-WL provides considerable performance boosts in node classification by incorporating strong structural bias, enhancing baseline performance significantly. For Amazon-Ratings, the graph displays heterophilic characteristics where connected nodes have dissimilar labels. Core/onion-based invariants achieve superior performance as they capture structural positions and hierarchical roles that clustering-based features fail to distinguish. Amazon-Photo, with its dense structure and deep hierarchical organization, also benefits from structural invariants, with k-truss and k core providing notable performance gains. For graph classification, the molecular benchmark ogbg-moltox21 shows less improvements across all invariants, as molecular properties depend on domain-specific chemical semantics rather than graph-theoretic topology. Reddit-Binary, being a social network benchmark with sparse triangular structure, benefits most from k-truss as it captures hierarchical community patterns and cohesive substructure rather than local triangular motifs. Overall, these findings highlight that ISP-GNN achieves optimal performance by selecting invariants that align with the underlying label homophily, structural characteristics, and domain semantics of the target graphs.

\begin{figure*}[H]
    \centering
    \includegraphics[width=0.9\textwidth]{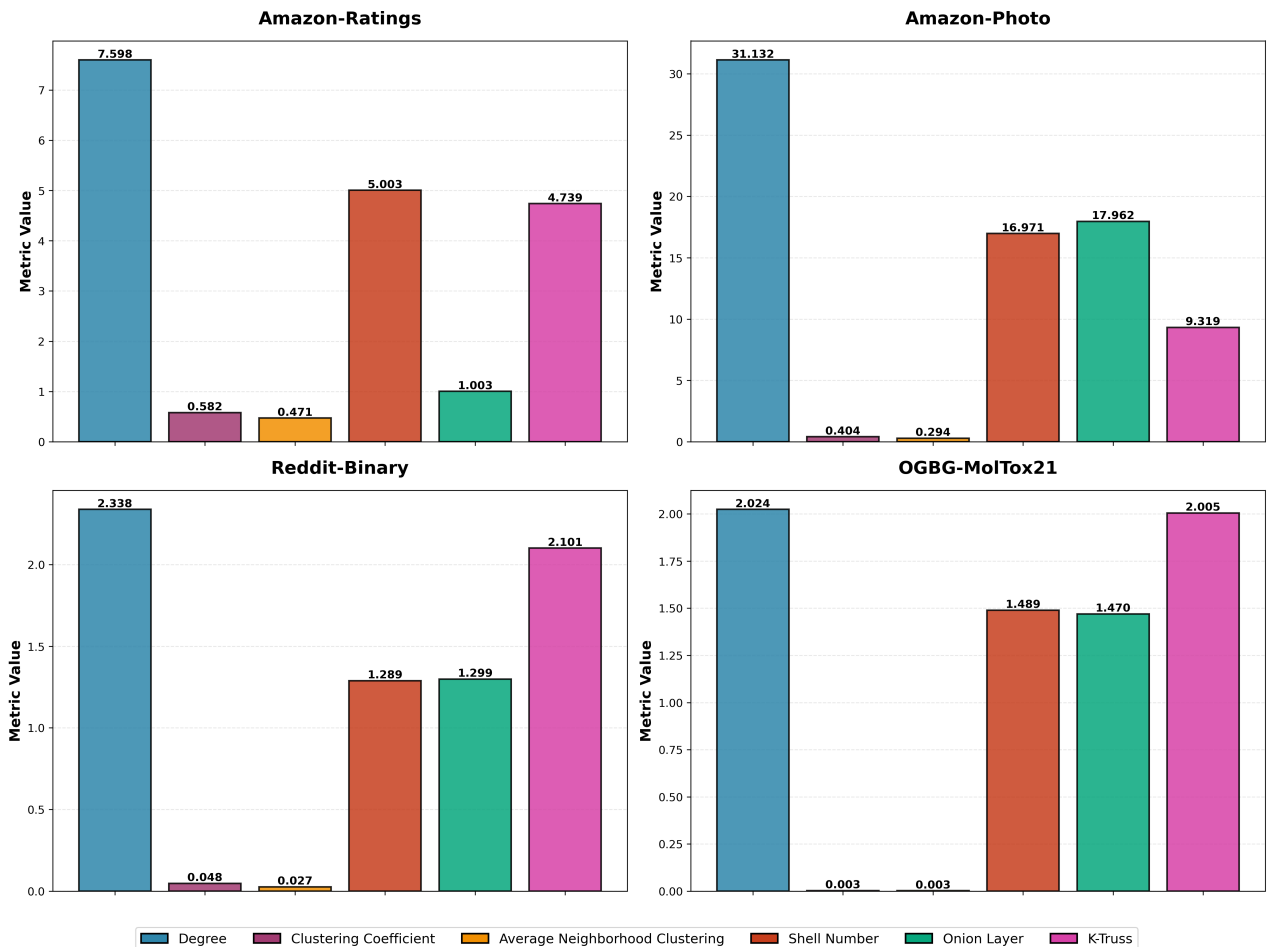}
    \caption{Structural metrics comparison across datasets.}
    \label{fig:structural_metrics}
\end{figure*}

\begin{figure*}[H]
\centering
\begin{tabular}{cc}
\includegraphics[width=0.48\textwidth]{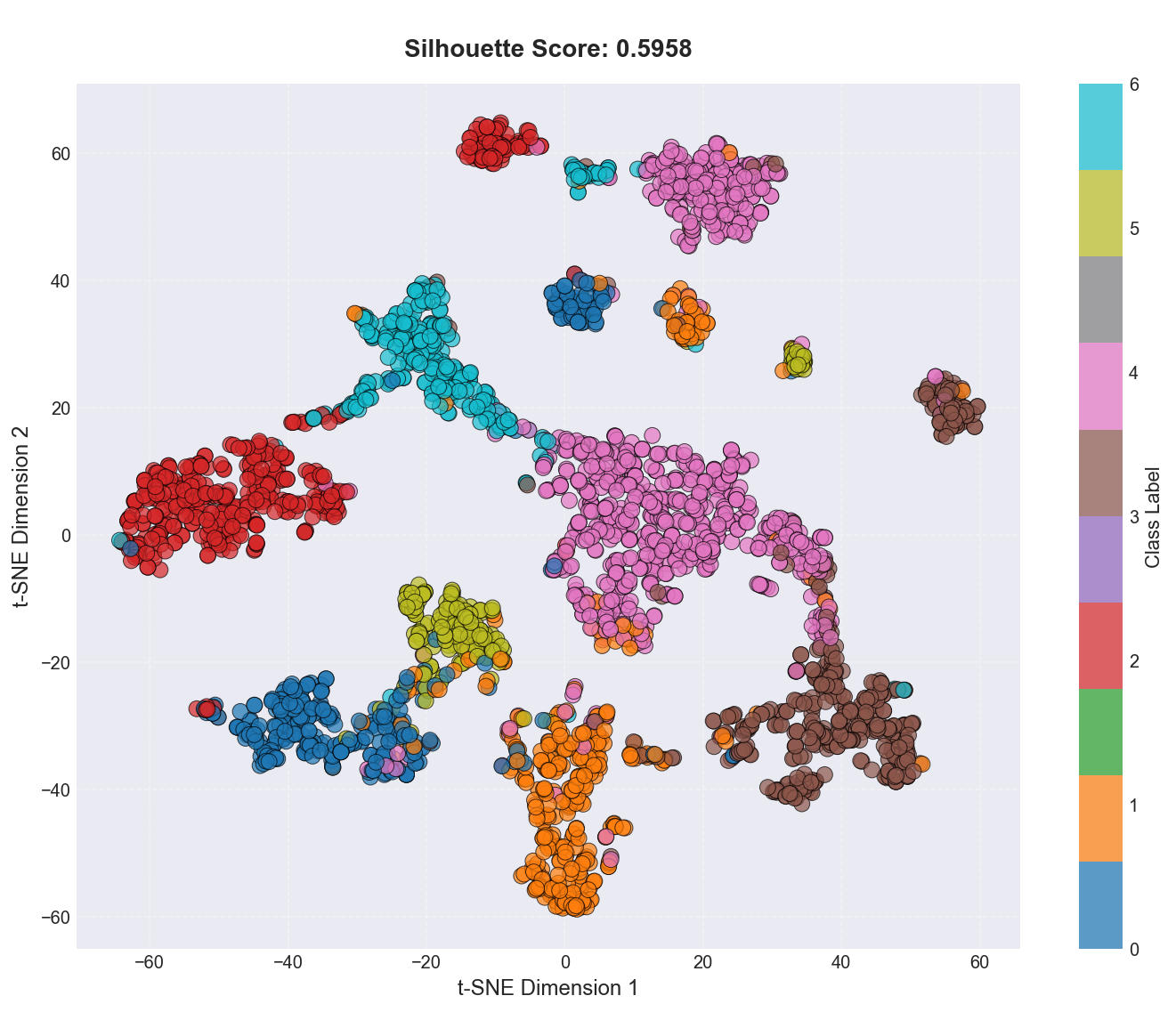} &
\includegraphics[width=0.48\textwidth]{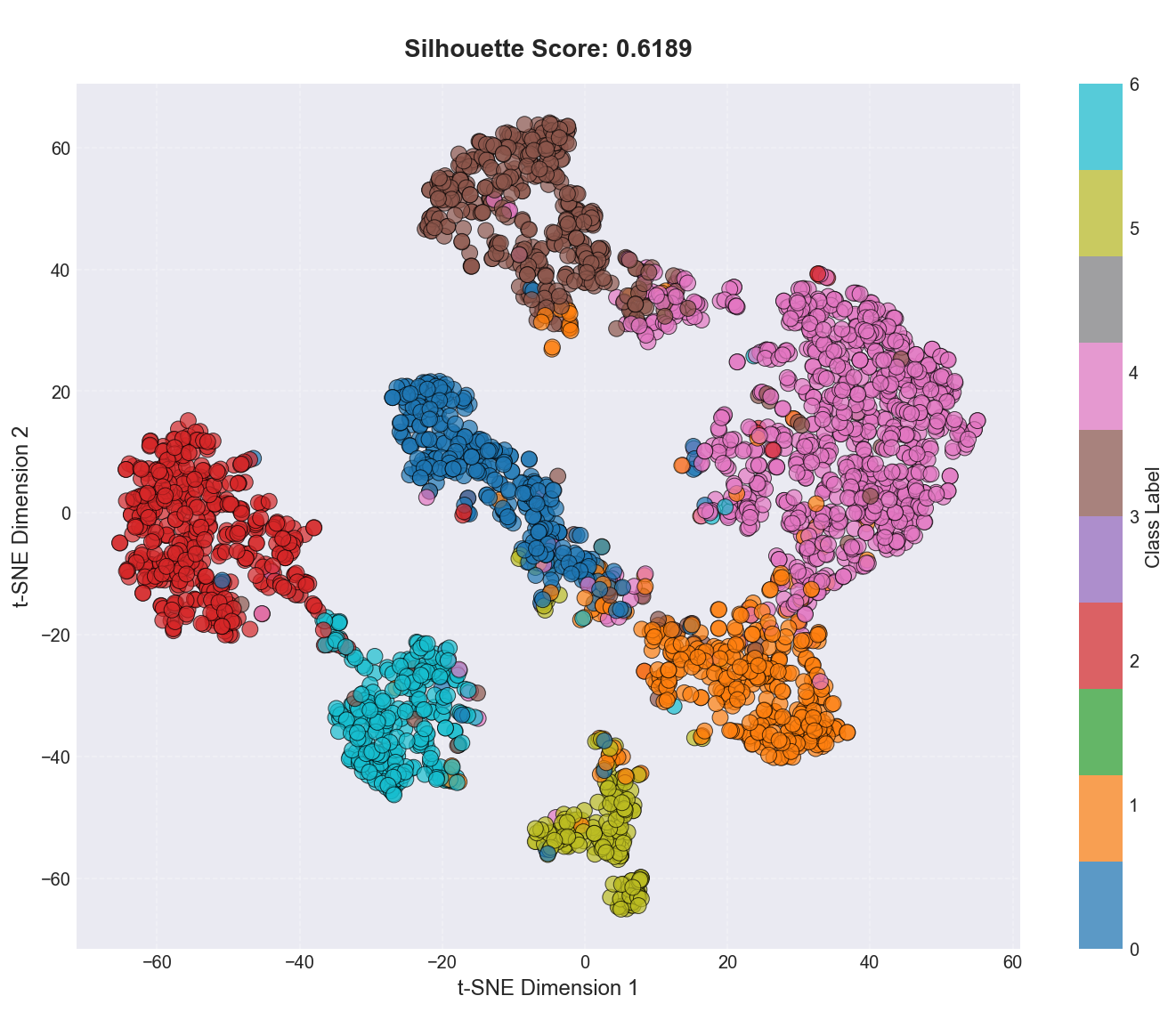} \\
\textbf{(a) Degree} & \textbf{(b) Core} \\[1em]
\includegraphics[width=0.48\textwidth]{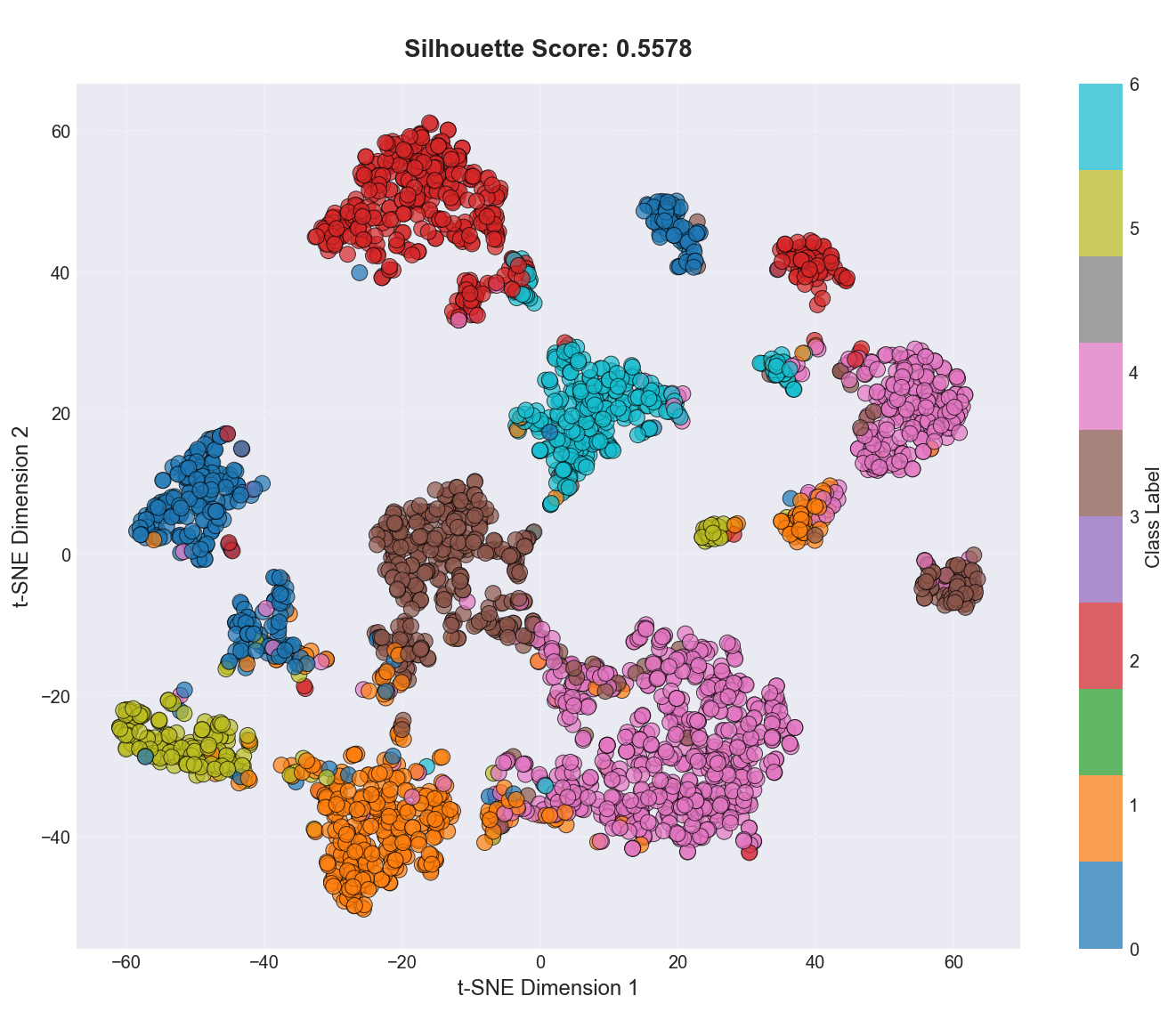} &
\includegraphics[width=0.48\textwidth]{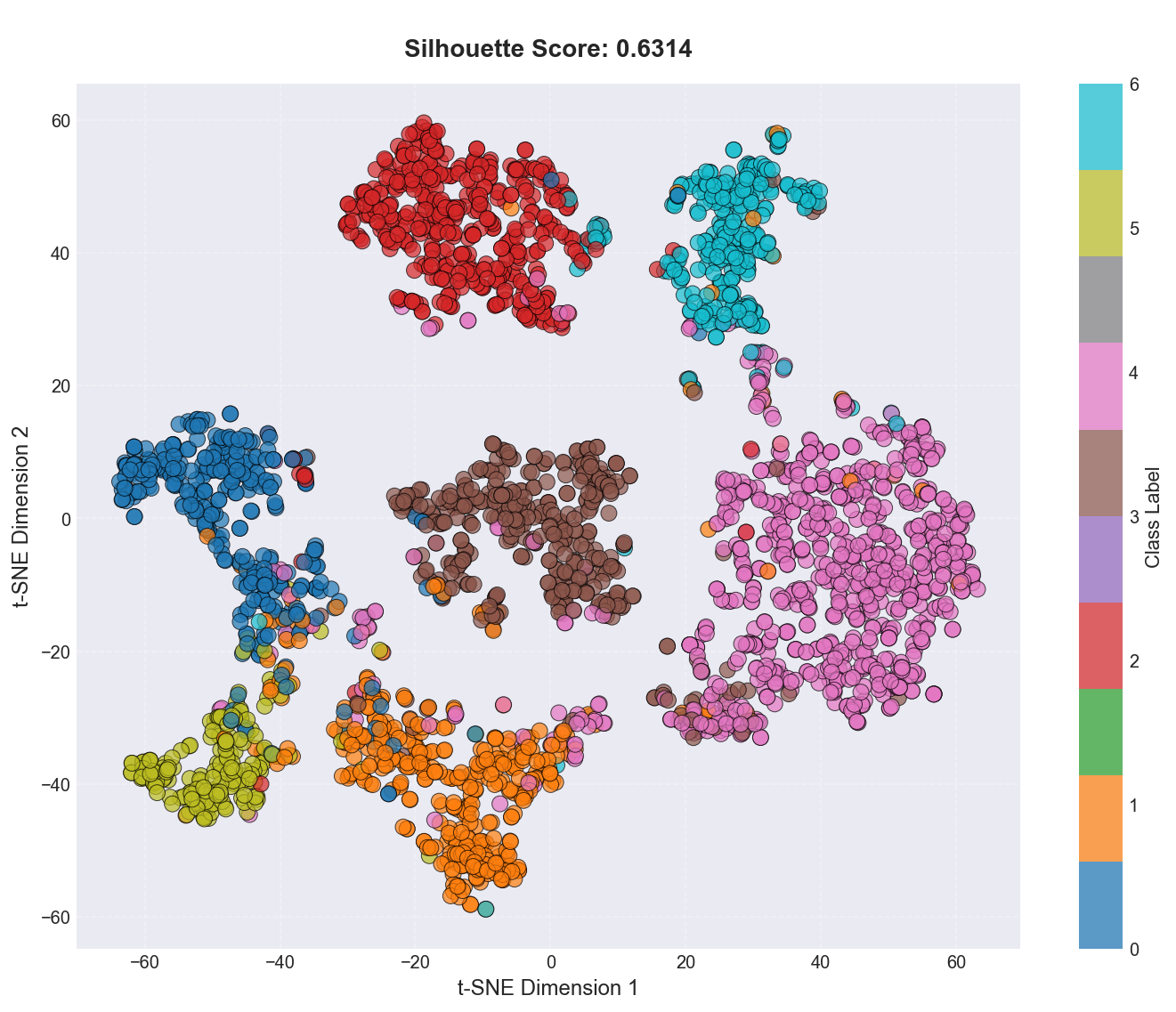} \\
\textbf{(c) Onion} & \textbf{(d) Learnable}
\end{tabular}
\caption{\textbf{t-SNE visualization of ISP-GNN embeddings on Cora-ML.} 
The learnable invariant produces the most compact and well-separated 
clusters (silhouette: 0.631), outperforming predefined invariants 
(silhouette: 0.558--0.619). This demonstrates that learnable stratification 
automatically discovers task-adaptive structural patterns aligned with 
semantic categories.}
\label{fig:tsne_comparison}
\end{figure*}

\begin{figure*}[t]
    \centering
    \includegraphics[width=\textwidth]{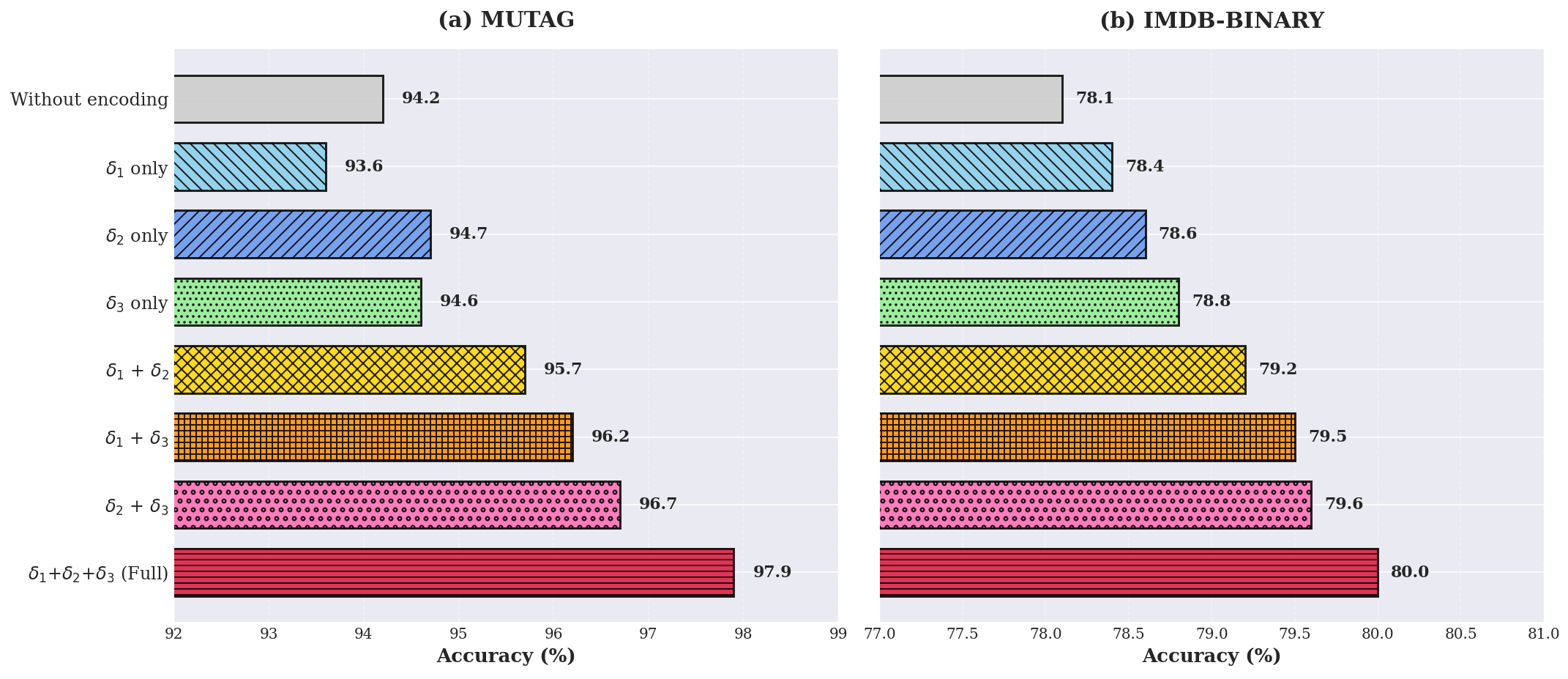}
    \caption{Ablation study on hierarchical gap encoding components for graph classification. Results demonstrate the complementary nature of $\delta_1$, $\delta_2$, and $\delta_3$. Progressive performance improvements from individual components through pairs to full encoding validate the necessity of all three gap features.}
    \label{fig:gap_ablation}
\end{figure*}

\subsection{Hierarchical Structural Gap Encoding}

In this subsection, we empirically justify the design choices related to our hierarchical structural gap encoding mechanism.

\subsubsection{Impact of the Motif Type}

\begin{table}[H]
\centering
\caption{Ablation study on motif types. Results show mean accuracy $\pm$ standard deviation (\%) for graph classification. Best results are highlighted in \textbf{bold}. Note that the degree is used as the invariant.}
\label{tab:motif_ablation}
\begin{tabular}{lccc}
\toprule
\textbf{Motif Type} & \textbf{MUTAG} & \textbf{PTC\_MR} & \textbf{IMDB-MULTI} \\
\midrule
GIN (Baseline) & $92.8\pm5.9$ & $65.6\pm6.5$ & $54.6\pm3.0$ \\
\midrule
4-clique & $95.2\pm4.4$ & $72.7\pm5.6$ & $55.2\pm2.8$ \\
4-cycle & $94.7\pm4.0$ & $71.0\pm7.5$ & $55.0\pm3.2$ \\
Diamond & $94.6\pm4.1$ & $73.4\pm7.7$ & $55.4\pm2.9$ \\
5-clique & $95.3\pm5.5$ & $72.2\pm7.7$ & $55.3\pm3.0$ \\
6-cycle & $94.7\pm5.0$ & $74.3\pm6.6$ & $54.9\pm3.1$ \\
\midrule
\textbf{Triangle} & $\mathbf{97.9\pm2.6}$ & $\mathbf{75.8\pm5.9}$ & $\mathbf{55.9\pm3.1}$ \\
\bottomrule
\end{tabular}
\end{table}

\begin{figure*}[t]
\centering
\scalebox{1.0}{
\begin{tabular}{cc}
\includegraphics[width=0.48\textwidth]{figures/tsne_degree.png} &
\includegraphics[width=0.48\textwidth]{figures/tsne_core.png} \\
\textbf{(a) Degree} & \textbf{(b) Core} \\[1em]
\includegraphics[width=0.48\textwidth]{figures/tsne_onion.png} &
\includegraphics[width=0.48\textwidth]{figures/tsne_learnable.png} \\
\textbf{(c) Onion} & \textbf{(d) Learnable}
\end{tabular}
}
\caption{\textbf{t-SNE visualization of ISP-GNN embeddings on Cora-ML.} 
The learnable invariant produces the most compact and well-separated 
clusters (silhouette: 0.631), outperforming predefined invariants 
(silhouette: 0.558--0.619). This demonstrates that learnable stratification 
automatically discovers task-adaptive structural patterns aligned with 
semantic categories.}
\label{fig:tsne_comparison}
\end{figure*}

Table~\ref{tab:motif_ablation} validates our choice of triangles through ablation studies with five motif types. Triangle-based aggregation consistently outperforms higher-order alternatives (4-cliques, 5-cliques, 4-cycles, diamonds, 6-cycles) across all datasets. This superiority arises from three complementary factors. First, motif abundance: triangles are more prevalent than other higher-order motifs, ensuring robust learning signals across the entire graph. Second, information completeness: triangles serve as fundamental building blocks of higher-order structures, as larger cliques decompose into overlapping triangles, allowing triangle-based aggregation to capture essential structural information without explicitly enumerating rarer motifs. Third, fewer parameters: triangles use fewer parameters while providing substantially more training instances, thereby reducing the risk of overfitting. Overall, triangles provide an optimal balance of expressivity, coverage, and efficiency for the ISP framework.

\subsubsection{Impact of different Components in Hierarchical Gap Encoding}

To validate the contribution of each component in our hierarchical gap encoding, we conduct ablation studies by systematically removing individual gap features and their combinations. Figure~\ref{fig:gap_ablation} presents results across MUTAG and IMDB-BINARY datasets.

The results reveal three key findings. First, all individual gap components improve over the baseline without encoding, with $\delta_2$ and $\delta_3$ providing stronger contributions than $\delta_1$ across both datasets, suggesting that minimum gaps and neighbor homogeneity carry more discriminative power than maximum gaps alone. Second, pairwise combinations consistently outperform individual components, demonstrating complementarity among gap features.  Third, the full encoding $\delta_1+\delta_2+\delta_3$ achieves the best performance, confirming that combining all three components contributes unique and synergetic structural information.

\clearpage

\subsection{Learnable Invariant Analysis} \label{sec:learnable_inv}

In this section, we provide an analysis of the capabilities and hyperparameter sensitivity of the ISP-GNN$_\dagger$ variant.

\subsubsection{Task-Adaptive Structural Discovery}

To validate the learnable invariant of the ISP-GNN capture task-relevant structural patterns, 
We analyse embedding quality via t-SNE projection \cite{maaten2008visualizing} and silhouette scores \cite{shahapure2020cluster} on the Cora-ML dataset for the node classification task. 
Figure~\ref{fig:tsne_comparison} shows that ISP-GNN with learnable invariant 
produces the most compact, well-separated clusters with the highest silhouette score, 
outperforming all predefined variants.

Unlike fixed  invariants that capture only specific structural properties (degree: local 
connectivity; core: cohesiveness; onion: hierarchy), the learnable variant 
 adapts its stratification to align with semantic categories in the groundtruth, learning 
an optimal combination of structural signals for each task.

\begin{figure*}[t]
\centering
\includegraphics[width=0.95\textwidth]{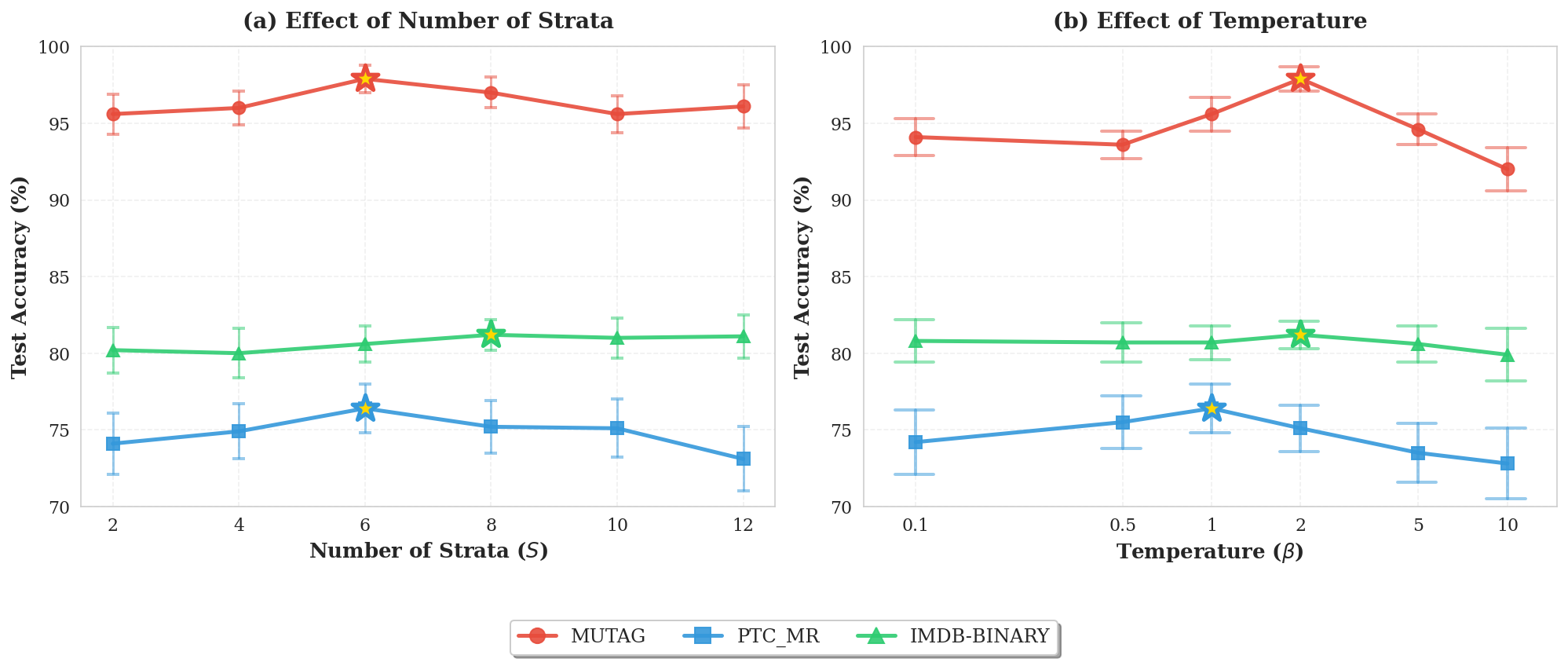}
\caption{\textbf{Hyperparameter sensitivity analysis.} (a) Effect of number of strata $S$ (b) Effect of temperature $\beta$.}
\label{fig:hyperparam}
\end{figure*}

\subsubsection{Hyperparameter Sensitivity Analysis}
\label{sec:hyperparameter_analysis}

We analyze the sensitivity of ISP-GNN$_\dagger$ to two key hyperparameters: the number of strata $S$ and the soft assignment temperature $\beta$. Figure~\ref{fig:hyperparam} presents results on the graph classification task.

ISP-GNN$^{\dagger}$ exhibits robust performance with $S \in [6, 8]$ and $\beta \in [0.5, 2.0]$, providing practical guidance for hyperparameter selection. While optimal values are dataset-dependent, performance remains stable across these ranges, with lower variation. Notably, $S \in [6, 8]$ indicates that a relatively small number of strata suffices for optimal performance, which is beneficial from both computational efficiency and memory footprint perspectives.

\subsection{Scalability Analysis}
\label{sec:scalability}

We empirically validate the theoretical complexity bounds (Theorems~\ref{thm:convergence} and~\ref{thm:complexity}) of ISP-WL through experiments on synthetic Barabási-Albert (BA) graphs ranging from 10K to 1M nodes with average degree 10. BA graphs are scale-free networks that represent real-world topologies, including social networks, citation networks, and biological networks~\cite{barabasi1999emergence}. These graphs maintain bounded degeneracy, a property commonly found in real-world sparse graphs~\cite{demaine2019structural}.

In this experiment, we measure per-iteration time for ISP-WL variants (Degree, K-Core, Onion, K-Truss) across six graph sizes (10K, 50K, 100K, 250K, 500K, 1M nodes). All experiments use 5 WL iterations with preprocessing costs amortized over iterations to isolate per-iteration overhead.

\subsubsection{Validation of Theorem~\ref{thm:complexity}.}

Figure~\ref{fig:isp-wl-variants}(a) demonstrates linear per-iteration scaling consistent with $O(|V| + |E|)$ complexity: Degree, K-Core, and Onion variants maintain near-linear growth across three orders of magnitude. K-Truss exhibits slightly superlinear scaling due to its $O(|E|^{1.5})$ preprocessing, but remains practical even at 1M nodes.

The overhead ratios in Figure~\ref{fig:isp-wl-variants}(b) reveal an important scaling behavior: initial spikes at medium graphs (50K--100K nodes) occur when preprocessing dominates the small baseline per-iteration cost, but this overhead then decreases and stabilizes at large scale (500K--1M nodes) as preprocessing becomes better amortized. This validates that ISP-WL's per-iteration complexity matches standard 1-WL asymptotically.

\subsubsection{Preprocessing Costs.}

Figure~\ref{fig:preprocessing}(a) isolates preprocessing complexity: Degree computation is near-instantaneous, K-Core and Onion scale linearly with edges, while K-Truss's triangle enumeration per edge yields superlinear cost. However, when amortized over iterations (Figure~\ref{fig:preprocessing}(b)), even K-Truss preprocessing becomes manageable.

\subsubsection{Triangle Enumeration.}
Figure~\ref{fig:triangles} validates the complexity $T = O(d|E|)$ assumption for sparse graphs. Triangle count grows sublinearly (panel a), enumeration time scales linearly (panel b), and panel (c) demonstrates the quadratic explosion at high degree that makes bounded degeneracy essential for practical efficiency.

\subsubsection{Validation of Theorem~\ref{thm:convergence}.}
All ISP-WL variants converged within 5 iterations across all graph sizes, well below the $K \leq |V|$ bound. The stable overhead ratios at scale confirm that convergence time remains independent of graph size.

Overall, these results confirm that ISP-WL maintains efficient complexity on sparse scale-free graphs with stable overhead depending on invariant choice. Degree offers the lowest overhead, K-Core/Onion balance efficiencywith expressivity, while K-Truss (5.5×) provides maximum structural distinguishability at moderate cost when preprocessing is amortized over iterations. These results generalize to real-world sparse networks with similar bounded degeneracy properties.

\begin{figure*}[t]
    \centering
    \includegraphics[width=0.9\linewidth]{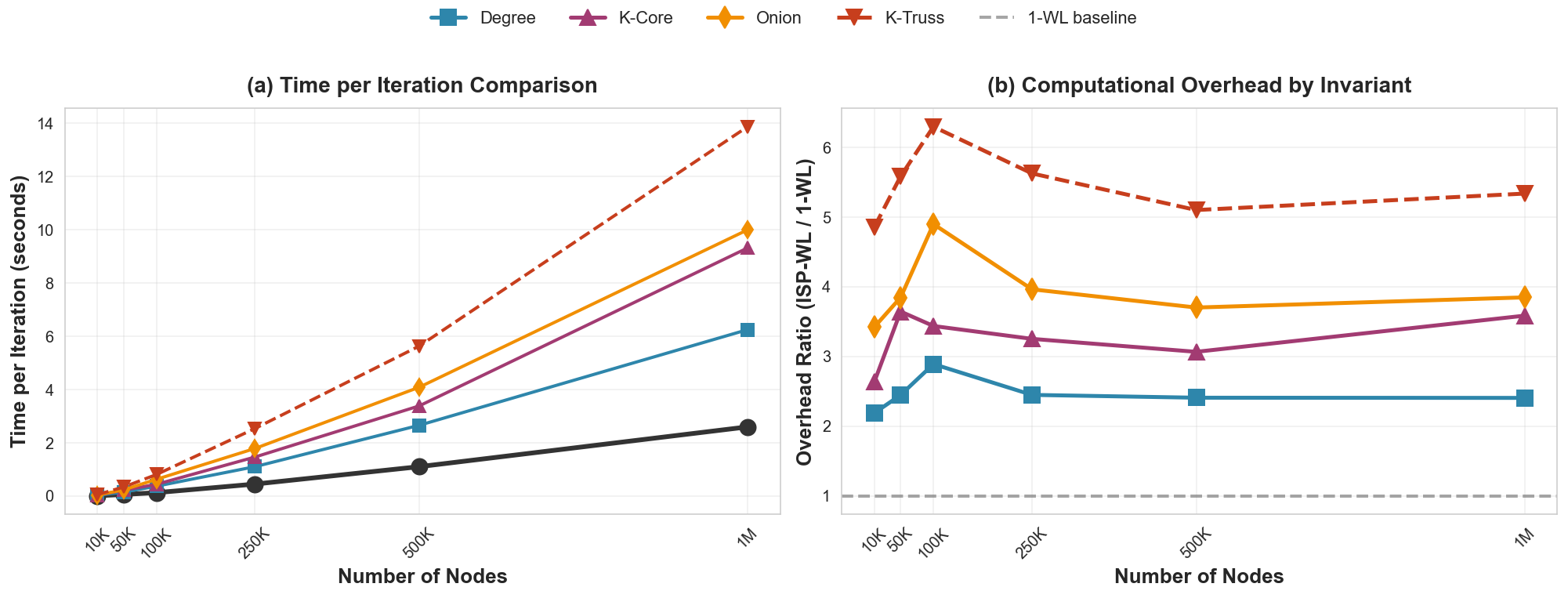}
    \caption{ISP-WL computational overhead on synthetic BA graphs. \textbf{(a)} Time per iteration scales linearly with graph size for all variants. \textbf{(b)} Overhead ratios stabilize at large graphs, with preprocessing costs becoming better amortized at scale.}
    \label{fig:isp-wl-variants}
\end{figure*}

\begin{figure*}[t]
    \centering
    \includegraphics[width=0.9\linewidth]{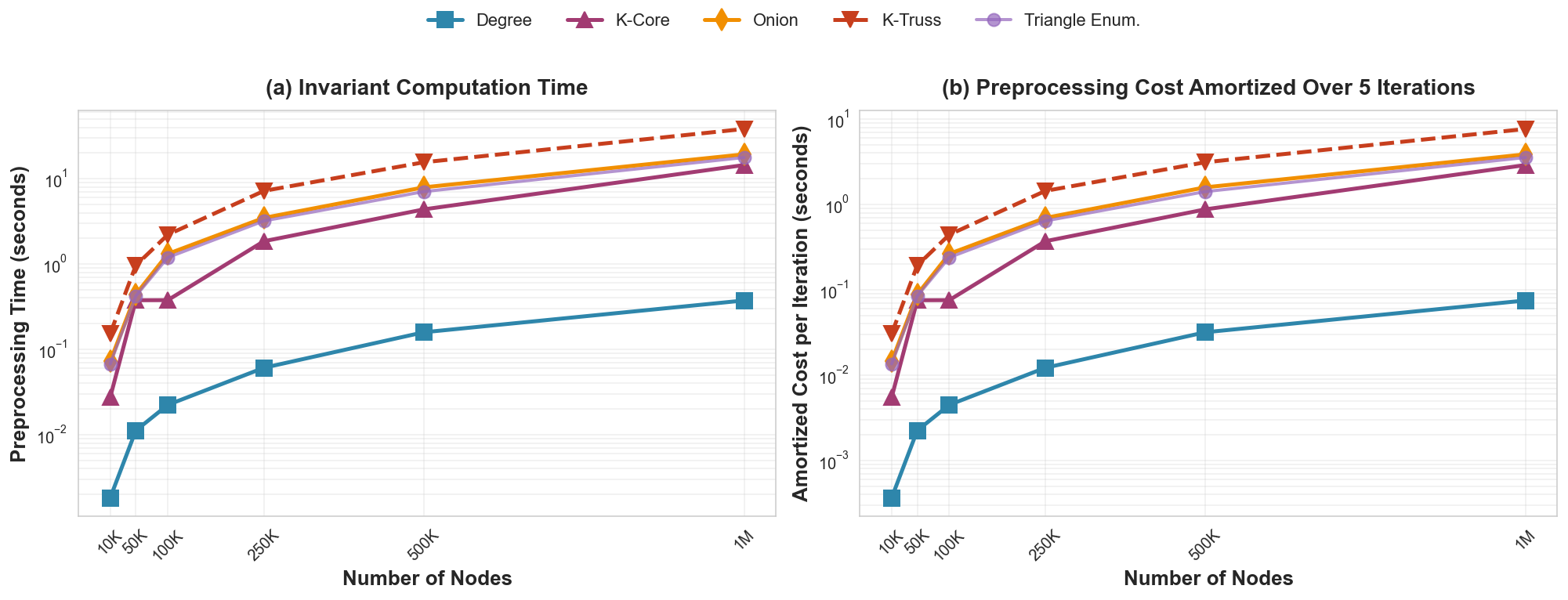}
    \caption{Preprocessing cost analysis (log scale). \textbf{(a)} Invariant computation time shows Degree ($O(|V|)$) is nearly instantaneous, K-Core/Onion ($O(|E|)$) remain efficient, while K-Truss ($O(|E|^{1.5})$) is expensive but amortizes well at scale. \textbf{(b)} Amortized cost per iteration shows preprocessing contribution decreases relative to the iteration cost as the graphs grow.}
    \label{fig:preprocessing}
\end{figure*}

\begin{figure*}[t]
    \centering
    \includegraphics[width=0.9\linewidth]{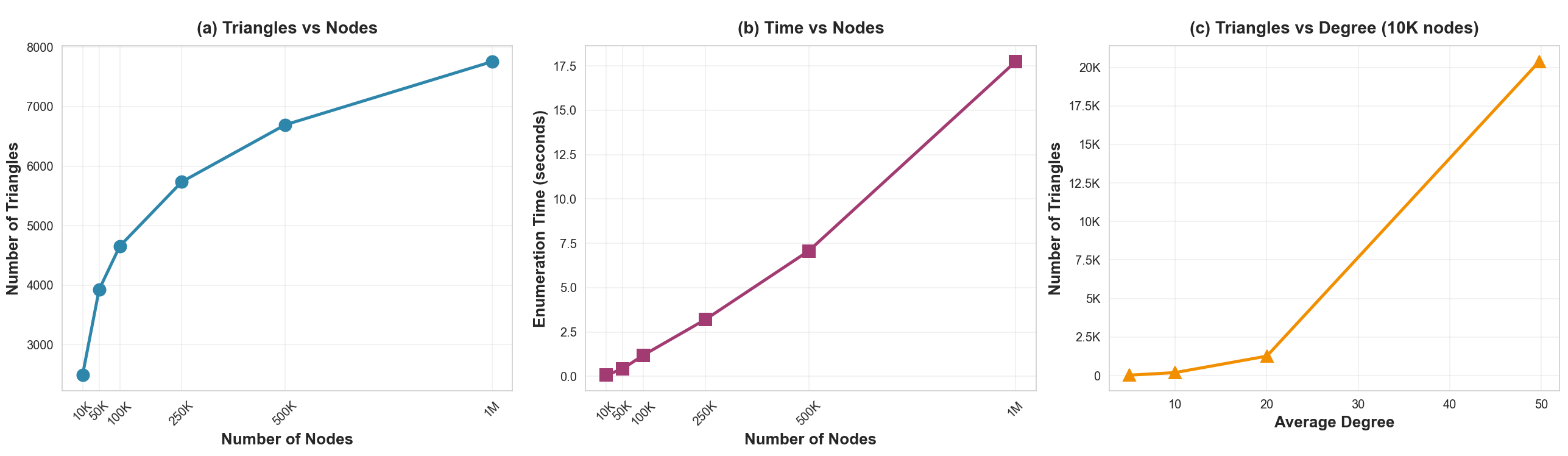}
    \caption{Triangle enumeration scalability on BA graphs. \textbf{(a)} Sublinear triangle growth confirms bounded degeneracy typical of real-world networks. \textbf{(b)} Linear enumeration time validates the complexity $T = O(d|E|)$. \textbf{(c)} Quadratic explosion at high degrees demonstrates why bounded degeneracy is critical.}
    \label{fig:triangles}
\end{figure*}

\clearpage

\begin{figure*}[t]
    \centering
    \includegraphics[width=\textwidth]{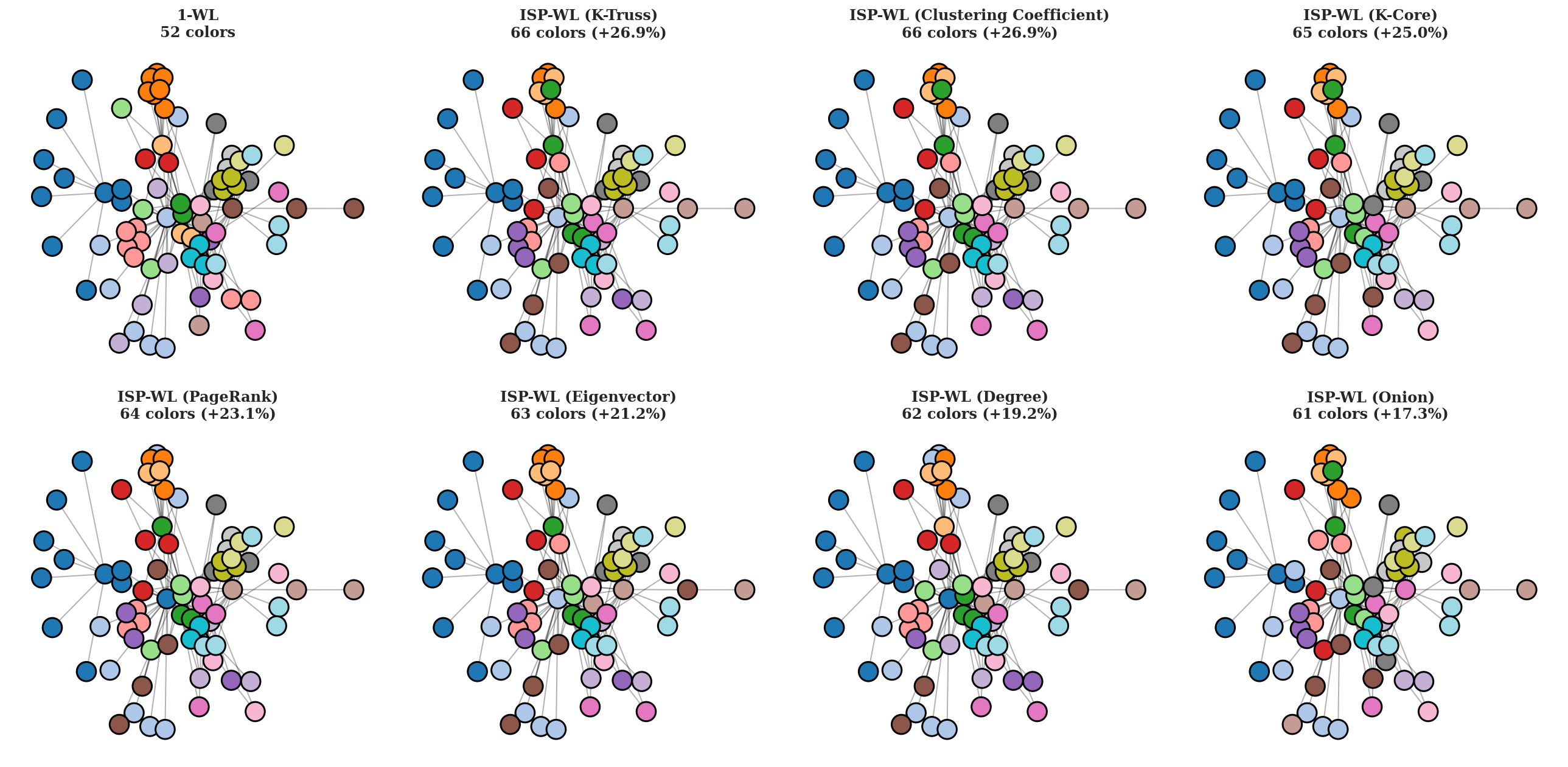}
    \caption{Coloring refinement comparison of 1-WL and ISP-WL. ISP-WL with different invariants achieves 17-27\% more colours than standard 1-WL, providing higher discriminative power.}
    \label{fig:coloring_visualization}
\end{figure*}

\subsection{Visualization Analysis} \label{sec:visualization}

In this section, we present various visualizations of ISP-WL. We employ the Les Misérables co-occurrence network, which consists of 77 nodes and 254 edges.

\subsubsection{Coloring Refinement Across Invariants}

Figure~\ref{fig:coloring_visualization} illustrates the discriminative power of ISP-WL across seven graph invariants. While standard 1-WL achieves 52 distinct colorings, all ISP-WL variants surpass this baseline, with improvements ranging from moderate to substantial depending on the invariant choice. This improvement improves node distinguishability, validating Theorem \ref{thm:nodedist}. Powerful invariants (K-Truss, Clustering Coefficient) achieve the strongest refinement, demonstrating that measures aligned with ISP-WL's theoretical insights (Theorem \ref{thm:invariant-expressivity}). Decomposition-based measures (K-Core) follow closely, effectively capturing the network's core-periphery structure. Global centrality measures (PageRank, Eigenvector) provide moderate improvements by identifying influential nodes that 1-WL cannot distinguish. Simpler local measures (Degree, Onion) show the weakest, though still consistent, gains, as their structural information partially overlaps with 1-WL's neighbourhood aggregation.

Visually, the increased colour diversity in ISP-WL variants reveals finer-grained structural distinctions, particularly in the densely connected central region where characters have similar local neighbourhoods but differ in higher-order connectivity patterns. The variation in performance across invariants validates our framework's key principle: expressivity gains depend critically on choosing invariants that capture structural properties orthogonal to standard message passing.

\subsubsection{Invariant Orthogonality Analysis}

Figure~\ref{fig:complementarity} quantifies structural complementarity between invariants through pairwise coloring differences. Most invariant pairs show high divergence, indicating each captures largely distinct structural properties. This orthogonality validates ISP-WL's sensitivity to invariant choice, where different structural lenses produce fundamentally different node distinctions.

Notable patterns emerge: K-Truss and clustering coefficient exhibit the lowest divergence, revealing their shared emphasis on triangular cohesion. Degree shows the most consistent divergence from all other invariants, confirming it captures only first-order connectivity with no higher-order information. The predominantly red matrix confirms that carefully chosen invariants provide orthogonal structural views that ISP-WL successfully exploits for expressivity gains.

\begin{figure*}[t]
    \centering
    \includegraphics[width=0.5\textwidth]{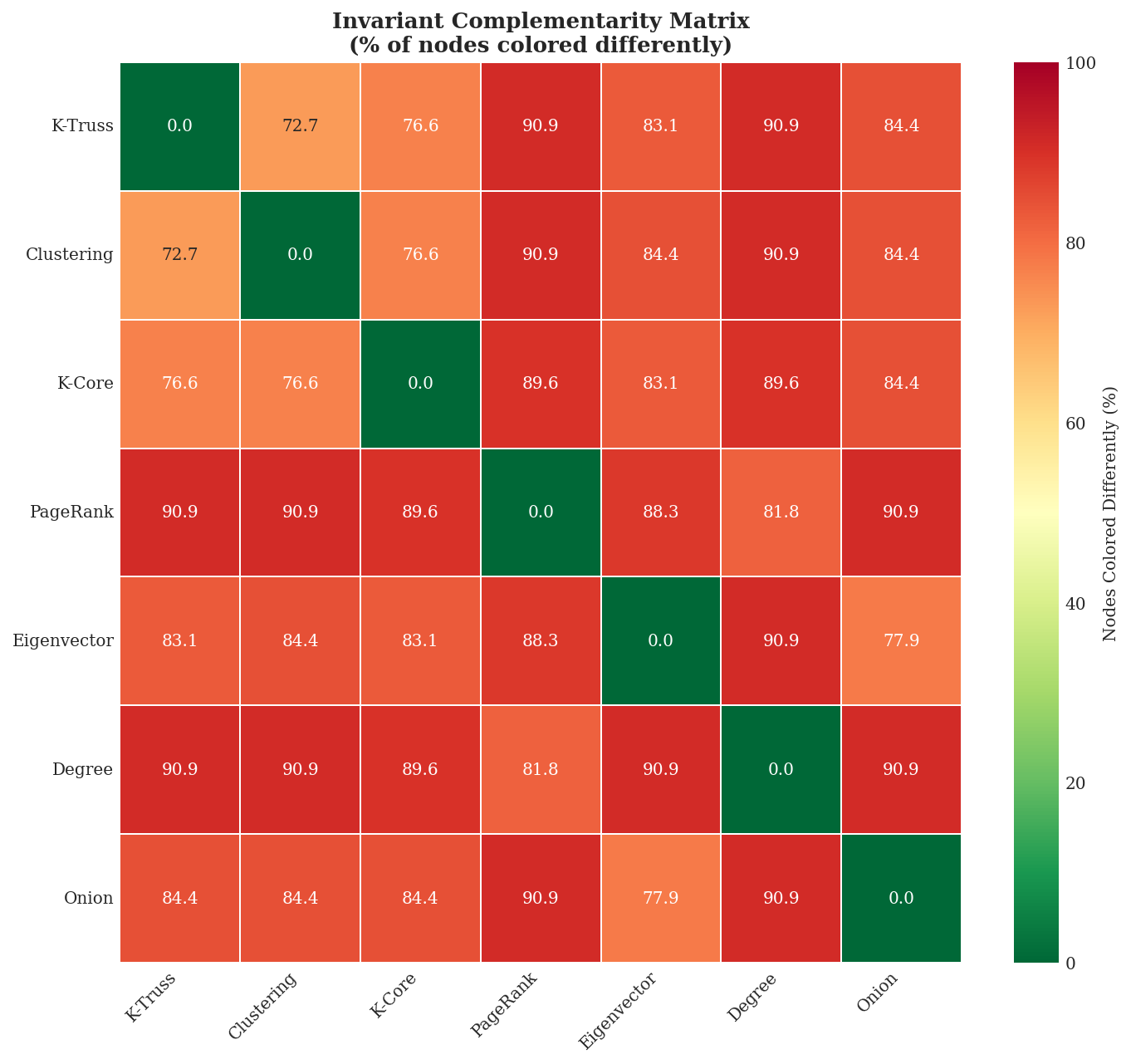}
    \caption{Pairwise complementarity between invariants measured as percentage of nodes receiving different final colors. High values (red) indicate orthogonal structural information, validating that different invariants capture distinct graph properties.}
    \label{fig:complementarity}
\end{figure*}

\subsubsection{Convergence Dynamics}

Figure~\ref{fig:color_evolution} reveals distinct convergence patterns across invariants. Standard 1-WL converges rapidly at iteration 3, establishing a baseline of 52 colors. ISP-WL variants exhibit differential convergence speeds: K-Truss achieves early stabilization at iteration 3 with 66 colors, while clustering coefficient, K-Core, PageRank, and Eigenvector require more iterations to converge.

K-Truss's rapid convergence stems from its coarse stratification—the graph contains limited truss structures, enabling quick layer-by-layer processing. In contrast, finer-grained invariants like Clustering and PageRank create richer initial stratifications that propagate structural distinctions over additional iterations. 

\begin{figure*}[]
    \centering
    \includegraphics[width=\textwidth]{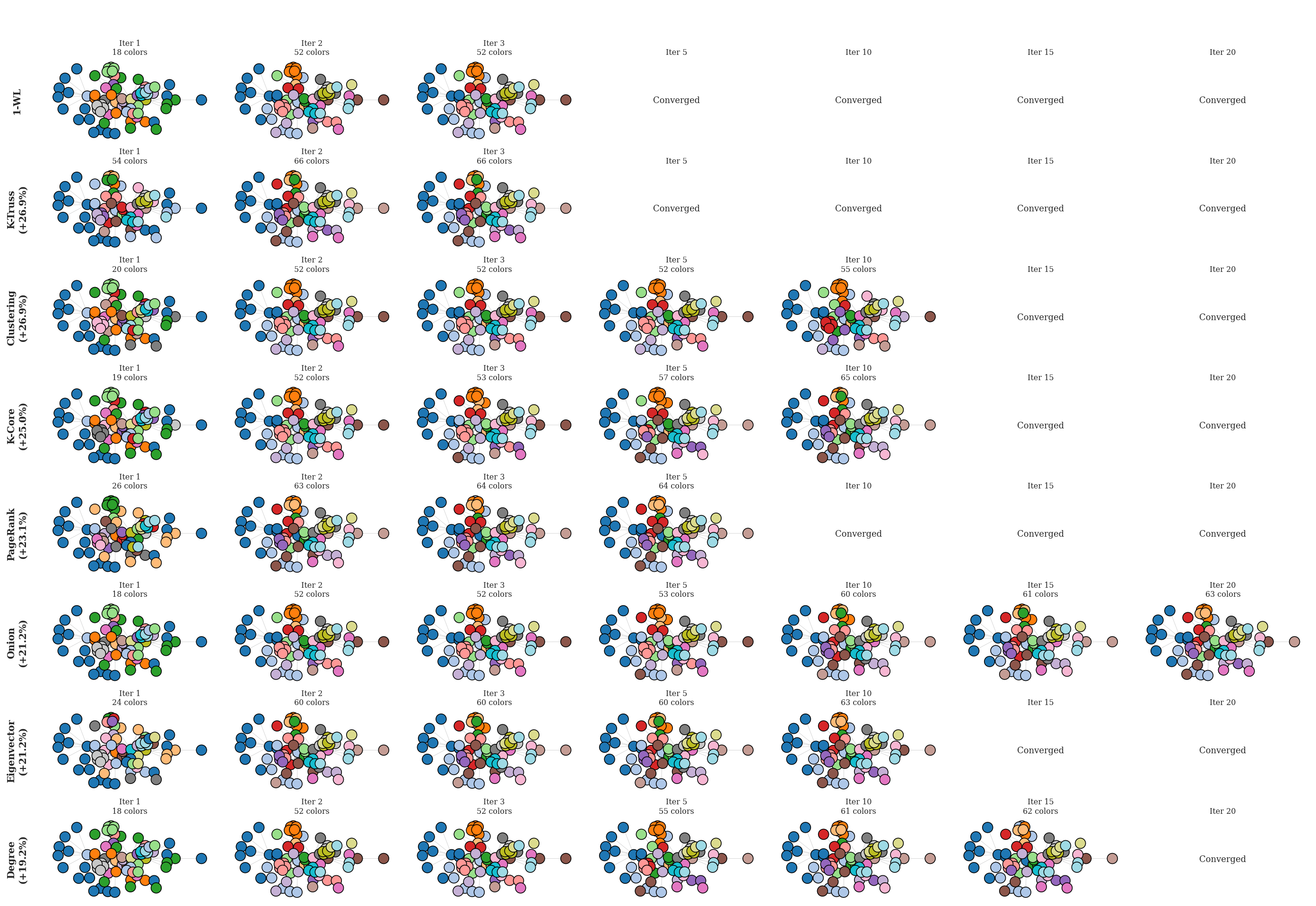}
    \caption{Iterative color refinement across variants. K-Truss converges fastest, while other finer-grained invariants, such as degree and onion, require more iterations.}
    \label{fig:color_evolution}
\end{figure*}

\clearpage

\section{Limitations and Future Work} \label{sec:limitations}

While our ISP framework offers a principled approach to enhancing GNN expressivity through invariant-stratified propagation, our work has several limitations that present opportunities for future research.

\subsection{Computational Complexity for Dense Graphs}

Our theoretical complexity analysis (Theorem \ref{thm:complexity}) establishes that ISP-WL achieves $\mathcal{O}(K(|V| + |E|))$ complexity on sparse graphs with bounded degeneracy. However, this relies on the assumption that the number of triangles $T = \mathcal{O}(d|E|)$ where $d$ is constant. While this holds for many real-world networks, including social networks, molecular graphs, and citation networks, it may not hold for dense graphs or graphs with high clustering coefficients. Triangle enumeration can become prohibitive for nodes with very high degrees, as shown in Figure \ref{fig:triangles}(c). Future work could explore approximate triangle counting methods or sampling-based approaches to maintain efficiency on denser graph structures while preserving the essential structural information captured by ISP.

\subsection{Triangle-Based Aggregation}

Our current implementation primarily focuses on triangle-based higher-order aggregation due to its computational efficiency and abundance in real-world graphs. While Section 9.2 presents a theoretical extension to arbitrary $k$-motifs and Table 12 provides empirical evaluation across multiple motif types, our experiments demonstrate that triangles consistently outperform these alternatives on the tested benchmarks. However, this superiority may not generalize to all graph types. Specifically, graphs with limited triangle structure (e.g., bipartite networks, tree-like structures, or sparse biological networks) may benefit more from alternative motif patterns. Future work should investigate adaptive motif selection mechanisms that can automatically identify and leverage the most informative higher-order patterns for specific graph families, extending our preliminary comparative analysis to a broader range of graph structures and application domains.

\subsection{Hierarchical Gap Encoding Design}

The three-component gap encoding $\delta(v, u, w) = (\delta_1, \delta_2, \delta_3)$ presented in Definition 3.1 is designed based on intuitions about structural heterogeneity. While our ablation studies (Figure 10) demonstrate that all three components contribute complementary information, the choice of max/min/inter-neighbor gaps is not formally proven to be minimal or optimal. Alternative gap functions or learned gap encodings might capture structural heterogeneity more effectively for specific domains. Extending our framework to support learnable gap functions that can adapt to task-specific patterns would enhance flexibility and potentially improve performance.

\subsection{Expressivity Characterization}

Theorem 4.3 establishes that ISP-WL's expressivity is invariant-dependent, but this characterization remains somewhat incomplete. While we prove that $\text{1-WL} \prec \text{ISP-WL}(\phi) \prec \text{3-WL}$ when $\phi \preceq \text{1-WL}$, the exact position of ISP-WL in the Weisfeiler-Leman hierarchy depends on the chosen invariant. A more precise characterisation of which graph pairs ISP-WL can and cannot distinguish relative to higher-order WL methods would strengthen the theoretical foundations. Future work would identify specific graph families where ISP-WL matches or exceeds $k$-WL for $k > 1$, and characterize the structural patterns that remain beyond ISP-WL's distinguishing power.

\subsection{Invariant Selection}

While our framework supports a variety of graph invariants and shows that different invariants excel in different tasks (as illustrated in Table 11), the selection of invariants currently requires manual intervention and relies on domain knowledge. In our future work, we aim to develop a more systematic, data-driven method for selecting the most appropriate invariants. 

\subsection{Scalability to Massive Graphs}

Although our scalability experiments demonstrate linear scaling up to 1M nodes on synthetic BA graphs (Figure 13), real-world graph learning increasingly involves graphs with tens or hundreds of millions of nodes. At such scales, even $\mathcal{O}(d|E|)$ triangle enumeration may become challenging despite theoretical efficiency. Distributed computing approaches, mini-batch training strategies that approximate global invariant rankings, or sampling-based ISP mechanisms could extend our framework to massive graphs. Additionally, investigating the trade-off between approximate triangle counting and expressivity would enable practitioners to balance computational constraints with model performance.